\theoremstyle{plain}
\newtheorem{theorem}{Theorem}[section]
\newtheorem{proposition}[theorem]{Proposition}
\newtheorem{lemma}[theorem]{Lemma}
\newtheorem{corollary}[theorem]{Corollary}
\theoremstyle{definition}
\theoremstyle{remark}
\DeclareMathOperator{\EX}{\mathbb{E}} 
\DeclareMathOperator{\VAR}{\mathbb{V}} 
\DeclareMathOperator{\MVAR}{\overline{\VAR}} 
\DeclareMathOperator{\STD}{\textit{STD}} 
\newcommand{\indep}{\perp \!\!\! \perp}
\newcommand{\causalf}[0]{f}
\newcommand{\causalg}[0]{g}
\newcommand{\anticausalh}[0]{h}
\newcommand{\anticausalk}[0]{k}
\newcommand{\causalfx}[0]{\causalf(x)}
\newcommand{\causalgx}[0]{\causalg(x)}
\newcommand{\causalfX}[0]{\causalf(X)}
\newcommand{\causalgX}[0]{\causalg(X)}
\newcommand{\causalgprimex}[0]{\causalg'(x)}
\newcommand{\causalfprimex}[0]{\causalf'(x)}
\newcommand{\anticausalhy}[0]{\anticausalh(y)}
\newcommand{\anticausalky}[0]{\anticausalk(y)}
\newcommand{\anticausalhY}[0]{\anticausalh(Y)}
\newcommand{\anticausalkY}[0]{\anticausalk(Y)}
\newcommand{\causalfcdot}[0]{\causalf(\cdot)}
\newcommand{\causalgcdot}[0]{\causalg(\cdot)}
\newcommand{\anticausalhcdot}[0]{\anticausalh(\cdot)}
\newcommand{\anticausalkcdot}[0]{\anticausalk(\cdot)}
\newcommand{\causalnoise}[0]{z}
\newcommand{\causalNoise}[0]{Z}
\newcommand{\anticausalnoise}[0]{m}
\newcommand{\anticausalNoise}[0]{M}
\newcommand{\causalResidual}[0]{\widehat{\causalNoise}}
\newcommand{\model}[2]{P_{#1,#2}}
\newcommand{\modelLikelihoodScore}{L}
\newcommand{\D}{D}
\newcommand{\flows}[0]{\mathbf{T}}
\newcommand{\subflow}[1]{T_{#1}}
\newcommand{\careflEstimator}[0]{\flows}
\newcommand{\suitabilityValue}[0]{S}
\title{Cause-Effect Inference in Location-Scale Noise Models: Maximum Likelihood vs. Independence Testing}
\author{%
  Xiangyu Sun \\
  Simon Fraser University \\
  \texttt{xiangyu\_sun@sfu.ca} \\
  \And
  Oliver Schulte \\
  Simon Fraser University \\
  \texttt{oschulte@cs.sfu.ca} \\
}
\begin{document}

\maketitle

\begin{abstract}
A fundamental problem of causal discovery is cause-effect inference, learning the correct causal direction between two random variables. Significant progress has been made through modelling the effect as a function of its cause and a noise term, which allows us to leverage assumptions about the generating function class. The recently introduced heteroscedastic location-scale noise functional models (LSNMs) combine expressive power with identifiability guarantees. LSNM model selection based on maximizing likelihood achieves state-of-the-art accuracy, when the noise distributions are correctly specified. However, through an extensive empirical evaluation, we demonstrate that the accuracy deteriorates sharply when the form of the noise distribution is misspecified by the user. Our analysis shows that the failure occurs mainly when the conditional variance in the anti-causal direction is smaller than that in the causal direction. As an alternative, we find that causal model selection through residual independence testing is much more robust to noise misspecification and misleading conditional variance.
\end{abstract}

\section{Introduction}
Distinguishing cause and effect is a fundamental problem in many disciplines, such as biology, healthcare and finance~\citep{zhang2006extensions,huang2021diagnosis,mansouri2022aristotle}. Randomized controlled trials (RCTs) are the gold standard for finding causal relationships. However, it may be unethical, expensive or even impossible to perform RCTs in real-world domains~\citep{peters2017elements,pearl2018book}. 
Causal discovery algorithms aim to find causal relationships given observational data alone. Traditional causal discovery algorithms can only identify causal relationships up to Markov equivalence classes (MECs)~\citep{spirtes1991algorithm,kalisch2007estimating,colombo2012learning}. To break the symmetry in a MEC, additional assumptions are needed~\citep{peters2017elements,scholkopf2022causality}, such as the type of functional dependence of effect on cause. 
Structural causal models (SCMs) specify a functional class for the causal relations in the data~\citep{pearl2009causality,peters2017elements}. In this work, we focus on one particular type of SCM called location-scale noise models (LSNMs)~\citep{tagasovska2020distinguishing,khemakhem2021causal,xu2022inferring,strobl2023identifying,immer2023identifiability}:
\begin{align}
    Y := \causalf(X) + \causalg(X) \cdot \causalNoise_Y
\label{eq:LSNM}
\end{align}
where $X$ is the cause, $Y$ is the effect, written $X \rightarrow Y$, and $\causalNoise_Y$ is a latent noise variable independent of $X$ (i.e., $X \indep \causalNoise_Y$). The functions $\causalf$ and $\causalg$ are twice-differentiable on the domain of $X$, and $\causalg$ is strictly positive on the domain of $X$. 
LSNMs generalize the widely studied additive noise models (ANMs), where $g(x)=1$ for all $x$. ANMs assume a constant conditional variance $\VAR[Y|X]$ for different values of $X$, which can limit their usefulness when dealing with heteroscedastic noise in real-world data. LSNMs allow for heteroscedasticity, meaning that $\VAR[Y|X]$ can vary depending on the value of $X$. In our experiments on real-world data, methods assuming an LSNM outperform those assuming ANM.

We follow previous work on cause-effect inference and focus on the bivariate setting~\citep{tagasovska2020distinguishing,khemakhem2021causal,xu2022inferring,immer2023identifiability}. There are known methods for leveraging bivariate cause-effect inference in the multivariate setting (see~\Cref{section_extended_to_multivariate_setting}).
Two major methods for selecting a causal SCM direction in cause-effect inference are maximum likelihood (ML) and independence testing (IT) of residuals vs. the (putative) cause~\citep{mooij2016distinguishing}.  Both have been recently shown good accuracy with    LSNMs~\citep{khemakhem2021causal,immer2023identifiability}, especially when the $f$ and $g$ functions are estimated by neural networks.  \citet{immer2023identifiability} note, however, that ML can be less robust than IT, in the sense that accuracy deteriorates when the noise distribution is not Gaussian. 

In this paper we investigate the robustness of ML vs. IT for LSNMs. Our analysis shows that ML cause-effect inference performs poorly when two factors coincide: (1) {\em Noise Misspecification}: the ML method assumes a different form of the noise distribution from the true one.
(2) {\em Misleading Conditional Variances (CVs)}: $\VAR[Y|X] > \VAR[X|Y]$ in the data generated by causal direction $X \rightarrow Y$.
For example, in the experiment on synthetic datasets shown in~\Cref{tab:teaser_mcv_vs_accuracy} below, (i) changing the true noise distribution from Gaussian to uniform and (ii) manipulating $\VAR[Y|X]$, 
while keeping other settings equal, can decrease the rate of identifying the true causal direction from 100\% to 10\%. In contrast, IT methods maintain a perfect 100\% accuracy. The difference occurs with and without data standardization, and with increasing sample sizes. 

Both conditions (1) and (2) often hold in practice. 
For real-world domains, assumptions about the noise distribution can be hard to determine or verify. 
It is also common to have misleading CVs in real-world datasets. For example, in the T\"ubingen Cause-Effect Pairs benchmark~\citep{mooij2016distinguishing}, about 40\% of the real-world datasets exhibit a misleading CV (see~\Cref{tab:dataset_summary} in the appendix). 

We make the following contributions to understanding location-scale noise models (LSNMs):
\begin{itemize}
    \item Describe experiments and theoretical analysis to show that ML methods succeed when the form of the noise distribution is known. In particular, ML methods are then robust to incorrectly specifying a noise variance parameter.
    \item Demonstrate empirically that ML methods often fail when the form of the noise distribution is misspecified and CV is misleading, and analyze why.
    \item Introduce a new IT method based on an affine flow LSNM model. 
    \item Demonstrate, both theoretically and empirically, that our IT method is robust to noise misspecification and misleading CVs.
\end{itemize}

The paper is structured as follows. We discuss related works and preliminaries in~\Cref{section:related_works} and~\Cref{section:preliminary}, respectively. \Cref{section:analysis_misspecification} examines when and why ML methods fail. \Cref{section:independence-fix} demonstrates the robustness of the IT method. Experiments with 580 synthetic and 99 real-world datasets are given in~\Cref{section:experiments}. The code and scripts to reproduce all the results are given online~\footnote{https://github.com/xiangyu-sun-789/CAREFL-H}.

\section{Related works}\label{section:related_works}

{\em Causal Discovery.}
Causal discovery methods have been widely studied in machine learning~\citep{spirtes1991algorithm,kalisch2007estimating,colombo2012learning}. Assuming causal sufficiency and faithfulness, they find causal structure up to a MEC. To identify causal relations within a MEC, additional assumptions are needed~\citep{peters2017elements,scholkopf2022causality}. SCMs exploit constraints that result from assumptions about the functional dependency of effects on causes. Functional dependencies are often studied in the fundamental case of two variables $X$ and $Y$, the simplest MEC. \citet{mooij2016distinguishing} provide an extensive overview and evaluation of different cause-effect methods in the ANM context. We follow this line of work for LSNMs with possibly misspecified models. 

{\em Structural Causal Models.} 
Assuming a linear non-Gaussian acyclic model (LiNGAM)~\citep{shimizu2006linear,shimizu2011directlingam}, the causal direction was proved to be identifiable. The key idea in DirectLiNGAM~\citep{shimizu2011directlingam} is that in the true causal direction $X \rightarrow Y$, the model residuals are independent of $X$. We refer to methods derived from the DirectLiNGAM approach as \emph{independence testing} (IT) methods.
The more general ANMs~\citep{hoyer2008nonlinear,peters2014causal} allow for nonlinear cause-effect relationships and are generally identifiable, except for some special cases.
There are other identifiable SCMs, such as causal additive models (CAM)~\citep{buhlmann2014cam}, post-nonlinear models~\citep{zhang2012identifiability} and Poisson generalized linear models~\citep{park2019high}.

{\em LSNM Identifiability.}
 There are several identifiability results for the causal direction in LSNMs. \citet{xu2022inferring} prove identifiability for LSNMs with linear causal dependencies.  \citet{khemakhem2021causal} show that nonlinear LSNMs are identifiable with Gaussian noise. \citet{strobl2023identifying,immer2023identifiability} prove LSNMs are identifiable except in some pathological cases (see~\Cref{section_idntifiability}).

{\em Cause-Effect Inference in LSNMs.} The generic model selection blueprint is to fit two LSNM models for each direction $X \rightarrow Y$ and $X \leftarrow Y$ and select the direction with a higher model score.   
HECI~\citep{xu2022inferring} bins putative cause values
and selects a direction based on the Bayesian information criterion. BQCD~\citep{tagasovska2020distinguishing} uses nonparamatric quantile regression to approximate minimum description length to select the direction. GRCI~\citep{strobl2023identifying} 
finds the direction based on mutual information between the putative cause and the model residuals.
LOCI~\citep{immer2023identifiability} models the conditional distribution of effect given cause with Gaussian natural parameters. Then, it chooses the direction based on either likelihood 
(LOCI-M) or independence 
(LOCI-H). 
CAREFL-M~\citep{khemakhem2021causal} fits affine flow models and scores them by likelihood. CAREFL-M is more general than LOCI since LOCI uses a fixed Gaussian 
prior, whereas CAREFL-M can utilize different prior distributions.
DECI~\citep{geffner2022deep} generalizes CAREFL-M to multivariate cases for ANMs. Unlike our work, none of these papers provide a theoretical analysis of noise misspecification and misleading CVs in LSNMs.

{\em Noise Misspecification.}
\citet{schultheiss2023pitfalls} present a theoretical analysis of model misspecification pitfalls when the data are generated by an ANM. Their results focus on model selection with Gaussian likelihood scores, which use Gaussian noise distributions. They suggest that IT-based methods may be more robust to model misspecification in ANMs. 
The authors also briefly discuss the complication of fitting data generated by a ground-truth ANM model with an LSNM model. Since the backward model of an ANM often takes the form of an LSNM, fitting data generated by a ground-truth ANM with an LSNM model may result in an incorrect causal direction. While their analysis examines ANM data, the ground-truth model in our work is an LSNM.
The $\causalg()$ scale term in~\Cref{eq:LSNM}, which is absent in ANMs, plays a critical role in LSNMs. 

{\em Conditional Variance and Causality.} To the best of our knowledge, the problem of misleading CVs has not been identified nor analyzed in previous work for LSNMs. Prior studies that have focused on CVs were based on ANMs, and stated assumptions about how CVs relate to causal structure, to be leveraged in cause-effect inference~\citep{blobaum2018cause,park2020identifiability}. We do not make assumptions about CVs, but analyze them to understand why noise misspecification impairs cause-effect inference in LSNMs. 

\begin{figure*}[t]
\centering
\begin{tikzpicture}[level distance=1.0cm,
  level 1/.style={sibling distance=8cm},
  level 2/.style={sibling distance=4cm},
  level 3/.style={sibling distance=2cm},
  ]
  
  \node {Noise Misspecification?}
    child {node {Misleading CVs?} 
      child {node {LSNM or ANM?}
        child {node {Low}
        edge from parent node[left]{\scriptsize LSNM}
        }
        child {node {Low}
        edge from parent node[right]{\scriptsize ANM}
        }
        edge from parent node[left]{\scriptsize Yes}
      }
      child {node {LSNM or ANM?}
        child {node {High}
        edge from parent node[left]{\scriptsize LSNM}
        }
        child {node {High}
        edge from parent node[right]{\scriptsize ANM}
        }
        edge from parent node[right]{\scriptsize No}
      }
      edge from parent node[left]{\scriptsize Yes}
    }
    child {node {Misleading CVs?}
      child {node {LSNM or ANM?}
        child {node {High}
        edge from parent node[left]{\scriptsize LSNM}
        }
        child {node {Low}
        edge from parent node[right]{\scriptsize ANM}
        }
        edge from parent node[left]{\scriptsize Yes}
      }
      child {node {LSNM or ANM?}
        child {node {High}
        edge from parent node[left]{\scriptsize LSNM}
        }
        child {node {High}
        edge from parent node[right]{\scriptsize ANM}
        }
        edge from parent node[right]{\scriptsize No}
      }
      edge from parent node[right]{\scriptsize No}
    };
\end{tikzpicture}
\caption{Summary of the accuracy results for ML methods for LSNMs and ANMs with respect to noise misspecification and misleading CVs.}
\label{fig:summary_tree}
\end{figure*}

\section{Preliminaries}\label{section:preliminary}
In this section, we define the cause-effect inference problem and review the LSNM data likelihood.

\subsection{Problem definition: cause-effect inference}
Cause-effect inference takes as input a dataset $\D$ over two random variables $X,Y$ with $N$ observational data pairs $(X,Y)=\{(x_1,y_1), (x_2,y_2), \dots, (x_N,y_N)\}$ generated by a ground-truth LSNM in~\Cref{eq:LSNM}. The binary output decision indicates whether $X$ causes $Y$ ($X \rightarrow Y$) or $Y$ causes $X$ ($X \leftarrow Y$). We assume no latent confounders, selection bias or feedback cycles (\citep{mooij2016distinguishing,tagasovska2020distinguishing,khemakhem2021causal,immer2023identifiability,xu2022inferring}). 

\subsection{Definition of maximum likelihood for LSNMs}

An LSNM model for two variables $(X,Y)$ is a pair $(\rightarrow,P_{\causalNoise})$. The $\rightarrow$ represents the direction $X \rightarrow Y$ with parameter space $\causalf \equiv \causalf_{\theta}, \causalg \equiv \causalg_{\psi}, P_X \equiv P_{X,\zeta}$. The $\leftarrow$ represents the direction $X \leftarrow Y$ with parameter space $\anticausalh \equiv \anticausalh_{\theta'}, \anticausalk \equiv \anticausalk_{\psi'}, P_Y \equiv P_{Y,\zeta'}$. For notational simplicity, we treat the model functions directly as model parameters and omit reference to their parameterizations. For example, we write $\causalf \equiv \causalf_{\theta}$ for a function $\causalf$ that is implemented by a neural network with weights $\theta$. We refer to $P_{\causalNoise}$ as the model prior noise distribution. 

A parameterized LSNM model defines a data distribution as follows~\citep{immer2023identifiability} (\Cref{section_model_distribution_derivation}): 
\begin{equation}
\begin{aligned} \label{eq:model_distribution}
    \model{\rightarrow}{P_{\causalNoise}}(X,Y;\causalf,\causalg,P_X{})
    = & P_{X}(X) \cdot P_{\causalNoise_Y}(\frac{Y - \causalfX}{\causalgX}) \cdot \frac{1}{\causalgX} \\
    \model{\leftarrow}{P_{\causalNoise}}(X,Y;\anticausalh,\anticausalk,P_Y{}) 
    = & P_{Y}(Y) \cdot P_{\causalNoise_X}(\frac{X - \anticausalhY}{\anticausalkY}) \cdot \frac{1}{\anticausalkY}
\end{aligned}
\end{equation}



For conciseness, we use shorter notation $\model{\rightarrow}{P_{\causalNoise}}(X,Y)$ and $\model{\leftarrow}{P_{\causalNoise}}(X,Y)$ in later sections. 
The ML method computes the ML parameter estimates and uses them to score the $\rightarrow$  (or $\leftarrow$) model: 
\begin{align}
    \widehat{\causalf},\widehat{\causalg},\widehat{P}_X := \arg\max_{\causalf,\causalg,P_{X}} \prod_{i=1}^{N} \model{\rightarrow}{P_{\causalNoise}}(x_{i},y_{i};\causalf,\causalg,P_X{}) \label{eq:ml-fit} \\
    \modelLikelihoodScore_{\rightarrow,P_{\causalNoise}}(D) := \prod_{i=1}^{N} \model{\rightarrow}{P_{\causalNoise}}(x_{i},y_{i};\widehat{\causalf},\widehat{\causalg},\widehat{P_X}) \label{eq:ml-select}
\end{align}



\section{Non-identifiability of LSNMs with noise misspecification}\label{section:analysis_misspecification}

In this section, we show that ML methods fail under noise misspecification and misleading CVs, and analyse why. \Cref{fig:summary_tree} summarizes the diverse settings considered below.

\begin{table*}[t]
\caption{\textbf{Accuracy} over 10 datasets generated by SCM \textbf{LSNM-sine-tanh} (definition in~\Cref{section:synthetic_SCMs}) with $N=10,000$ samples. The task is a binary decision whether $X$ causes $Y$ or $Y$ causes $X$. $X$ denotes the ground-truth cause and $Y$ denotes the ground-truth effect. We rewrite~\Cref{eq:LSNM} as $Y = \causalf(X) + \alpha \cdot \causalg(X) \cdot \causalNoise_Y$, where $\alpha$ is a scale factor to alter the CVs. CVs are computed by binning the putative cause. 
We used $\mathit{Gaussian}(0,1)$ as model prior for both CAREFL and LOCI. 
The suffix \textit{-M} denotes a ML method. The suffix \textit{-H} denotes the corresponding IT method (see~\Cref{section:independence-fix}). All the datasets are standardized to have mean 0 and variance 1.}
\label{tab:teaser_mcv_vs_accuracy}
\centering
\begin{tabular}{c|ccccc|ccccc}
\toprule
\multicolumn{1}{c|}{True Noise} & \multicolumn{5}{c|}{$\mathit{Gaussian}(0,1)$} & \multicolumn{5}{c}{$\mathit{Uniform}(-1,1)$} \\
\midrule
$\alpha$ & 0.1 & 0.5 & 1 & 5 & 10 & 0.1 & 0.5 & 1 & 5 & 10 \\ 
\midrule
$\MVAR[Y|X]$ & 0.166 & 0.615 & 0.834 & 0.990 & 0.997 & 0.044 & 0.404 & 0.673 & 0.975 & 0.994  \\ 
\midrule
$\MVAR[X|Y]$ & 0.455 & 0.709 & 0.793 & 0.821 & 0.817 & 0.047 & 0.375 & 0.566 & 0.681 & 0.677  \\ 
\midrule
Percentage of &  &  &  &  &  &  &  &  &  \\
Datasets With & 30\% & 50\% & 70\% & 100\% & 100\% & 30\% & 90\% & 100\% & 100\% & 100\% \\
Misleading CVs &  &  &  &  &  &  &  &  & \\
\midrule
CAREFL-M & 1.0 & 1.0 & 1.0 & 1.0 & 1.0 & 0.7 & 0.6 & 0.7 & \underline{0.1} & \underline{0.1} \\ 
\midrule
LOCI-M & 1.0 & 1.0 & 1.0 & 1.0 & 1.0 & 0.7 & 0.6 & 0.7 & \underline{0.1} & \underline{0.1} \\
\midrule
CAREFL-H & 1.0 & 1.0 & 1.0 & 1.0 & 1.0 & 0.9 & 1.0 & 1.0 & 1.0 & 1.0 \\ 
\midrule
LOCI-H & 1.0 & 1.0 & 1.0 & 1.0 & 1.0 & 0.7 & 1.0 & 1.0 & 1.0  & 1.0 \\
\bottomrule
\end{tabular}
\end{table*}

Existing identifiability results for LSNM ML methods (\Cref{section_idntifiability}) require knowing the ground-truth noise distribution.
%
We conducted a simple experiment to show that with noise misspecification, ML model selection can fail badly even on large sample sizes. \Cref{tab:teaser_mcv_vs_accuracy} shows results for CAREFL-M~\citep{khemakhem2021causal} and LOCI-M~\citep{immer2023identifiability} with model prior distribution $\mathit{Gaussian}(0,1)$. For the left half of the table, the data was generated with correctly specified $\mathit{Gaussian}(0,1)$ noise.
In this case, both CAREFL-M and LOCI-M work well, and increasing CV in the causal direction does not affect their accuracy. For the right half of the table, the data was generated with misspecified $\mathit{uniform}(-1,1)$ noise.
With noise misspecification, the accuracy of both CAREFL-M and LOCI-M decreases to $70\%$. With both noise misspecification and misleading CVs, their accuracy becomes even lower. For example, in the last column when $\MVAR[Y|X] = 0.994$ and $\MVAR[X|Y] = 0.677$, 
they give an accuracy of just $10\%$. 

The following results help explain why ML methods fail under noise misspecification and misleading CVs.

\begin{lemma}\label{theorem_standardized_prior_LSNM}
For an LSNM model $Y := \causalf(X) + \causalg(X) \cdot \causalNoise_Y$ where $\causalNoise_Y$ is the noise with mean $\mu$ and variance $\sigma^2 \neq 0$, there exists a likelihood-equivalent LSNM Model
$Y := \causalf'(X) + \causalg'(X) \cdot \causalNoise'_Y$ such that $\causalNoise'_{Y} = \frac{\causalNoise_{Y} - \mu}{\sigma}$, $\model{\rightarrow}{P_{\causalNoise}}(Y|X;\causalf,\causalg) = \model{\rightarrow}{P_{\causalNoise'}}(Y|X;\causalf',\causalg')$, and $\VAR_{\causalNoise_Y} \left[ Y|X \right] = \VAR_{\causalNoise'_Y} \left[ Y|X \right]$.  
\end{lemma}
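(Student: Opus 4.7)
The plan is to prove the lemma by an explicit reparameterization: standardize the noise and absorb the location and scale constants into $\causalf$ and $\causalg$. The main work is verifying that the change of variables leaves the conditional density and conditional variance unchanged, which is a straightforward calculation with no real obstacle.

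First, I would define the standardized noise $\causalNoise'_Y := (\causalNoise_Y - \mu)/\sigma$, noting that it has mean $0$ and variance $1$, and that its density is related to that of $\causalNoise_Y$ by $p_{\causalNoise'_Y}(z') = \sigma \cdot p_{\causalNoise_Y}(\sigma z' + \mu)$. Then I would rewrite the original structural equation as
\begin{equation*}
Y = \causalf(X) + \causalg(X)\cdot(\sigma \causalNoise'_Y + \mu) = \bigl(\causalf(X) + \mu\, \causalg(X)\bigr) + \sigma\, \causalg(X) \cdot \causalNoise'_Y,
\end{equation*}
and define the new location and scale functions $\causalf'(X) := \causalf(X) + \mu\, \causalg(X)$ and $\causalg'(X) := \sigma\, \causalg(X)$. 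Since $\sigma \neq 0$ and $\causalg$ is strictly positive, $\causalg'$ remains strictly positive (after fixing a sign convention on $\sigma$, which we can take as $\sigma > 0$ without loss of generality), and $\causalf', \causalg'$ inherit twice-differentiability from $\causalf, \causalg$, so the reparameterized triple is a valid LSNM.

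Next I would verify conditional likelihood equivalence by direct substitution into the formula from \Cref{eq:model_distribution}. Writing $u := (y - \causalf'(x))/\causalg'(x) = (y - \causalf(x) - \mu\, \causalg(x))/(\sigma\, \causalg(x))$, a short computation gives $\sigma u + \mu = (y - \causalf(x))/\causalg(x)$, so
\begin{equation*}
\frac{1}{\causalg'(x)} p_{\causalNoise'_Y}(u) = \frac{1}{\sigma\, \causalg(x)} \cdot \sigma\, p_{\causalNoise_Y}(\sigma u + \mu) = \frac{1}{\causalg(x)} p_{\causalNoise_Y}\!\left(\frac{y - \causalf(x)}{\causalg(x)}\right),
\end{equation*}
which is exactly the conditional density under the original parameterization. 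Thus $\model{\rightarrow}{P_{\causalNoise}}(Y|X;\causalf,\causalg) = \model{\rightarrow}{P_{\causalNoise'}}(Y|X;\causalf',\causalg')$ pointwise.

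Finally, for the conditional variance claim, I would compute both sides directly using independence of the noise from $X$: under the original model $\VAR_{\causalNoise_Y}[Y|X] = \causalg(X)^2 \sigma^2$, while under the reparameterized model $\VAR_{\causalNoise'_Y}[Y|X] = \causalg'(X)^2 \cdot 1 = \sigma^2 \causalg(X)^2$, and these agree. This completes the proof. The only subtlety to flag is the sign of $\sigma$: since only $\sigma^2$ appears in the density and variance, one can assume $\sigma > 0$, keeping $\causalg'$ positive and preserving the LSNM conditions.
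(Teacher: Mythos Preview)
Your proof is correct and follows essentially the same route as the paper: define $\causalf'(X) = \causalf(X) + \mu\,\causalg(X)$ and $\causalg'(X) = \sigma\,\causalg(X)$, then verify the conditional density equality via the change-of-variables relation between $p_{\causalNoise_Y}$ and $p_{\causalNoise'_Y}$, and finally check the conditional variances directly. Your extra step of first rewriting the structural equation to motivate the choice of $\causalf',\causalg'$ and your remark on the sign convention for $\sigma$ are nice touches, but the argument is the same.
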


The proof is in~\Cref{proof_standardized_prior_LSNM}. 
\Cref{theorem_standardized_prior_LSNM} shows that for any (non-deterministic) LSNM model there is an equivalent standardized LSNM model with $\VAR \left[ \causalNoise_{Y} \right] = 1$.

\begin{theorem}\label{theorem_negative_related}
$\frac{\partial \log \model{\rightarrow}{P_{\causalNoise}}(Y|X)}{\partial \VAR \left[ Y|X \right]} < 0$ for LSNM models. 
\end{theorem}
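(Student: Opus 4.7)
The plan is to reduce to the standardized-noise case using Lemma~\ref{theorem_standardized_prior_LSNM}, then interpret the statement as a claim about the \emph{expected} conditional log-density and compute the derivative directly.

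First, I would invoke Lemma~\ref{theorem_standardized_prior_LSNM} to assume without loss of generality that the noise variable $\causalNoise_Y$ is standardized, with $\EX[\causalNoise_Y]=0$ and $\VAR[\causalNoise_Y]=1$. Under this normalization the conditional variance is cleanly parameterized by the scale function alone: $\VAR[Y\mid X] = \causalgX^2$. Writing $v := \causalgX^2$, the model conditional log-density from \Cref{eq:model_distribution} becomes
\begin{equation*}
\log \model{\rightarrow}{P_{\causalNoise}}(Y\mid X) \;=\; \log P_{\causalNoise_Y}\!\left(\tfrac{Y-\causalfX}{\sqrt{v}}\right) \;-\; \tfrac{1}{2}\log v.
\end{equation*}

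Next, I would take the conditional expectation of both sides over $Y\mid X$ (under the model). The random variable $(Y-\causalfX)/\sqrt{v}$ is distributed as the standardized $\causalNoise_Y$, so
\begin{equation*}
\EX\!\left[\log \model{\rightarrow}{P_{\causalNoise}}(Y\mid X)\right] \;=\; \EX\!\left[\log P_{\causalNoise_Y}(\causalNoise_Y)\right] \;-\; \tfrac{1}{2}\log v \;=\; -H(\causalNoise_Y) \;-\; \tfrac{1}{2}\log v,
\end{equation*}
where $H(\causalNoise_Y)$ is the differential entropy of the standardized noise. Since the standardized distribution is fixed and does not depend on $v$, this entropy term is a constant with respect to $v$. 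Differentiating yields
\begin{equation*}
\frac{\partial}{\partial v}\,\EX\!\left[\log \model{\rightarrow}{P_{\causalNoise}}(Y\mid X)\right] \;=\; -\frac{1}{2v} \;<\; 0,
\end{equation*}
where strict positivity of $v$ follows from the assumption that $\causalg$ is strictly positive on the domain of $X$.

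The main obstacle I anticipate is interpretational rather than computational: the \emph{pointwise} derivative $\partial \log P(Y\mid X)/\partial v$ is generally not negative for every realization $Y$ (e.g., for Gaussian noise its sign is that of $v-(Y-\causalfX)^2$), so the inequality in the theorem must be read as a statement about the expected log-density under $Y\mid X$. The appeal to Lemma~\ref{theorem_standardized_prior_LSNM} is essential, because it shields the noise distribution's shape from the variance parameter: without standardization, one could rescale $\causalNoise_Y$ to absorb changes in $v$, and the entropy term in the expected log-density would itself depend on $v$, breaking the clean $-\tfrac{1}{2}\log v$ decomposition on which the argument hinges.
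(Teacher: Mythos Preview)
Your argument is essentially the paper's: invoke Lemma~\ref{theorem_standardized_prior_LSNM} to standardize so that $\VAR[Y\mid X]=\causalgX^2$, decompose $\log \model{\rightarrow}{P_{\causalNoise}}(Y\mid X)$ into $\log P_{\causalNoise_Y}(\cdot)-\tfrac12\log v$, argue the first summand is constant in $v$, and differentiate to obtain $-\tfrac{1}{2v}<0$. The one difference is in how you neutralize the first summand: the paper treats $\causalNoise_Y$ as the latent noise in the data-generating SCM and argues it is causally unaffected by $\causalg^2(X)$, so $\partial \log P_{\causalNoise_Y}(\causalNoise_Y)/\partial \causalg^2(X)=0$ pointwise in $\causalNoise_Y$; you instead take the conditional expectation over $Y\mid X$, turning that term into the fixed entropy $-H(\causalNoise_Y)$. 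Your route has the advantage of making the interpretation explicit (as you note, the derivative at a fixed $Y$ need not be negative), while the paper's route avoids the expectation by reading the derivative interventionally along the path where $\causalNoise_Y$ is held fixed and $Y$ moves with $\causalg(X)$. Both yield the same $-\tfrac{1}{2v}$ and are equally valid resolutions of the same ambiguity.
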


The proof is in~\Cref{proof_theorem_negative_related}. \Cref{theorem_negative_related} shows that the model likelihood $\model{\rightarrow}{P_{\causalNoise}}(Y|X)$ and the conditional variance $\VAR \left[ Y|X \right]$ are negatively related (similarly for $\model{\leftarrow}{P_{\causalNoise}}(X|Y)$ and $\VAR \left[ X|Y \right]$). 

\begin{corollary}\label{theorem_mantipulate_CVs}
Let $\rightarrow$ denote the causal model and $\leftarrow$ denote the anti-causal model. When $\VAR \left[ Y|X \right]$ increases and $\VAR \left[ X|Y \right]$ decreases in the data, the log-likelihood model difference $\log \model{\rightarrow}{P_{\causalNoise}}(X,Y)-\log \model{\leftarrow}{P_{\causalNoise}}(X,Y)$ decreases.

\end{corollary}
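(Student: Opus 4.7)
The plan is to reduce the Corollary to a direct application of \Cref{theorem_negative_related}. First, I would use the factorizations in~\Cref{eq:model_distribution} to split the two joint log-likelihoods into marginal and conditional pieces: $\log \model{\rightarrow}{P_{\causalNoise}}(X,Y) = \log P_X(X) + \log \model{\rightarrow}{P_{\causalNoise}}(Y|X)$ and analogously $\log \model{\leftarrow}{P_{\causalNoise}}(X,Y) = \log P_Y(Y) + \log \model{\leftarrow}{P_{\causalNoise}}(X|Y)$. Subtracting gives a decomposition of the log-likelihood difference into a marginal part $\log P_X(X) - \log P_Y(Y)$ and a conditional part $\log \model{\rightarrow}{P_{\causalNoise}}(Y|X) - \log \model{\leftarrow}{P_{\causalNoise}}(X|Y)$.

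Next I would argue that the marginal part is unaffected by the conditional-variance manipulation considered in the Corollary. In~\Cref{eq:model_distribution} the marginal density $P_X$ is parameterized separately from $(\causalf,\causalg)$ and carries no dependence on $\VAR[Y|X]$ or $\VAR[X|Y]$; the same holds for $P_Y$. Concretely, the kind of rescaling used in~\Cref{tab:teaser_mcv_vs_accuracy} (multiplying $\causalg(X)\cdot \causalNoise_Y$ by a scalar $\alpha$) varies $\VAR[Y|X]$ while leaving $P_X$ intact, so it is consistent to treat the marginal part as constant under the hypothesized variation of the CVs.

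The finishing step is then to apply \Cref{theorem_negative_related} in each direction. On the causal side, $\partial \log \model{\rightarrow}{P_{\causalNoise}}(Y|X)/\partial \VAR[Y|X] < 0$, so increasing $\VAR[Y|X]$ strictly decreases $\log \model{\rightarrow}{P_{\causalNoise}}(Y|X)$. By symmetry of the LSNM factorization, the same theorem applied to the anti-causal model yields $\partial \log \model{\leftarrow}{P_{\causalNoise}}(X|Y)/\partial \VAR[X|Y] < 0$, so decreasing $\VAR[X|Y]$ strictly increases $\log \model{\leftarrow}{P_{\causalNoise}}(X|Y)$. Both effects push the conditional part downward, and combined with the invariance of the marginal part they yield the stated decrease in $\log \model{\rightarrow}{P_{\causalNoise}}(X,Y) - \log \model{\leftarrow}{P_{\causalNoise}}(X,Y)$. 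The only subtle point worth spelling out is the invariance of the marginals; once that is carefully justified from the separate parameterizations in~\Cref{eq:model_distribution}, the rest is an immediate consequence of \Cref{theorem_negative_related}.
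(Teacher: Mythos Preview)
Your proposal is correct and follows essentially the same route as the paper: decompose each joint log-likelihood into a marginal plus a conditional term, then apply \Cref{theorem_negative_related} in both directions to conclude that increasing $\VAR[Y|X]$ lowers $\log \model{\rightarrow}{P_{\causalNoise}}(Y|X)$ while decreasing $\VAR[X|Y]$ raises $\log \model{\leftarrow}{P_{\causalNoise}}(X|Y)$, so the difference drops. Your explicit justification of why the marginal terms $\log P_X(X)$ and $\log P_Y(Y)$ are unaffected is a bit more careful than the paper, which simply leaves that point implicit.
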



The proof is in~\Cref{proof_theorem_mantipulate_CVs}. 
Under the identifiability assumption~\citep{khemakhem2021causal,immer2023identifiability} that ML is consistent under correct model specification, \Cref{theorem_standardized_prior_LSNM} also implies that ML remains consistent under misspecifying the variance of the noise distribution, as long as the form of the prior distribution matches that of the noise distribution.
\Cref{tab:teaser_mcv_vs_accuracy_same_form} in the appendix demonstrates this relationship empirically.

The overall relationship between likelihood and CV with respect to noise specification in LSNMs is as follows, illustrated in~\Cref{fig:teaser_mcv_vs_likelihood} with actual datasets:
\begin{enumerate}
    \item (\Cref{theorem_negative_related}) CV and likelihood are negatively related (\Cref{fig:experiments_teaser_mcv_vs_likelihood_gaussian,fig:experiments_teaser_mcv_vs_likelihood_uniform}), with either correctly or incorrectly specified noise distribution form.
    \item (Identifiability) When the form of the noise distribution is correctly specified,
    the causal model always has a higher likelihood
    (\Cref{fig:experiments_teaser_mcv_vs_likelihood_gaussian}).
    \item (\Cref{theorem_mantipulate_CVs}) When the form of the noise distribution is misspecified,
    the anti-causal model can have a higher likelihood under misleading CVs (\Cref{fig:experiments_teaser_mcv_vs_likelihood_uniform}).
\end{enumerate}

{\em ANM vs. LSNM.} The relationship (2) is in general different for ANMs, when the noise variance is misspecified (see~\Cref{fig:summary_tree,section_Results_with_Additive_Noise_Models}). 
\Cref{theorem_standardized_prior_ANM} shows that the standardization lemma~\Cref{theorem_standardized_prior_LSNM} does not apply to ANMs (confirmed by empirical results in~\Cref{tab:teaser_mcv_vs_accuracy_same_form_ANM} in the appendix). 

\begin{figure*}[t]
\centering
\begin{subfigure}{0.4\textwidth}
\centering
\includegraphics[width=1\textwidth, height=0.5\textwidth]{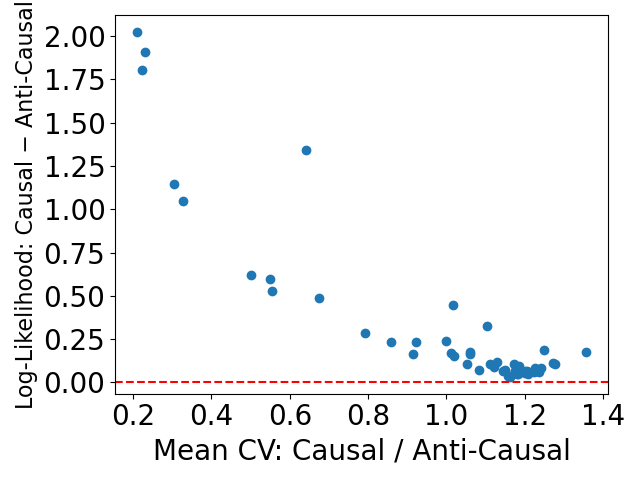}
\caption{Log-likelihood difference under correct noise specification: $\mathit{Gaussian}(0,1)$ noise}
\label{fig:experiments_teaser_mcv_vs_likelihood_gaussian}
\end{subfigure}
\qquad\qquad
\centering
\begin{subfigure}{0.4\textwidth}
\centering
\includegraphics[width=1\textwidth, height=0.5\textwidth]{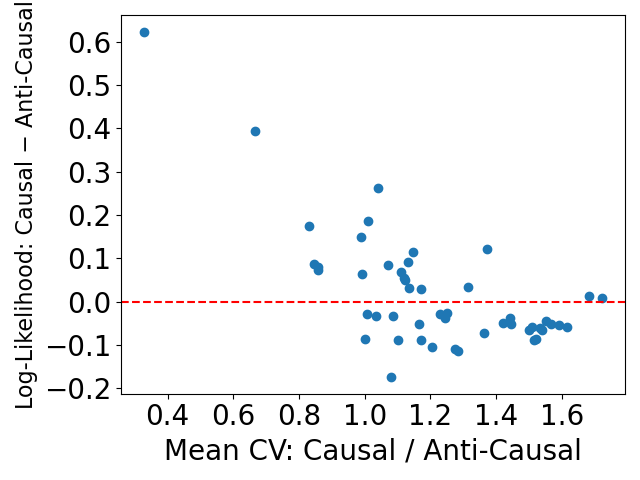}
\caption{Log-likelihood difference under noise misspecification: $\mathit{uniform}(-1,1)$ noise}
\label{fig:experiments_teaser_mcv_vs_likelihood_uniform}
\end{subfigure}
\hfill
\centering
\begin{subfigure}{0.4\textwidth}
\centering
\includegraphics[width=1\textwidth, height=0.5\textwidth]{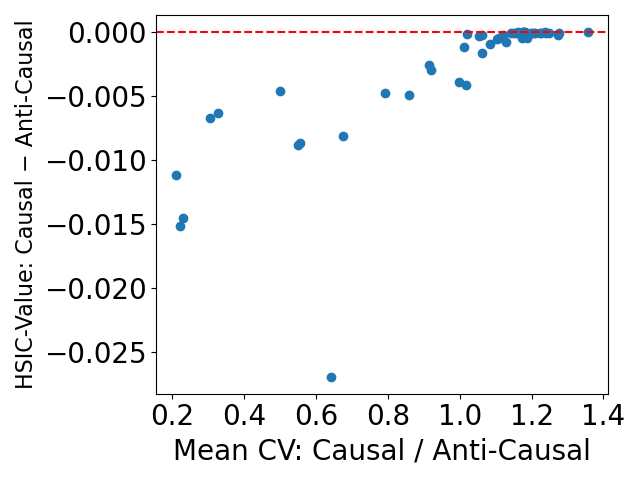}
\caption{HSIC difference under correct noise specification: $\mathit{Gaussian}(0,1)$ noise}
\label{fig:experiments_teaser_mcv_vs_hsic_gaussian}
\end{subfigure}
\qquad\qquad
\centering
\begin{subfigure}{0.4\textwidth}
\centering
\includegraphics[width=1\textwidth, height=0.5\textwidth]{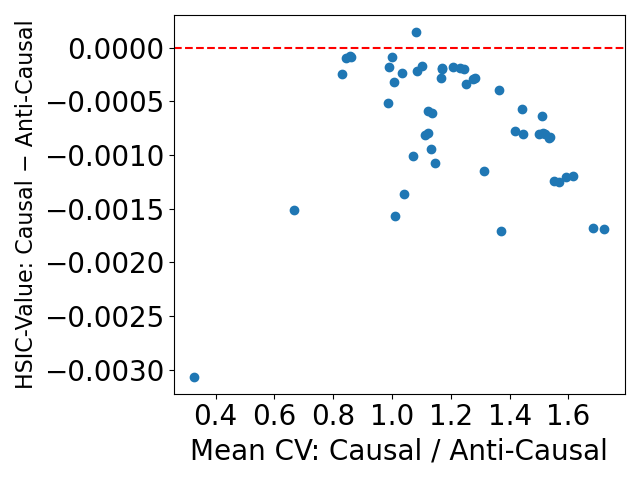}
\caption{HSIC difference under noise misspecification: $\mathit{uniform}(-1,1)$ noise}
\label{fig:experiments_teaser_mcv_vs_hsic_uniform}
\end{subfigure}
\caption{Visualization of~\Cref{tab:teaser_mcv_vs_accuracy}. First row (\ref{fig:experiments_teaser_mcv_vs_likelihood_gaussian},\ref{fig:experiments_teaser_mcv_vs_likelihood_uniform}): ML methods. Second row (\ref{fig:experiments_teaser_mcv_vs_hsic_gaussian},\ref{fig:experiments_teaser_mcv_vs_hsic_uniform}): IT methods.
{\em Y-axis $<$ 0.0}: A ML method returns the {\em incorrect} anti-causal direction.  {\em Y-axis $>$ 0.0}: An IT method returns the {\em incorrect} anti-causal direction.
ML methods may fail under misspecification and misleading CVs (\ref{fig:experiments_teaser_mcv_vs_likelihood_uniform}, when {\em X-axis $>$ 1} ). IT methods are more robust (\ref{fig:experiments_teaser_mcv_vs_hsic_uniform}).}
\label{fig:teaser_mcv_vs_likelihood}
\end{figure*}

\section{Robustness of independence testing}\label{section:independence-fix}
This section describes IT methods for cause-effect learning in LSNMs, including a new IT method based on affine flows. 
Theoretical results below explain why IT methods are robust to noise misspecification and misleading CVs in LSNMs.

\subsection{The independence testing method}

Inspired by the breakthrough DirectLiNGAM approach~\citep{shimizu2011directlingam}, independence testing has been used in existing methods for various SCMs~\citep{hoyer2008nonlinear,peters2014causal,strobl2023identifying,immer2023identifiability}. Like ML methods (\Cref{eq:ml-select}), IT methods fit the model parameters in both directions, typically maximizing the data likelihood (\Cref{eq:ml-fit}). The difference is in the model selection step, where
IT methods select the direction with the highest degree of independence between the fitted model residuals and the putative cause.


\Cref{alg:CAREFL_H} is the pseudo-code for our IT method \textbf{CAREFL-H}. We fit the functions $f$ and $g$ in~\Cref{eq:LSNM}, with the affine flow estimator $\careflEstimator$ from CAREFL-M~\citep{khemakhem2021causal}, implemented using neural networks.
For details on CAREFL-M and learning the flow transformation $\careflEstimator$ see~\Cref{section_CAREFL_M_details}.  This combination of affine flow model with IT appears to be new.
%
Another IT method is to test the independence of both residuals~\citep{he2021daring} (\Cref{Section_carefl_H_daring}).
We found that this performs similarly to CAREFL-H and therefore report results only for the more common DirectLiNGAM-style method.

As in previous work~\citep{mooij2016distinguishing}, we use the Hilbert-Schmidt independence criterion (HSIC)~\citep{gretton2005measuring} to measure (in)dependence throughout the paper.
HSIC measures the squared distance between the joint probability of the two variables and the product of their marginals embedded in the reproducible kernel Hilbert space. We have $\mathit{HSIC}(U,V)=0$ if and only if $U \indep V$.

\subsection{Theoretical comparison between IT and ML under noise misspecification in LSNMs}

The intuition for why independence testing works is that if LSNM training is consistent, testing the independence of the putative cause and the residual will indicate the correct causal model. The next theorem provides a formal statement of this intuition. 

\begin{theorem}\label{theorem_independence_noise}
For data pairs $(X,Y)$ generated by an LSNM model $X \rightarrow Y$, let $\widehat{\causalf}(X)$ denote the conditional mean estimator and $\widehat{\causalg}(X)$ denote the conditional standard deviation estimator. 
Under the consistency conditions  that $\widehat{\causalf}(X) = \EX[Y|X]$ and  $\widehat{\causalg}(X) = \STD[Y|X]$, the reconstructed noise is independent of the cause, i.e. $\causalResidual_Y \indep X$, even under noise misspecification.
\end{theorem}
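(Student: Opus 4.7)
The plan is to substitute the consistent estimators into the definition of the reconstructed noise and show that the result reduces to a deterministic function of the true noise $Z_Y$, which is independent of $X$ by assumption of the LSNM.

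First I would set up the notation. By the generating LSNM, $Y = \causalf(X) + \causalg(X)\,\causalNoise_Y$ with $\causalNoise_Y \indep X$. Let $\mu := \EX[\causalNoise_Y]$ and $\sigma := \STD[\causalNoise_Y]$, both assumed finite with $\sigma > 0$ (since the model is non-deterministic). The reconstructed noise is defined as
\begin{equation*}
    \causalResidual_Y := \frac{Y - \widehat{\causalf}(X)}{\widehat{\causalg}(X)}.
\end{equation*}
Since $\causalNoise_Y \indep X$, we have $\EX[\causalNoise_Y \mid X] = \mu$ and $\VAR[\causalNoise_Y \mid X] = \sigma^{2}$, so the true conditional moments of $Y$ given $X$ are $\EX[Y\mid X] = \causalf(X) + \mu\,\causalg(X)$ and $\VAR[Y\mid X] = \sigma^{2}\,\causalg(X)^{2}$. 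Because $\causalg$ is strictly positive, $\STD[Y\mid X] = \sigma\,\causalg(X)$.

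Next, I would plug in the consistency conditions $\widehat{\causalf}(X) = \EX[Y\mid X]$ and $\widehat{\causalg}(X) = \STD[Y\mid X]$, together with the LSNM equation, to obtain
\begin{equation*}
    \causalResidual_Y
    = \frac{\bigl(\causalf(X) + \causalg(X)\,\causalNoise_Y\bigr) - \bigl(\causalf(X) + \mu\,\causalg(X)\bigr)}{\sigma\,\causalg(X)}
    = \frac{\causalNoise_Y - \mu}{\sigma}.
\end{equation*}
Thus $\causalResidual_Y$ is a deterministic (affine) measurable function of $\causalNoise_Y$ alone, with no dependence on $X$. Since $\causalNoise_Y \indep X$ and independence is preserved under measurable transformations of one variable, $\causalResidual_Y \indep X$.

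No step is genuinely hard here; the only subtlety is that the derivation never used the particular \emph{form} of the distribution of $\causalNoise_Y$ beyond the existence of its first two moments, which is precisely what justifies the phrase ``even under noise misspecification'': the consistency of $\widehat{\causalf},\widehat{\causalg}$ as estimators of the conditional mean and standard deviation is enough, regardless of whether the modelled prior noise family agrees with the true noise distribution. If anything requires care in a formal write-up, it is just articulating that $\widehat{\causalg}(X) > 0$ almost surely (so division is well-defined) and that consistency is interpreted in the population/oracle sense used in the theorem statement.
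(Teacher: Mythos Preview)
Your proof is correct and follows essentially the same approach as the paper: compute $\EX[Y\mid X]$ and $\STD[Y\mid X]$ from the LSNM equation using $\causalNoise_Y\indep X$, substitute into the residual formula, and observe that $\causalResidual_Y=(\causalNoise_Y-\mu)/\sigma$ is an affine function of $\causalNoise_Y$ alone. Your write-up is in fact slightly more careful than the paper's (explicitly noting $\sigma>0$, that $\causalg>0$ removes the absolute value, and that independence is preserved under measurable maps), but the argument is identical.
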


\begin{proof}[Proof]
According to~\Cref{eq:LSNM},
\begin{equation*}
    \begin{split}
        & \widehat{\causalf}(X) = \EX[Y|X] = \EX[\causalf(X) + \causalg(X) \cdot \causalNoise_Y | X] = \causalf(X) + \causalg(X) \cdot \EX[\causalNoise_Y | X] = \causalf(X) + \causalg(X) \cdot \EX[\causalNoise_Y] \\
        & \widehat{\causalg}(X) = \STD[Y|X] = \STD[\causalf(X) + \causalg(X) \cdot \causalNoise_Y | X] = |\causalg(X)| \cdot \STD[\causalNoise_Y | X] = \causalg(X) \cdot \STD[\causalNoise_Y]
    \end{split}
\end{equation*}
Thus, $\causalResidual_Y = \frac{Y-\widehat{\causalf}(X)}{\widehat{\causalg}(X)} = \frac{\causalf(X) + \causalg(X) \cdot \causalNoise_Y - \causalf(X) - \causalg(X) \cdot \EX[\causalNoise_Y]}{\causalg(X) \cdot \STD[\causalNoise_Y]} = \frac{\causalNoise_Y - \EX[\causalNoise_Y]}{\STD[\causalNoise_Y]}$, where $\EX[\causalNoise_Y]$ and $\STD[\causalNoise_Y] > 0$ are constants. Therefore, $\causalResidual_Y$ and $\causalNoise_Y$ are identical up to shift and scale. Since $\causalNoise_Y \indep X$, therefore, $\causalResidual_Y \indep X$.
\end{proof}
Note that the proof does not involve the specification of the noise prior distribution $P_{\causalNoise_Y}$, which shows that under the consistency condition, the conclusion $\causalResidual_Y \indep X$ holds whether the noise prior distribution is misspecified or not.

In contrast, the likelihood of a causal model
depends on the specification of the noise prior distribution $P_{\causalNoise_Y}$ (\Cref{eq:model_distribution}). 
This explains why IT methods are more robust than ML methods under noise misspecification in LSNMs. 

In the context of ANMs, \citet{mooij2016distinguishing} provide another sufficient condition for $\causalResidual_Y \indep X$ called suitability. In~\Cref{section_suitability_theory}, we show how the suitability concept can be adapted for LSNMs to provide another argument for the robustness of IT methods.

\begin{algorithm}[t]
   \caption{CAREFL-H}
   \label{alg:CAREFL_H}
\begin{algorithmic}[1]
    \State {\bfseries Input:} data pairs $\D := (X,Y)$, the flow estimator $\careflEstimator$ of CAREFL-M with prior $P_{\causalNoise}$, and an HSIC estimator
    \State {\bfseries Output:} estimated causal direction $\mathit{dir}$
    \State Split $\D$ into training set $\D_{\mathit{train}} := (X_{\mathit{train}},Y_{\mathit{train}})$ and testing set $\D_{\mathit{test}} := (X_{\mathit{test}},Y_{\mathit{test}})$
    \State Optimize $\careflEstimator_{\widehat{\theta},\widehat{\psi},\widehat{\zeta}}(\D_{\mathit{train}};P_{\causalNoise})$ in $X \rightarrow Y$ direction via ML to estimate $\widehat{\causalf}$ and $\widehat{\causalg}$
    \State Compute the residual $\causalResidual_Y := \frac{Y_{\mathit{test}} - \widehat{\causalf}(X_{\mathit{test}})}{\widehat{\causalg}(X_{\mathit{test}})}$
    \State Optimize $\careflEstimator_{\widehat{\theta'},\widehat{\psi'},\widehat{\zeta'}}(\D_{\mathit{train}};P_{\causalNoise})$ in $X \leftarrow Y$ direction via ML to estimate $\widehat{\anticausalh}$ and $\widehat{\anticausalk}$
    \State Compute the residual $\causalResidual_X := \frac{X_{\mathit{test}} -\widehat{\anticausalh}(Y_{\mathit{test}})}{\widehat{\anticausalk}(Y_{\mathit{test}})}$
    \If {$\mathit{HSIC}(X_{\mathit{test}}, \causalResidual_Y) < \mathit{HSIC}(Y_{\mathit{test}}, \causalResidual_X)$} 
    \State $\mathit{dir} := X \rightarrow Y$
    \ElsIf {$\mathit{HSIC}(X_{\mathit{test}}, \causalResidual_Y) > \mathit{HSIC}(Y_{\mathit{test}}, \causalResidual_X)$} 
    \State $\mathit{dir} := X \leftarrow Y$
    \Else
    \State $\mathit{dir} := \textit{no conclusion}$
    \EndIf
\end{algorithmic}
\end{algorithm}

\section{Limitation of IT-based methods}
Limitations of IT-based methods include the following: 1) When the noise prior distribution is correctly specified, ML-based methods can utilize this information to be more sample efficient. Conversely, since IT-based methods are less sensitive to the noise prior distribution, they require a larger sample size to infer causal direction accurately. 2) IT-based methods have higher computational cost, especially with nonlinear independence testing. 3) IT-based methods require high-capacity function estimators to model the unknown functions $\causalf$ and $\causalg$ for a good noise reconstruction.

Despite these complications, we advocate the use of IT-based methods over ML-based methods under noise misspecification and misleading CVs in LSNMs, because they are more reliable in such settings.

\section{Experiments}\label{section:experiments}

On synthetic datasets, we find that across different hyperparameter choices the IT method (CAREFL-H) produces much higher accuracy than the ML method (CAREFL-M) in the difficult settings with noise misspecification and misleading CVs, and produces comparable accuracy in the easier settings without noise misspecification or misleading CVs. On real-world data where the ground-truth noise distribution is unknown, the IT method is also more robust across different hyperparameter choices.


For all experiments, we start with the same default hyperparameter values for both CAREFL-M and CAREFL-H and alter one 
value at a time. The default hyperparameter values are those specified in CAREFL-M~\citep{khemakhem2021causal} for the T\"ubingen Cause-Effect Pairs benchmark~\citep{mooij2016distinguishing}. Please see~\Cref{section_experiments_default_and_alternative_HP} for more details on default and alternative hyperparameter values. Previous work~\citep{mooij2016distinguishing,immer2023identifiability} reported that ML methods perform better with data splitting (split data into training set for model fitting and testing set for model selection) and IT methods perform better with data recycling (the same data is used for both model fitting and selection). 
Therefore, we use both splitting methods: (i) CAREFL(0.8): $80\%$ as training and $20\%$ as testing. (ii) CAREFL(1.0): training = testing = $100\%$. The training procedure is not supervised: no method accesses the ground-truth direction.

We use a consistent HSIC estimator
with Gaussian kernels~\citep{pfister2018kernel}. A summary of experimental datasets is provided in Appendix~\Cref{tab:dataset_summary}.
All the datasets are standardized to have mean 0 and variance 1.
The datasets in~\Cref{section_synthetic_LSNMs} are generated by LSNMs and the datasets in~\Cref{section_SIM_benchmark,section_Tubingen} are not. Consequently, the results demonstrate that the proposed method performs well not only when the ground-truth SCMs are strictly LSNMs, but also when the ground-truth SCMs are unknown or not strictly LSNMs.

\subsection{Results on synthetic LSNM data}\label{section_synthetic_LSNMs}

\begin{figure*}[t]
\centering
\begin{subfigure}{\textwidth}
\centering
\includegraphics[width=1\textwidth, height=0.2\textwidth]{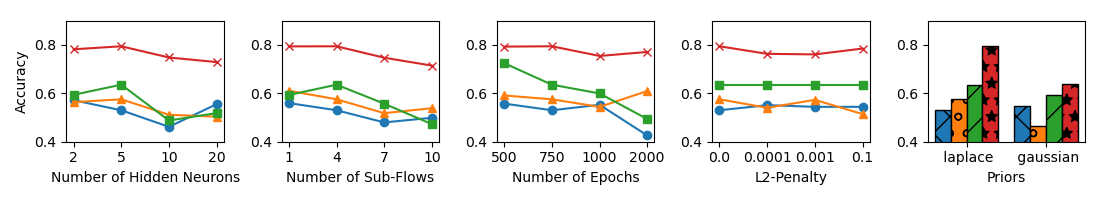}
\end{subfigure}
\hfill
\centering
\begin{subfigure}{1.0\textwidth}
  \centering
  \includegraphics[width=1\textwidth, height=0.05\textwidth]{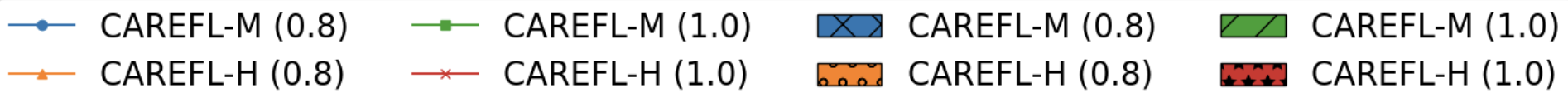}
\end{subfigure}
\caption{\textbf{Weighted accuracy} over 99 datasets from \textbf{T\"ubingen Cause-Effect Pairs benchmark}. 
}
\label{fig:experiments_Tubingen_CEpairs}
\end{figure*}

See~\Cref{section:synthetic_SCMs} for the definition of the ground-truth LSNM SCMs and details on how synthetic datasets are generated from them. The sample sizes in each synthetic dataset are 500 or 5,000.
As shown in Appendix~\Cref{tab:dataset_summary}, most synthetic datasets generated by such SCMs have misleading CVs. Based on the analysis of~\Cref{section:analysis_misspecification} we formulate the following hypotheses. (i) CAREFL-M should be accurate given a correct specification on the form of noise distribution, with or without misleading CVs. 
(ii) With noise misspecification and mild misleading CVs, the accuracy of CAREFL-M should be reduced. (iii) With noise misspecification and severe misleading CVs, the accuracy should be very low, often below 50\%. Overall, the results from the experiments confirm our hypotheses.

\begin{table*}[t]
\caption{Accuracy of methods on the SIM and T\"ubingen Cause-Effect Pairs benchmarks. For methods other than CAREFL-M and CAREFL-H, we use the best results reported in~\citet{immer2023identifiability}.}
\label{tab:experiments_best_results}
\centering
\begin{subtable}[h]{\textwidth}
\caption{Accuracy with SIM benchmarks}
\label{tab:experiments_best_results_sim}
\centering
\begin{tabular}{cccccccccc}
\toprule
 & LOCI-M & LOCI-H & GRCI & BQCD & HECI & CAM & RESIT & CAREFL-M & CAREFL-H \\
\midrule
SIM & 0.52 & 0.79 & 0.77 & 0.62 & 0.49 & 0.57 & 0.77 & 0.55 & \textbf{0.80} \\
\midrule
SIM-c & 0.50 & 0.83 & 0.77 & 0.72 & 0.55 & 0.60 & 0.82 & 0.58 & \textbf{0.85} \\
\midrule
SIM-ln & 0.79 & 0.73 & 0.77 & 0.80 & 0.65 & \textbf{0.87} & \textbf{0.87} & 0.84 & 0.83 \\
\midrule
SIM-G & 0.78 & 0.81 & 0.70 & 0.64 & 0.56 & 0.81 & 0.78 & \textbf{0.82} & 0.79 \\
\bottomrule
\end{tabular}
\end{subtable}
\hfill
\centering
\begin{subtable}[h]{\textwidth}
\caption{Weighted accuracy with T\"ubingen Cause-Effect Pairs benchmark}
\label{tab:experiments_best_results_Tubingen_CEpairs}
\centering
\begin{tabular}{cccccccccc}
\toprule
 & LOCI-M & LOCI-H & GRCI & BQCD & HECI & CAM & RESIT & CAREFL-M & CAREFL-H \\
\midrule
\begin{tabular}[c]{@{}c@{}} T\"ubingen\\Cause-Effect\\Pairs\end{tabular} & 0.57 & 0.60 & \textbf{0.82} & 0.77 & 0.71 & 0.58 & 0.57 & 0.73 & \textbf{0.82} \\
\bottomrule
\end{tabular}
\end{subtable}
\end{table*}

\subsubsection{Noise misspecification} 

We evaluate CAREFL-M and CAREFL-H against data generated with $\mathit{uniform}(-1,1)$, $\mathit{exponential}(1)$, $\mathit{continuousBernoulli}(0.9)$ or $\mathit{beta}(0.5, 0.5)$ noise, covered by our identifiability~\Cref{theorem_identifiability_special_noises} (except $\mathit{beta}(0.5, 0.5)$). 
\citet{khemakhem2021causal} empirically find that CAREFL-M with a Laplace model prior is robust with similar ground-truth noise distributions such as Gaussian and Student's t. We show that it may fail remarkably with a dissimilar distribution.

We summarize findings from the 336 settings here; the detailed results are given in Appendix~\Cref{fig:experiments_simulated_accuracy-LSNM-tanh-exp-cosine-uniform,fig:experiments_simulated_accuracy-LSNM-tanh-exp-cosine-beta,fig:experiments_simulated_accuracy-LSNM-tanh-exp-cosine-continuousbernoulli,fig:experiments_simulated_accuracy-LSNM-tanh-exp-cosine-exp,fig:experiments_simulated_accuracy-LSNM-sine-tanh-uniform,fig:experiments_simulated_accuracy-LSNM-sine-tanh-beta,fig:experiments_simulated_accuracy-LSNM-sine-tanh-continuousbernoulli,fig:experiments_simulated_accuracy-LSNM-sine-tanh-exp,fig:experiments_simulated_accuracy-LSNM-sigmoid-sigmoid-uniform,fig:experiments_simulated_accuracy-LSNM-sigmoid-sigmoid-beta,fig:experiments_simulated_accuracy-LSNM-sigmoid-sigmoid-continuousbernoulli,fig:experiments_simulated_accuracy-LSNM-sigmoid-sigmoid-exp}.
In 289 settings (86.01\%),
both CAREFL-M(0.8) and CAREFL-M(1.0) select the correct causal direction with less than 50\% random accuracy.  Furthermore,
in 110 settings (32.74\%)
both CAREFL-M(0.8) and CAREFL-M(1.0) fail catastrophically with an accuracy of 0\%. These experiments also show that the accuracy of CAREFL-M often decreases as $N$ increases. 
In contrast, CAREFL-H(1.0) achieves better accuracy than CAREFL-M in 333 settings (99.11\%). 
The accuracy of CAREFL-H(1.0) goes below 50\% in only 6 settings (1.79\%).
The results demonstrate the robustness of the IT method under noise misspecification and misleading CVs, across different hyperparameter choices.

Appendix~\Cref{tab:dataset_summary} and Appendix~\Cref{fig:experiments_simulated_accuracy-LSNM-sigmoid-sigmoid-exp} 
show that with severe misleading CVs, the accuracy of CAREFL-M is close to 0\%. This is much lower than the corresponding cases with mild misleading CVs in Appendix~\Cref{fig:experiments_simulated_accuracy-LSNM-tanh-exp-cosine-exp,fig:experiments_simulated_accuracy-LSNM-sine-tanh-exp}.

\subsubsection{Correct specification on the form of the noise distribution}

These experiments show that CAREFL-H is comparable with CAREFL-M under correct form of noise specification, with or without misleading CVs, especially on larger datasets, as long as the affine model capacity is sufficient. We evaluate CAREFL-M and CAREFL-H against data generated with $\mathit{Gaussian}(0,1)$ and $\mathit{Laplace}(0,1)$ noise.
The detailed results are in Appendix~\Cref{fig:experiments_simulated_accuracy-LSNM-tanh-exp-cosine-gaussian,fig:experiments_simulated_accuracy-LSNM-tanh-exp-cosine-laplace,fig:experiments_simulated_accuracy-LSNM-sine-tanh-gaussian,fig:experiments_simulated_accuracy-LSNM-sine-tanh-laplace,fig:experiments_simulated_accuracy-LSNM-sigmoid-sigmoid-gaussian,fig:experiments_simulated_accuracy-LSNM-sigmoid-sigmoid-laplace}. 
We find that CAREFL-M is more sample efficient than CAREFL-H when the model prior matches the data, which is a general pattern for ML vs. IT methods ~\citep{schultheiss2023pitfalls}.
Consistent with the suitability results in~\Cref{section_suitability_empirical}, the accuracy of CAREFL-H improves with more data. For example, with $N=500$, there are 49 out of 84 settings (58.33\%) where CAREFL-M outperforms CAREFL-H(1.0). However, with $N=5,000$,  CAREFL-H(1.0) achieves similar accuracy as CAREFL-M on all datasets (except LSNM-sigmoid-sigmoid with $\mathit{Laplace}(0,1)$ noise.) In addition, CAREFL-H(1.0) may underperform CAREFL-M when the number of hidden neurons, sub-flows or training epochs is low. An IT method requires more model capacity to fit the LSNM functions and produce a good reconstruction of the noise. 

\subsection{Results on synthetic benchmarks}\label{section_SIM_benchmark}

Similar to~\citet{tagasovska2020distinguishing,immer2023identifiability}, we compare CAREFL-M and CAREFL-H against the SIM benchmark suite~\citep{mooij2016distinguishing}. SIM comprises 4 sub-benchmarks: default (SIM), with one confounder (SIM-c), low noise levels (SIM-ln) and Gaussian noise (SIM-G). 
In this benchmark, most datasets do not have misleading CVs (Appendix~\Cref{tab:dataset_summary}),
which favors ML methods. Each sub-benchmark contains 100 datasets and each dataset has $N=1000$ data pairs. As shown in Appendix~\Cref{fig:experiments_sim}, CAREFL-M and CAREFL-H(1.0) achieve similar accuracy on SIM-ln and SIM-G across different hyperparameter choices. For SIM and SIM-c, CAREFL-H, especially CAREFL-H(1.0), outperforms CAREFL-M by 20\%-30\% in all settings. The accuracy of CAREFL-M is only about random guess (40\%-60\%) on SIM and SIM-c.

\Cref{tab:experiments_best_results_sim} compares CAREFL-H with SOTA methods. For each CAREFL method, we report the best accuracy obtained 
in Appendix~\Cref{fig:experiments_sim} without further tuning. CAREFL-H achieves the best accuracy on SIM and SIM-c, and achieves competitive accuracy on SIM-ln and SIM-G.

\subsection{Results on real-world benchmarks: T\"ubingen Cause-Effect Pairs}\label{section_Tubingen}


The T\"ubingen Cause-Effect Pairs benchmark~\citep{mooij2016distinguishing} is commonly used to evaluate cause-effect inference algorithms~\citep{khemakhem2021causal,xu2022inferring,immer2023identifiability}.
To be consistent with previous work~\citep{tagasovska2020distinguishing,strobl2023identifying,immer2023identifiability}, we exclude 6 multivariate and 3 discrete datasets (\#47, \#52-\#55, \#70, \#71, \#105, \#107) and utilize the remaining 99 bivariate datasets.
As recommended by~\citet{mooij2016distinguishing}, we report weighted accuracy.
 40\% of datasets in the benchmark feature misleading CVs.
{\em CAREFL-H(1.0) outperforms CAREFL-M in all configurations by large margins (7\%-30\%)}; see \Cref{fig:experiments_Tubingen_CEpairs}.
%
We also compare CAREFL-H with SOTA methods (see \Cref{section_Tubingen_CEpairs_best_hyperparameters} for hyperparameters). 
\Cref{tab:experiments_best_results_Tubingen_CEpairs} shows that CAREFL-H achieves the SOTA accuracy (82\%)
and is 9\% more accurate than CAREFL-M~\citep{khemakhem2021causal}.

Furthermore, LSNM methods (i.e. LOCI, GRCI, BQCD, HECI, CAREFL) perform better than ANM methods (i.e. CAM, RESIT). The reason is that LSNM is a weaker assumption than ANM and allows for heteroscedastic noise.

\section{Conclusion and future work}

We identified a failure mode of maximum-likelihood (ML) methods for cause-effect inference in location-scale noise models (LSNMs). Our analysis shows that the failure occurs when the form of the noise distribution is misspecified and conditional variances are misleading (i.e., higher in the causal direction). Selecting causal models by independence tests (IT) is robust even in this difficult setting. Extensive empirical evaluation compared the ML method and a new IT method based on affine flows, on both synthetic and real-world datasets. The IT flow method achieves better accuracy under noise misspecification and misleading CVs, with robust performance across different hyperparameter choices. Future directions include improving the sample efficiency of IT methods, and improving the robustness of ML methods by learning the noise distribution instead of using a fixed prior.

\bibliographystyle{plainnat}
\bibliography{references}

\newpage
\appendix




\section{Extension to multivariate settings} \label{section_extended_to_multivariate_setting}

To extend a bivariate cause-effect inference method to multivariate, the most common approach is to first use the PC algorithm~\citep{spirtes2000causation} (or other algorithms) to learn a completed partially directed acyclic graph (CPDAG) that may contain undirected edges. Then, orient the undirected edges using the cause-effect inference method. This approach can be found in~\citet{tagasovska2020distinguishing,khemakhem2021causal,xu2022inferring,strobl2023identifying}.

\section{Derivation of~\Cref{eq:model_distribution}}\label{section_model_distribution_derivation}

\begin{lemma}\label{lemma_conditional_change_variable}
For an LSNM model $X \rightarrow Y$ defined in~\Cref{eq:LSNM}, we have $P_{Y|X}(Y|X) = P_{\causalNoise_Y}(\frac{Y - \causalfX}{\causalgX}) \cdot \frac{1}{\causalgX}$, where $P_{\causalNoise_Y}$ is the noise distribution.
\end{lemma}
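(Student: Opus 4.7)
The plan is to apply the standard univariate change-of-variables formula for densities to the affine map $z \mapsto f(x) + g(x) \cdot z$ obtained by conditioning on $X = x$. Because the LSNM assumptions give $g(x) > 0$ on the domain of $X$, the map is strictly increasing in $z$ for every fixed $x$, which avoids any absolute-value bookkeeping in the Jacobian and ensures a clean one-to-one correspondence between $Y$ and $\causalNoise_Y$ conditional on $X$.

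First I would condition on $X = x$ and observe that under this conditioning $Y$ is a deterministic affine function of $\causalNoise_Y$, namely $Y = \causalf(x) + \causalg(x) \cdot \causalNoise_Y$, so the inverse transformation is $\causalNoise_Y = (Y - \causalf(x))/\causalg(x)$ with derivative $1/\causalg(x)$ with respect to $Y$. Applying the univariate change-of-variables formula then yields
\begin{equation*}
    P_{Y|X}(y|x) = P_{\causalNoise_Y|X}\!\left(\tfrac{y - \causalf(x)}{\causalg(x)}\,\Big|\,x\right) \cdot \tfrac{1}{\causalg(x)}.
\end{equation*}
Next I would invoke the LSNM assumption $X \indep \causalNoise_Y$ stated right below Equation (LSNM), which collapses the conditional density $P_{\causalNoise_Y|X}(\cdot | x)$ to the marginal $P_{\causalNoise_Y}(\cdot)$, giving the claimed identity.

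I do not expect a real obstacle here; the statement is essentially a textbook change-of-variables plus an application of independence. The only subtle points are (i) justifying that $g(x) > 0$ makes the Jacobian exactly $1/\causalg(x)$ rather than $1/|\causalg(x)|$ (immediate from the LSNM definition), and (ii) ensuring the change-of-variables is valid, which is handled by the assumed twice-differentiability of $\causalf$ and $\causalg$ on the domain of $X$ together with strict positivity of $\causalg$, so the conditional map from $\causalNoise_Y$ to $Y$ is a $C^1$-diffeomorphism for each fixed $x$. If one wished to be fully rigorous about densities, one could alternatively compute the conditional CDF $P(Y \le y \mid X = x) = P(\causalNoise_Y \le (y - \causalf(x))/\causalg(x) \mid X = x)$ and differentiate in $y$, using independence at the CDF stage; this gives the same conclusion while sidestepping any measure-theoretic concerns about densities.
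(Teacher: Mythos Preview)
Your proposal is correct and follows essentially the same route as the paper: condition on $X=x$, invert the affine map to get $\causalNoise_Y=(Y-\causalf(x))/\causalg(x)$ with Jacobian $1/\causalg(x)$, apply the change-of-variables formula, then use $X\indep\causalNoise_Y$ to drop the conditioning on $X$. Your write-up is in fact a bit more careful than the paper's (explicitly flagging why $\causalg(x)>0$ removes the absolute value and offering the CDF-differentiation variant), but the argument is the same.
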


For the data distribution in the $X \rightarrow Y$ direction:
\begin{equation*}
\begin{split}
    \model{\rightarrow}{P_{\causalNoise}}(X,Y) = P_{X}(X) \cdot P_{Y|X}(Y|X) = P_{X}(X) \cdot P_{\causalNoise_Y}(\frac{Y - \causalfX}{\causalgX}) \cdot \frac{1}{\causalgX}
\end{split}
\end{equation*}

Similarly, in the $X \leftarrow Y$ direction:
\begin{equation*}
\begin{split}
    \model{\leftarrow}{P_{\causalNoise}}(X,Y) = P_{Y}(Y) \cdot P_{X|Y}(X|Y) = P_{Y}(Y) \cdot P_{\causalNoise_X}(\frac{X - \anticausalhY}{\anticausalkY}) \cdot \frac{1}{\anticausalkY}
\end{split}
\end{equation*}

\begin{proof}[Proof for~\Cref{lemma_conditional_change_variable}]
For an LSNM model defined in~\Cref{eq:LSNM}, we have $\causalNoise_Y = \frac{Y - \causalfX}{\causalgX}$. Hence, $\frac{\partial \causalNoise_Y}{\partial Y} = \frac{1}{\causalgX}$. Since $\causalgX > 0$, so $|\frac{\partial \causalNoise_Y}{\partial Y}| = |\frac{1}{\causalgX}| = \frac{1}{\causalgX}$.

\begin{equation*}
\begin{split}
& P_{Y|X}(Y|X) \\
= & P_{\causalNoise_Y}(\causalNoise_Y | X) \cdot |\det \frac{\partial \causalNoise_Y}{\partial Y}| \\
& \mbox{ (via change of variables)} \\
= & P_{\causalNoise_Y}(\causalNoise_Y) \cdot |\det \frac{\partial \causalNoise_Y}{\partial Y}| \\
= & P_{\causalNoise_Y}(\frac{Y - \causalfX}{\causalgX}) \cdot |\frac{\partial \causalNoise_Y}{\partial Y}| \\
= & P_{\causalNoise_Y}(\frac{Y - \causalfX}{\causalgX}) \cdot \frac{1}{\causalgX}
\end{split}
\end{equation*}
\end{proof}

\section{Identifiability of LSNMs with correct noise distribution}\label{section_idntifiability}

\citet{strobl2023identifying,immer2023identifiability} prove the identifiability of LSNMs,
assuming a correctly specified noise distribution. That is, given that the data generating distribution $(X,Y)$ follows an LSNM in the direction $X \rightarrow Y$, the same distribution with equal likelihood cannot be induced by an LSNM in the backward direction $X \leftarrow Y$, except in some pathological cases. In terms of our notation, direction identifiability means that if $(\rightarrow,P_{\causalNoise})$ is the data generating model, then 

\begin{equation*}
    P([\modelLikelihoodScore_{\rightarrow,P_{\causalNoise}}(D) - \modelLikelihoodScore_{\leftarrow,P_{\causalNoise}}(D)] > 0) \rightarrow 1 \mbox{ as } N \rightarrow \infty
\end{equation*}

\citet{immer2023identifiability} prove the following identifiability result:

\begin{theorem}[Theorem 1 from \citep{immer2023identifiability}]
For data $(X,Y)$ that follows an LSNM in both direction $X \rightarrow Y$ and $X \leftarrow Y$, i.e.,
\begin{equation*}
\begin{aligned}
    Y = \causalfX + \causalgX \cdot \causalNoise_Y, \mbox{ where } X \indep \causalNoise_Y \\
    X = \anticausalhY + \anticausalkY \cdot \causalNoise_X, \mbox{ where } Y \indep \causalNoise_X
\end{aligned}
\end{equation*}
The following condition must be true:
\begin{equation}
\begin{aligned}\label{eq:identifiability_LOCI}
& (\log p(y))'' + \frac{\causalgprimex}{G(x,y)} \cdot (\log p(y))' + \frac{\partial^2}{\partial y^2} \cdot \nu_{X|Y}(x|y) + \frac{\causalgx}{G(x,y)} \cdot \frac{\partial^2}{\partial y \partial x} \cdot \nu_{X|Y}(x|y) \\ 
& + \frac{\causalgprimex}{G(x,y)} \cdot \frac{\partial}{\partial y} \cdot \nu_{X|Y}(x|y) = 0
\end{aligned}
\end{equation}
where $G(x,y)=\causalgx \cdot \causalfprimex + \causalgprimex \cdot [y-\causalfx] \neq 0$ and $\nu_{X|Y}(x|y)=\log p_{\causalNoise_X}(\frac{x-\anticausalhy}{\anticausalky}) - \log \anticausalky$.
\end{theorem}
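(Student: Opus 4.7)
The plan is to equate the two representations of the joint density arising from the forward and backward LSNMs, take logarithms, and then exploit the forward noise-cause independence $\causalNoise_Y \indep X$ to convert derivatives of $\log p_{\causalNoise_Y}$ into derivatives of the observable densities. By the change of variables (\Cref{lemma_conditional_change_variable}), the forward conditional is $p(y\mid x) = p_{\causalNoise_Y}(u)/\causalgx$ with $u := (y - \causalfx)/\causalgx$, and by definition $\nu_{X|Y}(x\mid y) = \log p(x\mid y)$. Equating $p(x)\,p(y\mid x) = p(y)\,p(x\mid y)$ and taking logs yields
\begin{equation*}
\log p(x) + \log p_{\causalNoise_Y}(u) - \log \causalgx \;=\; \log p(y) + \nu_{X|Y}(x\mid y).
\end{equation*}

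First I would record $\partial u/\partial y = 1/\causalgx$ and $\partial u/\partial x = -G(x,y)/\causalgx^{2}$, both immediate from the quotient rule. Differentiating the displayed identity once and twice with respect to $y$ at fixed $x$ cancels the purely $x$-dependent terms on the left and gives the two identities
\begin{equation*}
(\log p_{\causalNoise_Y})'(u) = \causalgx\!\left[(\log p(y))' + \tfrac{\partial}{\partial y}\nu_{X|Y}\right], \quad (\log p_{\causalNoise_Y})''(u) = \causalgx^{2}\!\left[(\log p(y))'' + \tfrac{\partial^{2}}{\partial y^{2}}\nu_{X|Y}\right],
\end{equation*}
which express the first and second log-derivatives of the forward noise density in terms of quantities involving only the backward LSNM and the marginal of $Y$.

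The key step is to differentiate the first of these identities with respect to $x$ at fixed $y$. On the left, the independence $\causalNoise_Y \indep X$ makes $(\log p_{\causalNoise_Y})'(u)$ a function of $x$ only through $u$, so the chain rule combined with $\partial u/\partial x = -G(x,y)/\causalgx^{2}$ gives $(\log p_{\causalNoise_Y})''(u)\cdot(-G(x,y)/\causalgx^{2})$. Substituting the second-derivative identity above eliminates $p_{\causalNoise_Y}$ entirely, leaving $-G(x,y)\bigl[(\log p(y))'' + \tfrac{\partial^{2}}{\partial y^{2}}\nu_{X|Y}\bigr]$. On the right, $(\log p(y))'$ is $x$-free, so differentiation only hits $\causalgx$ and $\nu_{X|Y}$, yielding $\causalgprimex\bigl[(\log p(y))' + \tfrac{\partial}{\partial y}\nu_{X|Y}\bigr] + \causalgx\,\tfrac{\partial^{2}}{\partial x\partial y}\nu_{X|Y}$. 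Equating the two expressions, dividing through by $-G(x,y)$ (where $G(x,y)\neq 0$ is used), and moving everything to one side reproduces exactly \eqref{eq:identifiability_LOCI}.

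The main obstacle is the bookkeeping of partial derivatives: one must carefully track which factors depend on $x$ only via $u$ as opposed to directly through $\causalfx$, $\causalgx$, and remember that $\log p(x)$ and $(\log p(y))'$ drop out when differentiated against the variable they do not involve. Once $\partial u/\partial x$ is computed and the second-derivative identity is substituted into the chain-rule expansion, the remaining algebra is purely mechanical. A minor subtlety worth highlighting is that the backward LSNM assumption enters only in guaranteeing that $\nu_{X|Y}$ has the stated functional form; the independence doing the real work in the derivation is the forward one, $\causalNoise_Y \indep X$.
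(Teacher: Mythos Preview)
Your derivation is correct: equating the two factorizations of the joint log-density, differentiating in $y$ to extract $(\log p_{\causalNoise_Y})'(u)$ and $(\log p_{\causalNoise_Y})''(u)$, then differentiating the first of these in $x$ and substituting the second, is precisely the standard route to \eqref{eq:identifiability_LOCI}; the chain-rule computations for $\partial u/\partial y$ and $\partial u/\partial x$ and the final rearrangement all check out. Note that the present paper does not supply its own proof of this theorem---it is quoted verbatim from \citet{immer2023identifiability}---so there is no in-paper argument to compare against, but your approach coincides with the derivation one finds in that reference.
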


They state that ~\Cref{eq:identifiability_LOCI} will be false except for ``pathological cases". 
In addition, \citet{khemakhem2021causal} provide sufficient conditions for LSNMs with Gaussian noise to be identifiable. Our next theorem provides identifiability results for some non-Gaussian noise distributions:

\begin{theorem}\label{theorem_identifiability_special_noises}
Suppose that the true data-generating distribution follows an LSNM model in both $X \rightarrow Y$ and $X \leftarrow Y$ directions:
\begin{enumerate}
    \item If the noise distribution is $\mathit{uniform}(a,b)$, then both 
$\causalgX$ and $\anticausalkY$
are constant functions.
\item If the noise distribution is $\mathit{continuousBernoulli}(\lambda\neq 0.5)$\footnote{The case of $\mathit{continuousBernoulli}(\lambda=0.5)$ is equivalent to $ \mathit{uniform}(0,1)$).} or $\mathit{exponential}(\lambda)$, then one of the following conditions holds:
\begin{itemize}
    \item $\frac{1}{\causalgX}$ and $\frac{1}{\anticausalkY}$ are constant functions.
    \item $\frac{1}{\causalgX}$ and $\frac{1}{\anticausalkY}$ are linear functions with the same coefficients on $X$ and $Y$, respectively.
\end{itemize}
\end{enumerate}


\end{theorem}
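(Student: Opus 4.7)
The plan is to start from the bidirectional density equation $\model{\rightarrow}{P_{\causalNoise}}(x,y) = \model{\leftarrow}{P_{\causalNoise}}(x,y)$ on the interior of the joint support, take logarithms, and differentiate in $x$ and in $y$ separately. The unifying observation is that for each of the three listed noise families, $L(z) := \log p_{\causalNoise}(z)$ has a derivative $L'(z) \equiv \gamma$ that is \emph{constant} on the interior of its support: $\gamma = 0$ for $\mathit{uniform}(a,b)$, $\gamma = -\lambda$ for $\mathit{exponential}(\lambda)$, and $\gamma = \log(\lambda/(1-\lambda))$ for $\mathit{continuousBernoulli}(\lambda)$. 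This shared ``affine-in-$z$ log-density'' structure is what lets a single algebraic derivation simultaneously cover all three cases.

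Concretely, set $z_Y = (y - \causalf(x))/\causalg(x)$ and $z_X = (x - \anticausalh(y))/\anticausalk(y)$. Taking logs of both lines of \Cref{eq:model_distribution} and equating yields, on the interior of the support,
\begin{equation*}
\log p_X(x) - \log \causalg(x) + L(z_Y) \;=\; \log p_Y(y) - \log \anticausalk(y) + L(z_X).
\end{equation*}
Differentiating in $x$, using $L'\equiv\gamma$ and $\partial z_Y/\partial x = -[\causalf'(x) + \causalg'(x) z_Y]/\causalg(x)$, and then substituting back $z_Y = (y-\causalf(x))/\causalg(x)$, the identity rearranges into
\begin{equation*}
\gamma/\anticausalk(y) \;=\; A(x) \;-\; \bigl(\gamma \, \causalg'(x)/\causalg(x)^{2}\bigr) \cdot y,
\end{equation*}
where $A(x)$ collects the purely $x$-dependent terms. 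Since the left-hand side depends only on $y$, matching the coefficient of $y$ and the constant-in-$y$ part on both sides forces both $A(x)$ and $\gamma\,\causalg'(x)/\causalg(x)^{2}$ to be independent of $x$. Differentiating the same log-identity in $y$ instead forces, symmetrically, $\gamma\,\anticausalk'(y)/\anticausalk(y)^{2}$ to be independent of $y$.

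For $\gamma \neq 0$, covering exponential and continuous Bernoulli with $\lambda \neq 1/2$, the constancy of $\causalg'(x)/\causalg(x)^{2} = -\tfrac{d}{dx}(1/\causalg(x))$ integrates immediately to $1/\causalg(x) = \alpha + \beta x$, and likewise $1/\anticausalk(y) = \alpha' + \beta' y$. Reading off the $y$-slope from the $x$-computation ($\gamma\beta$) and comparing it to the $y$-slope of $\gamma/\anticausalk(y)$ directly ($\gamma\beta'$) forces $\beta = \beta'$; so either $\beta = 0$ and both $\causalg, \anticausalk$ are constants, or $\beta \neq 0$ and $1/\causalg, 1/\anticausalk$ are affine with the common coefficient $\beta$ --- exactly the dichotomy in the second bullet.

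The main obstacle is the uniform case $\gamma = 0$, which also subsumes $\mathit{continuousBernoulli}(1/2)$. Here the differentiated identity collapses to $0=0$ and the interior density equation only yields $p_X(x)/\causalg(x) = p_Y(y)/\anticausalk(y) = \text{const}$, which alone does not pin down $\causalg$ or $\anticausalk$. To push the argument to the desired conclusion I would combine this with a boundary-matching argument: the two ``noise-level'' boundary curves $y = \causalf(x) + a\causalg(x)$ and $y = \causalf(x) + b\causalg(x)$ of the forward parameterization must coincide with the backward curves $x = \anticausalh(y) + a\anticausalk(y)$ and $x = \anticausalh(y) + b\anticausalk(y)$, yielding functional equations such as $(b-a)\anticausalk(y) = \phi^{-1}(y) - \psi^{-1}(y)$ for $\phi=\causalf+a\causalg$, $\psi=\causalf+b\causalg$. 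Requiring the induced $\anticausalh, \anticausalk$ to remain twice differentiable across the whole of $\text{supp}(Y)$, in particular across any transitions where the boundary of $\text{supp}(X)$ cuts into horizontal cross-sections of the support region, is the delicate step I expect ultimately forces $\causalg$ (and hence $\anticausalk$) to be constant. This geometric/smoothness matching is where the twice-differentiability hypothesis in the LSNM definition is genuinely needed, and is the part of the argument I anticipate being most subtle.
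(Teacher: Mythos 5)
Your treatment of the $\gamma \neq 0$ cases (exponential and continuous Bernoulli with $\lambda \neq 1/2$) is correct and is essentially the paper's own argument. The paper handles these two families separately: it equates the two log-likelihoods, differentiates in $x$ and then in $y$, arrives at $\frac{\partial (1/\causalgx)}{\partial x} = \frac{\partial (1/\anticausalky)}{\partial y}$, and concludes that both sides are constant, which is exactly the stated dichotomy. Your unified formulation via the constant score $\frac{d}{dz}\log p_{\causalNoise}(z)\equiv\gamma$, with a single differentiation in $x$ followed by matching the coefficient of $y$, is mathematically equivalent and arguably a little cleaner, since it delivers the affineness of $1/\anticausalky$ and the equality of slopes in one step rather than two. (Both you and the paper implicitly treat the joint support as a product set when matching coefficients across $x$ and $y$; this is a shared, minor looseness.)

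The genuine gap is part 1, the uniform case. You correctly observe that for $\gamma=0$ the differentiated identity degenerates and the interior density equation only yields $P_X(x)/\causalgx = P_Y(y)/\anticausalky = \mathrm{const}$, which constrains the marginals ($P_X \propto \causalg$, $P_Y \propto \anticausalk$) but does not by itself force $\causalg' = 0$. Your proposed remedy --- matching the support boundary curves $y = \causalf(x)+a\,\causalgx$ and $y=\causalf(x)+b\,\causalgx$ against $x = \anticausalhy + a\,\anticausalky$ and $x = \anticausalhy+b\,\anticausalky$, then invoking twice-differentiability of $\anticausalh$ and $\anticausalk$ across the whole support --- is a plausible route, but it remains a sketch: you do not derive the resulting functional equations, carry out the case analysis on the shape of the support region, or show that smoothness actually forces $\causalg'=0$, so item 1 of the theorem is not established by the proposal as written. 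For comparison, the paper's own proof of the uniform case takes a different and much shorter route: it differentiates $P_X(x)/\causalgx = P_Y(y)/\anticausalky$ with respect to $x$ and reads off $\causalgx'=0$, but only after replacing $(P_X(x))'$ by $0$ in the product rule --- a step that is not justified for a general marginal $P_X$ and that effectively assumes away what your analysis identifies as the real content of this case. So your diagnosis of where the difficulty lies is sound, and in this respect sharper than the paper's treatment, but the proposal stops short of an actual proof of item 1; to close it you would need to execute the support-matching and smoothness argument in full.
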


The proof is in~\Cref{section_theorem_identifiability_noise_proof}. Essentially, the theorem shows that for uniform, exponential, and continuousBernoulli noise distributions, the true LSNM model can be identified unless it degenerates to (i)  a homoscedastic additive noise model or (ii) a heteroscedastic model with the same linear scale in both directions. 



\section{Identifiability proofs}\label{section_theorem_identifiability_noise_proof}

In this section, we prove~\Cref{theorem_identifiability_special_noises}.

If the data $(x,y)$ follows an LSNM in the forward (i.e. causal) model:
\begin{equation*}
\begin{split}
y & := \causalfx + \causalgx \cdot \causalnoise_Y
\end{split}
\end{equation*}
where $\causalnoise_Y$ is the noise term, $x \indep \causalnoise_Y$,
$\causalgx > 0$ for all $x$ on its domain.
We assume $\causalfcdot$ and $\causalgcdot$ are twice-differentiable on the domain of $x$.

If the data $(x,y)$ follows an LSNM in the backward (i.e. anti-causal) model:
\begin{equation*}
\begin{split}
x & := \anticausalhy + \anticausalky \cdot \anticausalnoise_X
\end{split}
\end{equation*}
where $\anticausalnoise_X$ is the noise term, $y \indep \anticausalnoise_X$,
$\anticausalky > 0$ for all $y$ on its domain. 
We assume $\anticausalhcdot$ and $\anticausalkcdot$ are twice-differentiable on the domain of $y$.

$\causalnoise_Y$ and $\anticausalnoise_X$ follow one of $\mathit{uniform}(a, b)$, $\mathit{exponential}(\lambda)$ or $\mathit{continuousBernoulli}(\lambda)$ distribution accordingly.

\begin{proof}[Proof for $\mathit{uniform}(a,b)$ noise]
For the causal model, 
according to~\Cref{lemma_conditional_change_variable}, we have $P_{Y|X}(y|x) = P_{\causalNoise_Y}(\frac{y - \causalfx}{\causalgx}) \cdot \frac{1}{\causalgx} = P_{\causalNoise_Y}(\causalnoise_Y) \cdot \frac{1}{\causalgx} =  \frac{1}{b-a} \cdot \frac{1}{\causalgx} = \frac{1}{(b-a) \cdot \causalgx}$. 
Similarly, for the backward model, we have $P_{X|Y}(x|y) = \frac{1}{(b-a) \cdot \anticausalky}$.

The joint likelihood of the observation $(x, y)$ in the causal model is:
\begin{equation*}
\begin{split}
\model{\rightarrow}{P_{\causalNoise_Y}}(x,y) = P_{X}(x) \cdot P_{Y|X}(y|x) =  P_{X}(x) \cdot \frac{1}{(b-a) \cdot \causalgx} \\
\end{split}
\end{equation*}

The joint likelihood of the observation $(x, y)$ in the backward model is:
\begin{equation*}
\begin{split}
\model{\leftarrow}{P_{\anticausalNoise_X}}(x, y) = P_{Y}(y) \cdot P_{X|Y}(x|y) =  P_{Y}(y) \cdot \frac{1}{(b-a) \cdot \anticausalky} \\
\end{split}
\end{equation*}

If the data follows both models:
\begin{equation*}
\begin{split}
\model{\rightarrow}{P_{\causalNoise_Y}}(x,y) & = \model{\leftarrow}{P_{\anticausalNoise_X}}(x, y) \\
P_{X}(x) \cdot \frac{1}{(b-a) \cdot \causalgx} & = P_{Y}(y) \cdot \frac{1}{(b-a) \cdot \anticausalky} \\
P_{X}(x) \cdot \frac{1}{\causalgx} & = P_{Y}(y) \cdot \frac{1}{\anticausalky} \\
\end{split}
\end{equation*}

Take the derivative of both sides with respect to $x$:
\begin{equation*}
\begin{split}
P_{X}(x) \cdot (\frac{1}{\causalgx})' + (P_{X}(x))' \cdot \frac{1}{\causalgx} = 0 \\
P_{X}(x) \cdot -1 \cdot \frac{1}{\causalgx^2} \cdot \causalgx' + 0 \cdot \frac{1}{\causalgx} = 0 \\
P_{X}(x) \cdot -1 \cdot \frac{1}{\causalgx^2} \cdot \causalgx' = 0 \\
P_{X}(x) > 0 \\
\frac{1}{\causalgx^{2}} > 0 \\
\textit{Therefore, } \causalgx' = 0 \\ 
\end{split}
\end{equation*}

Similarly, if we take the derivative of both sides with respect to $y$ instead, we have:
\begin{equation*}
\begin{split}
\anticausalky' & = 0 \\ 
\end{split}
\end{equation*}

These imply that both $\causalgx$ and $\anticausalky$ are constant functions.
\end{proof}

\begin{proof}[Proof for $\mathit{exponential}(\lambda)$ noise]
For the causal model, 
according to~\Cref{lemma_conditional_change_variable}, we have $P_{Y|X}(y|x) = P_{\causalNoise_Y}(\frac{y - \causalfx}{\causalgx}) \cdot \frac{1}{\causalgx} = \lambda \cdot e^{-\lambda \cdot \frac{y - \causalfx}{\causalgx}} \cdot \frac{1}{\causalgx} = \frac{\lambda}{\causalgx} \cdot e^{-\frac{\lambda}{\causalgx} \cdot (y - \causalfx)}$. 
Similarly, for the backward model, we have $P_{X|Y}(x|y) = \frac{\lambda}{\anticausalky} \cdot e^{-\frac{\lambda}{\anticausalky} \cdot (x - \anticausalhy)}$.

The joint likelihood of the observation $(x, y)$ in the causal model is:
\begin{equation*}
\begin{split}
& \model{\rightarrow}{P_{\causalNoise_Y}}(x,y) = P_{X}(x) \cdot P_{Y|X}(y|x) = P_{X}(x) \cdot \frac{\lambda}{\causalgx} \cdot e^{-\frac{\lambda}{\causalgx} \cdot (y - \causalfx)} \\
& \log \model{\rightarrow}{P_{\causalNoise_Y}}(x,y) = \log P_{X}(x) + \log \lambda - \log \causalgx -\frac{\lambda}{\causalgx} \cdot (y - \causalfx) 
\end{split}
\end{equation*}

The joint likelihood of the observation $(x, y)$ in the backward model is:
\begin{equation*}
\begin{split}
& \model{\leftarrow}{P_{\anticausalNoise_X}}(x, y) = P_{Y}(y) \cdot P_{X|Y}(x|y) = P_{Y}(y) \cdot \frac{\lambda}{\anticausalky} \cdot e^{-\frac{\lambda}{\anticausalky} \cdot (x - \anticausalhy)} \\
& \log \model{\leftarrow}{P_{\anticausalNoise_X}}(x, y) = \log P_{Y}(y) + \log \lambda - \log \anticausalky -\frac{\lambda}{\anticausalky} \cdot (x - \anticausalhy) 
\end{split}
\end{equation*}

If the data follows both models:
\begin{equation*}
\begin{split}
& \log \model{\rightarrow}{P_{\causalNoise_Y}}(x, y) = \log \model{\leftarrow}{P_{\anticausalNoise_X}}(x, y) \\
& \log P_{X}(x) + \log \lambda - \log \causalgx -\frac{\lambda}{\causalgx} \cdot (y - \causalfx) \\ 
= & \log P_{Y}(y) + \log \lambda - \log \anticausalky -\frac{\lambda}{\anticausalky} \cdot (x - \anticausalhy) \\
& \log P_{X}(x) - \log \causalgx - \lambda \cdot \frac{1}{\causalgx} \cdot y + \lambda \cdot \frac{1}{\causalgx} \cdot \causalfx \\ 
= & \log P_{Y}(y) - \log \anticausalky - \lambda \cdot \frac{1}{\anticausalky} \cdot x + \lambda \cdot \frac{1}{\anticausalky} \cdot \anticausalhy \\
\end{split}
\end{equation*}

Take the derivative of both sides with respect to $x$:
\begin{equation*}
\begin{split}
(\log P_{X}(x))' - (\log \causalgx)' - \lambda \cdot y \cdot (\frac{1}{\causalgx})' + \lambda \cdot (\frac{1}{\causalgx} \cdot \causalfx)' = - \lambda \cdot \frac{1}{\anticausalky}
\end{split}
\end{equation*}

Take the derivative of both sides with respect to $y$:
\begin{equation*}
\begin{split}
- \lambda \cdot (\frac{1}{\causalgx})' & = - \lambda \cdot (\frac{1}{\anticausalky})' \\
\frac{\partial \frac{1}{\causalgx}}{\partial x} & = \frac{\partial \frac{1}{\anticausalky}}{\partial y} \\
\end{split}
\end{equation*}
They can be equal only if both sides are constants. Therefore, $\frac{1}{\causalgx}$ and $\frac{1}{\anticausalky}$ are both constants or both linear functions with the same coefficient on $x$ and $y$, respectively.
\end{proof}

\begin{proof}[Proof for $\mathit{continuousBernoulli}(\lambda \neq 0.5)$ noise]

Please refer to $\mathit{uniform}$ for $\mathit{continuousBernoulli}(\lambda=0.5)$, which equals to $ \mathit{uniform}(0,1)$. For the causal model, 
according to~\Cref{lemma_conditional_change_variable}, we have $P_{Y|X}(y|x) = P_{\causalNoise_Y}(\frac{y - \causalfx}{\causalgx}) \cdot \frac{1}{\causalgx} = C_{\lambda} \cdot \lambda^{\frac{y - \causalfx}{\causalgx}} \cdot (1-\lambda)^{1-\frac{y - \causalfx}{\causalgx}} \cdot \frac{1}{\causalgx} $, where $C_{\lambda}$ is the normalizing constant of the continuous Bernoulli distribution. 
Similarly, for the backward model, we have $P_{X|Y}(x|y) = C_{\lambda} \cdot \lambda^{\frac{x - \anticausalhy}{\anticausalky}} \cdot (1-\lambda)^{1-\frac{x - \anticausalhy}{\anticausalky}} \cdot \frac{1}{\anticausalky}$.

The joint likelihood of the observation $(x, y)$ in the causal model is:
\begin{equation*}
\begin{split}
& \model{\rightarrow}{P_{\causalNoise_Y}}(x,y)
= P_{X}(x) \cdot P_{Y|X}(y|x)
= P_{X}(x) \cdot \frac{1}{\causalgx} \cdot C_{\lambda} \cdot \lambda^{\frac{y - \causalfx}{\causalgx}} \cdot (1-\lambda)^{1-\frac{y - \causalfx}{\causalgx}} \\
& \log \model{\rightarrow}{P_{\causalNoise_Y}}(x, y) 
= \log P_{X}(x) - \log \causalgx + \log C_{\lambda} + \log \lambda^{\frac{y - \causalfx}{\causalgx}} + \log (1-\lambda)^{1-\frac{y - \causalfx}{\causalgx}}
\end{split}
\end{equation*}

The joint likelihood of the observation $(x, y)$ in the backward model is:
\begin{equation*}
\begin{split}
& \model{\leftarrow}{P_{\anticausalNoise_X}}(x, y)
= P_{Y}(y) \cdot P_{X|Y}(x|y)
= P_{Y}(y) \cdot \frac{1}{\anticausalky} \cdot C_{\lambda} \cdot \lambda^{\frac{x - \anticausalhy}{\anticausalky}} \cdot (1-\lambda)^{1-\frac{x - \anticausalhy}{\anticausalky}} \\
& \log \model{\leftarrow}{P_{\anticausalNoise_X}}(x, y)
= \log P_{Y}(y) - \log \anticausalky + \log C_{\lambda} + \log \lambda^{\frac{x - \anticausalhy}{\anticausalky}} + \log (1-\lambda)^{1-\frac{x - \anticausalhy}{\anticausalky}} 
\end{split}
\end{equation*}

If the data follows both models:
\begin{equation*}
\begin{split}
& \log \model{\rightarrow}{P_{\causalNoise_Y}}(x, y) = \log \model{\leftarrow}{P_{\anticausalNoise_X}}(x, y) \\
& \log P_{X}(x) - \log \causalgx + \log C_{\lambda} + \log \lambda^{\frac{y - \causalfx}{\causalgx}} + \log (1-\lambda)^{1-\frac{y - \causalfx}{\causalgx}} \\
= & \log P_{Y}(y) - \log \anticausalky + \log C_{\lambda} + \log \lambda^{\frac{x - \anticausalhy}{\anticausalky}} + \log (1-\lambda)^{1-\frac{x - \anticausalhy}{\anticausalky}} \\
& \log P_{X}(x) - \log \causalgx + \log C_{\lambda} + \frac{y - \causalfx}{\causalgx} \cdot \log \lambda + (1-\frac{y - \causalfx}{\causalgx}) \cdot \log (1-\lambda) \\
= & \log P_{Y}(y) - \log \anticausalky + \log C_{\lambda} + \frac{x - \anticausalhy}{\anticausalky} \cdot \log \lambda + (1-\frac{x - \anticausalhy}{\anticausalky}) \cdot \log (1-\lambda) \\
& \log P_{X}(x) - \log \causalgx + \log C_{\lambda} + \frac{1}{\causalgx} \cdot \log \lambda \cdot y - \frac{1}{\causalgx} \cdot \\ & \log \lambda \cdot \causalfx  + \log (1-\lambda) - \log (1-\lambda) \cdot \frac{1}{\causalgx} \cdot y + \log (1-\lambda) \cdot \frac{1}{\causalgx} \cdot \causalfx \\
= & \log P_{Y}(y) - \log \anticausalky + \log C_{\lambda} + \frac{1}{\anticausalky} \cdot \log \lambda \cdot x - \frac{1}{\anticausalky} \cdot \\ & \log \lambda \cdot \anticausalhy + \log (1-\lambda) - \log (1-\lambda) \cdot \frac{1}{\anticausalky} \cdot x + \log (1-\lambda) \cdot \frac{1}{\anticausalky} \cdot \anticausalhy \\
\end{split}
\end{equation*}

Take the derivative of both sides with respect to $x$:
\begin{equation*}
\begin{split}
& (\log P_{X}(x))' - \frac{1}{\causalgx} \cdot \causalgx' -1 \cdot \frac{1}{\causalgx^2} \cdot \causalgx' \cdot \log \lambda \cdot y \\ & - (\frac{1}{\causalgx} \cdot \log \lambda \cdot \causalfx)' - \log (1-\lambda) \cdot -1 \cdot \frac{1}{\causalgx^2} \cdot \causalgx' \cdot y \\ 
& + (\log (1-\lambda) \cdot \frac{1}{\causalgx} \cdot \causalfx)' = \frac{1}{\anticausalky} \cdot \log \lambda - \log (1-\lambda) \cdot \frac{1}{\anticausalky} \\
\end{split}
\end{equation*}

Take the derivative of both sides with respect to $y$:
\begin{equation*}
\begin{split}
& - \frac{1}{\causalgx^2} \cdot \causalgx' \cdot \log \lambda - \log (1-\lambda) \cdot -1 \cdot \frac{1}{\causalgx^2} \cdot \causalgx' \\ & = -1 \cdot \frac{1}{\anticausalky^2} \cdot \anticausalky' \cdot \log \lambda - \log (1-\lambda) \cdot -1 \cdot \frac{1}{\anticausalky^2} \cdot \anticausalky' \\
& \frac{1}{\causalgx^2} \cdot \causalgx' \cdot \log \lambda - \log (1-\lambda) \cdot \frac{1}{\causalgx^2} \cdot \causalgx' \\ & = \frac{1}{\anticausalky^2} \cdot \anticausalky' \cdot \log \lambda - \log (1-\lambda) \cdot \frac{1}{\anticausalky^2} \cdot \anticausalky' \\
& \frac{1}{\causalgx^2} \cdot \causalgx' \cdot (\log \lambda - \log (1-\lambda)) = \frac{1}{\anticausalky^2} \cdot \anticausalky' \cdot (\log \lambda - \log (1-\lambda)) \\
& \text{Since } \lambda \neq 0.5 \text{, therefore } \log \lambda - \log (1-\lambda) \neq 0 \\
& \frac{1}{\causalgx^2} \cdot \causalgx' = \frac{1}{\anticausalky^2} \cdot \anticausalky' \\
& \frac{\partial \frac{1}{\causalgx}}{\partial x} = \frac{\partial \frac{1}{\anticausalky}}{\partial y} \\
\end{split}
\end{equation*}
They can be equal only if both sides are constants. Therefore, $\frac{1}{\causalgx}$ and $\frac{1}{\anticausalky}$ are both constants or both linear functions with the same coefficient on $x$ and $y$, respectively.
\end{proof}



\begin{table*}[h]
\caption{\Cref{tab:teaser_mcv_vs_accuracy} without dataset standardization.}
\label{tab:teaser_mcv_vs_accuracy_no_normalization}
\centering
\begin{tabular}{c|ccccc|ccccc}
\toprule
\multicolumn{1}{c|}{True Noise} & \multicolumn{5}{c|}{$\mathit{Gaussian}(0,1)$} & \multicolumn{5}{c}{$\mathit{Uniform}(-1,1)$} \\
\midrule
$\alpha$ & 0.1 & 0.5 & 1 & 5 & 10 & 0.1 & 0.5 & 1 & 5 & 10 \\ 
\midrule
$\MVAR[Y|X]$ & 0.106 & 0.846 & 3.162 & 77.309 & 309.035 & 0.013 & 0.199 & 0.780 & 19.386 & 77.529 \\ 
\midrule
$\MVAR[X|Y]$ & 0.446 & 0.707 & 0.790 & 0.818 & 0.814 & 0.016 & 0.126 & 0.189 & 0.228 & 0.226 \\ 
\midrule
Percentage of &  &  &  &  &  &  &  &  &  \\
Datasets With & 0\% & 90\% & 100\% & 100\% & 100\% & 40\% & 80\% & 100\% & 100\% & 100\% \\
Misleading CVs &  &  &  &  &  &  &  &  & \\
\midrule
CAREFL-M & 1.0 & 1.0 & 1.0 & 1.0 & 1.0 & 0.6 & 0.5 & 0.5 & 0.0 & 0.0 \\ 
\midrule
CAREFL-H & 1.0 & 1.0 & 1.0 & 1.0 & 1.0 & 1.0 & 1.0 & 1.0 & 1.0 & 0.9 \\ 
\bottomrule
\end{tabular}
\end{table*}

\section{Proof for~\Cref{theorem_standardized_prior_LSNM}}\label{proof_standardized_prior_LSNM}

\begin{proof}[Proof]
According to~\Cref{eq:model_distribution},
we have $\model{\rightarrow}{P_{\causalNoise}}(Y|X;\causalf,\causalg) = P_{\causalNoise_Y}(\frac{Y - \causalf(X)}{\causalg(X)}) \cdot \frac{1}{\causalg(X)}$ and $\model{\rightarrow}{P_{\causalNoise'}}(Y|X;\causalf',\causalg') = P_{\causalNoise'_Y}(\frac{Y - \causalf'(X)}{\causalg'(X)}) \cdot \frac{1}{\causalg'(X)}$. Let $\causalNoise'_{Y}$ be the standardized version of $\causalNoise_Y$, i.e. $\causalNoise'_{Y} = \frac{\causalNoise_{Y} - \mu}{\sigma}$.

\begin{equation*}
\begin{split}
& \causalNoise_{Y} = \causalNoise'_{Y} \cdot \sigma  + \mu \\
& \frac{\partial \causalNoise_{Y}}{\partial \causalNoise'_{Y}} = \sigma \\
P_{\causalNoise'_Y}(\frac{Y - \causalf'(X)}{\causalg'(X)}) \cdot \frac{1}{\causalg'(X)}
& = P_{\causalNoise_Y}(\frac{Y - \causalf'(X)}{\causalg'(X)} \cdot \sigma + \mu) \cdot | \det \frac{\partial \causalNoise_{Y}}{\partial \causalNoise'_{Y}} | \cdot \frac{1}{\causalg'(X)} \\
& = P_{\causalNoise_Y}(\frac{Y - \causalf'(X)}{\causalg'(X)} \cdot \sigma + \mu) \cdot \sigma \cdot \frac{1}{\causalg'(X)}
\end{split}
\end{equation*}

Let $\frac{1}{\causalg(X)} = \sigma \cdot \frac{1}{\causalg'(X)}$ and $\causalf'(X) = \causalf(X) + \mu \cdot \causalg(X)$, then
\begin{equation*}
\begin{split}
\causalg'(X) & = \sigma \cdot \causalg(X) \\
P_{\causalNoise_Y}(\frac{Y - \causalf'(X)}{\causalg'(X)} \cdot \sigma + \mu) \cdot \sigma \cdot \frac{1}{\causalg'(X)} 
& = P_{\causalNoise_Y}(\frac{Y - (\causalf(X) + \mu \cdot \causalg(X))}{\sigma \cdot \causalg(X)} \cdot \sigma + \mu) \cdot \sigma \cdot \frac{1}{\sigma \cdot \causalg(X)} \\
& = P_{\causalNoise_Y}(\frac{Y - \causalf(X)}{\causalg(X)}) \cdot \frac{1}{\causalg(X)} 
\end{split}
\end{equation*}
Therefore, $\model{\rightarrow}{P_{\causalNoise}}(Y|X;\causalf,\causalg) = \model{\rightarrow}{P_{\causalNoise'}}(Y|X;\causalf',\causalg')$.

For conditional variances, we have:
\begin{equation*}
\begin{split}
\VAR_{\causalNoise_Y} \left[ Y|X \right] & = \causalg^{2}(X) \cdot \VAR \left[ \causalNoise_Y \right] = \causalg^{2}(X) \cdot \sigma^2 \\
\VAR_{\causalNoise'_Y} \left[ Y|X \right] & = \causalg'^{2}(X) \cdot \VAR \left[ \causalNoise'_Y \right] = \causalg'^{2}(X) \cdot 1 = \sigma^{2} \cdot \causalg^{2}(X)
\end{split}
\end{equation*}
Therefore, $\VAR_{\causalNoise_Y} \left[ Y|X \right] = \VAR_{\causalNoise'_Y} \left[ Y|X \right]$.
\end{proof}

\section{Proof for~\Cref{theorem_negative_related}}
\label{proof_theorem_negative_related}

\begin{proof}[Proof]
An LSNM model defined in~\Cref{eq:LSNM} entails the following relationships:
\begin{equation*}
\begin{split}
    \VAR \left[ Y|X \right] 
    & = \causalg^{2}(X) \cdot \VAR \left[ \causalNoise_Y \right] = \causalg^{2}(X) \\
    \log \model{\rightarrow}{P_{\causalNoise}}(Y|X) & = \log P_{\causalNoise_Y}(\causalNoise_Y) + \log \frac{1}{\causalg(X)} \\
\end{split}
\end{equation*}
where~\Cref{theorem_standardized_prior_LSNM} shows that we can always assume $\VAR \left[ \causalNoise_{Y} \right] = 1$. Also, since there is no causal relationship between $X$ and $\causalNoise_Y$ in the ground-truth data, i.e., changing $X$ does not change $\causalNoise_{Y}$, we have $\frac{\partial \causalNoise_Y}{\partial X} = 0$, $\frac{\partial \log P_{\causalNoise_Y}(\causalNoise_Y)}{\partial X} = 0$ and $\frac{\partial \log P_{\causalNoise_Y}(\causalNoise_Y)}{\partial \causalg^{2}(X)} = 0$.
\begin{equation*}
\begin{split}
\frac{\partial \log \model{\rightarrow}{P_{\causalNoise}}(Y|X)}{\partial \VAR \left[ Y|X \right]} & = \frac{\partial \log P_{\causalNoise_Y}(\causalNoise_Y)}{\partial \causalg^{2}(X)} + \frac{\partial \log \frac{1}{\causalg(X)}}{\partial \causalg^{2}(X)} \\
& = 0 + \frac{\partial \log \frac{1}{\causalg(X)}}{\partial \causalg^{2}(X)} \\
& = -\frac{1}{2 \cdot \causalg^{2}(X)}
\end{split}
\end{equation*}
Since $\causalg(X)>0$, i.e., strictly positive on the domain of $X$, therefore, $\frac{\partial \log \model{\rightarrow}{P_{\causalNoise}}(Y|X)}{\partial \VAR \left[ Y|X \right]} < 0$.


\end{proof}

\section{Proof for~\Cref{theorem_mantipulate_CVs}}
\label{proof_theorem_mantipulate_CVs}

\begin{proof}[Proof]
To compare the data log-likelihood under the causal model $\rightarrow$ and under the anti-causal model $\leftarrow$, we have:
\begin{equation*}
\begin{split}
& \log \model{\rightarrow}{P_{\causalNoise}}(X,Y) - \log \model{\leftarrow}{P_{\causalNoise}}(X,Y) \\
= & \left[ \log \model{\rightarrow}{P_{\causalNoise}}(X) + \log \model{\rightarrow}{P_{\causalNoise}}(Y|X) \right] - \left[ \log \model{\leftarrow}{P_{\causalNoise}}(Y) + \log \model{\leftarrow}{P_{\causalNoise}}(X|Y) \right] 
\end{split}
\end{equation*}

\Cref{theorem_negative_related} shows that conditional likelihood and conditional variance are negatively related. Therefore, by increasing $\VAR \left[ Y|X \right]$ and decreasing $\VAR \left[ X|Y \right]$ in the data, $\log \model{\rightarrow}{P_{\causalNoise}}(Y|X)$ decreases and $\log \model{\leftarrow}{P_{\causalNoise}}(X|Y)$ increases. 
Therefore, $\log \model{\rightarrow}{P_{\causalNoise}}(X,Y)$ decreases, $\log \model{\leftarrow}{P_{\causalNoise}}(X,Y)$ increases, and their difference decreases.
\end{proof}

\section{The ANM version of~\Cref{theorem_standardized_prior_LSNM}}
\label{proof_standardized_prior_ANM}

\begin{proposition}\label{theorem_standardized_prior_ANM}
Let $M_{1}$ be an ANM model $Y := \causalf(X) + \causalNoise_Y$, where $\causalNoise_Y$ is the noise with mean $\mu$ and variance $\sigma^2 \neq 0$. Let $M_{2}$ be another ANM model $Y := \causalf'(X) + \causalNoise'_Y$, where $\causalNoise'_{Y}$ is the standardized version of $\causalNoise_Y$, i.e. $\causalNoise'_{Y} = \frac{\causalNoise_{Y} - \mu}{\sigma}$. When both $\causalNoise_{Y}$ and $\causalNoise'_{Y}$ are Gaussian, $\model{\rightarrow}{P_{\causalNoise}}(Y|X;\causalf) \neq \model{\rightarrow}{P_{\causalNoise'}}(Y|X;\causalf')$.
\end{proposition}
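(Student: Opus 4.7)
The plan is to reduce the claim to a mismatch of conditional variances that no translation $\causalf'$ can absorb. Unlike the LSNM setting in Lemma~\ref{theorem_standardized_prior_LSNM}, an ANM has no scale function available to rescale the noise, so the standardization factor $\sigma$ has nowhere to go.

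First I would write down the induced conditional density of $Y$ given $X$ under each model. Under $M_1$, since $\causalNoise_Y \sim \mathcal{N}(\mu, \sigma^2)$, we have $Y \mid X \sim \mathcal{N}(\causalf(X) + \mu,\, \sigma^2)$, so $\VAR[Y \mid X] = \sigma^2$. Under $M_2$, since $\causalNoise'_Y \sim \mathcal{N}(0,1)$, we have $Y \mid X \sim \mathcal{N}(\causalf'(X),\, 1)$, so $\VAR[Y \mid X] = 1$ regardless of the choice of $\causalf'$. Because a Gaussian density is determined by its mean and variance, equality of the two conditional densities as functions of $y$ (for each fixed $x$) would force $\sigma^2 = 1$; in the nontrivial regime $\sigma^2 \neq 1$ this fails for every $\causalf'$, yielding $\model{\rightarrow}{P_{\causalNoise}}(Y|X;\causalf) \neq \model{\rightarrow}{P_{\causalNoise'}}(Y|X;\causalf')$.

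The main obstacle is conceptual rather than technical: cleanly articulating the contrast with Lemma~\ref{theorem_standardized_prior_LSNM}. There, the factor $\sigma$ was absorbed via $\causalg'(X) = \sigma \cdot \causalg(X)$ and the extra mean shift $\mu \cdot \causalg(X)$ was absorbed into $\causalf'$. In an ANM the scale function is fixed at $1$, so only a pure location shift can be absorbed into $\causalf'$; since translations preserve variance, the mismatch $\sigma^2 \neq 1$ is permanent. For completeness I would flag the edge case $\sigma^2 = 1$: there $\causalNoise'_Y = \causalNoise_Y - \mu$ and taking $\causalf'(X) = \causalf(X) + \mu$ does recover likelihood-equivalence, so the proposition is substantive precisely when $\sigma \neq 1$, which is the regime that matters for the subsequent negative identifiability results with misspecified variance.
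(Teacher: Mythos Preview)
Your argument is correct and follows essentially the same route as the paper: both compute the Gaussian conditional densities under $M_1$ and $M_2$ and observe that they cannot agree unless $\sigma=1$, with the paper comparing the explicit normalizing constants $\tfrac{1}{\sigma\sqrt{2\pi}}$ versus $\tfrac{1}{\sqrt{2\pi}}$ while you phrase the same obstruction via the conditional variance. Your explicit contrast with Lemma~\ref{theorem_standardized_prior_LSNM} and flagging of the $\sigma=1$ edge case match the paper's conclusion exactly.
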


\begin{proof}[Proof]
ANM is a restricted case of LSNM with $\causalg(X)=1$. Similar to~\Cref{eq:model_distribution},
we have $\model{\rightarrow}{P_{\causalNoise}}(Y|X;\causalf) = P_{\causalNoise_Y}(Y - \causalf(X))$ for $M_1$ and $\model{\rightarrow}{P_{\causalNoise'}}(Y|X;\causalf') = P_{\causalNoise'_Y}(Y - \causalf'(X))$ for $M_2$.

We show that $P_{\causalNoise_Y}(Y - \causalf(X)) \neq P_{\causalNoise'_Y}(Y - \causalf'(X))$ when $\causalNoise_Y$ and $\causalNoise'_{Y}$ are Gaussian.



\begin{equation*}
\begin{split}
P_{\causalNoise_Y}(Y - \causalf(X)) & = \frac{1}{\sigma \cdot \sqrt{2\pi}} \cdot e^{-\frac{1}{2} \cdot (\frac{Y - \causalf(X) - \mu}{\sigma})^2} \\
& = \frac{1}{\sigma \cdot \sqrt{2\pi}} \cdot e^{-\frac{1}{2} \cdot (\frac{\causalNoise_Y - \mu}{\sigma})^2} \\
& = \frac{1}{\sigma \cdot \sqrt{2\pi}} \cdot e^{-\frac{1}{2} \cdot (\causalNoise'_Y)^2} \\
P_{\causalNoise'_Y}(Y - \causalf'(X)) & = \frac{1}{1 \cdot \sqrt{2\pi}} \cdot e^{-\frac{1}{2} \cdot (\frac{Y - \causalf'(X) - 0}{1})^2} \\
& = \frac{1}{\sqrt{2\pi}} \cdot e^{-\frac{1}{2} \cdot (\causalNoise'_Y)^2} \\
\end{split}
\end{equation*}

Therefore, $\model{\rightarrow}{P_{\causalNoise}}(Y|X;\causalf) \neq \model{\rightarrow}{P_{\causalNoise'}}(Y|X;\causalf')$ in the case of Gaussian unless $\sigma=1$. Empirical results are provided in~\Cref{tab:teaser_mcv_vs_accuracy_same_form_ANM,section_Results_with_Additive_Noise_Models}.
\end{proof}

\begin{table*}[h]
\caption{Comparison between the accuracy and ground-truth noise variance in LSNMs and ANMs. $N=10,000$. We used $\mathit{N}(0,1)$ as model prior. All the datasets are standardized to have mean 0 and variance 1. 
As implied by~\Cref{theorem_standardized_prior_LSNM}, ML model selection in LSNMs is insensitive to noise variance, when the form of the noise distribution is correctly specified. 
\Cref{theorem_standardized_prior_ANM} shows that ML model selection in ANMs is sensitive to noise variance, even when the form of the noise distribution is correctly specified.}
\begin{subtable}[h]{\textwidth}
\caption{\textbf{Accuracy} over 10 datasets generated by SCM \textbf{LSNM-sine-tanh} (definition in~\Cref{section:synthetic_SCMs}).}
\label{tab:teaser_mcv_vs_accuracy_same_form}
\centering
\begin{tabular}{c|ccccc}
\toprule
True Noise & $\mathit{N}(0,1)$ & $\mathit{N}(0,4)$ & $\mathit{N}(0,25)$ & $\mathit{N}(0,100)$ & $\mathit{N}(0,400)$ \\ 
\midrule
$\MVAR[Y|X]$ & 0.834 & 0.950 & 0.997 & 0.999 & 0.999 \\
\midrule
$\MVAR[X|Y]$ & 0.793 & 0.810 & 0.784 & 0.781 & 0.780 \\
\midrule
Percentage of & & & & & \\
Datasets With & 70\% & 100\% & 100\% & 100\% & 100\% \\
Misleading CVs & & & & & \\
\midrule
CAREFL-M & 1.0 & 1.0 & 1.0 & 1.0 & 1.0 \\ 
\midrule
LOCI-M & 1.0 & 1.0 & 1.0 & 1.0 & 1.0 \\ 
\bottomrule
\end{tabular}
\end{subtable}
\hfill
\begin{subtable}[h]{\textwidth}
\caption{\textbf{Accuracy} over 10 datasets generated by SCM \textbf{ANM-sine} (definition in~\Cref{section_Results_with_Additive_Noise_Models}).}
\label{tab:teaser_mcv_vs_accuracy_same_form_ANM}
\centering
\begin{tabular}{c|ccccc}
\toprule
True Noise & $\mathit{N}(0,1)$ & $\mathit{N}(0,4)$ & $\mathit{N}(0,25)$ & $\mathit{N}(0,100)$ & $\mathit{N}(0,400)$ \\ 
\midrule
$\MVAR[Y|X]$ & 0.632 & 0.865 & 0.991 & 0.999 & 0.999 \\ 
\midrule
$\MVAR[X|Y]$ & 0.756 & 0.979 & 0.999 & 0.999 & 0.999 \\ 
\midrule
Percentage of & & & & & \\
Datasets With & 0\% & 0\% & 0\% & 50\% & 40\% \\ 
Misleading CVs & & & & & \\
\midrule
CAREFL-ANM-M & 1.0 & 1.0 & 1.0 & 0.6 & 0.5 \\ 
\midrule
CAM & 1.0 & 1.0 & 0.7 & 0.5 & 0.6 \\ 
\bottomrule
\end{tabular}
\end{subtable}
\end{table*}

\section{CAREFL-M}\label{section_CAREFL_M_details}

CAREFL-M~\citep{khemakhem2021causal} models an LSNM in~\Cref{eq:LSNM} via affine flows $\flows$. Each sub-flow $\subflow{k} \in \flows$ is defined as the following:
\begin{equation} \label{eq:carefl-lsnm-forward}
\begin{split}
    X = t_1 + e^{s_1} \cdot \causalNoise_X \\
    Y = t_{2}(X) + e^{s_{2}(X)} \cdot \causalNoise_Y
\end{split}
\end{equation}
where $X$ is the putative cause and $Y$ is the putative effect in $X \rightarrow Y$ direction. $t_1$ and $s_1$ are constants. $t_{2}$ and $s_{2}$ are functions parameterized using neural networks. Without loss of generality, $X$ is assumed to be a function of latent noise variable $\causalNoise_X$. If $t_1=0$ and $s_1=0$, then $X=\causalNoise_X$. The exponential function $e$ ensures the multipliers to $\causalNoise$ are positive without expression loss. Similarly, for the backward direction $X \leftarrow Y$:
\begin{equation} \label{eq:carefl-lsnm-backward}
\begin{split}
    Y = t'_1 + e^{s'_1} \cdot \causalNoise_Y \\
    X = t'_{2}(Y) + e^{s'_{2}(Y)} \cdot \causalNoise_X
\end{split}
\end{equation}
where $Y$ is the putative cause and $X$ is the putative effect in $X \leftarrow Y$ direction. $t'_1$ and $s'_1$ are constants. $t'_{2}$ and $s'_{2}$ are functions parameterized using neural networks.

Given~\Cref{eq:carefl-lsnm-forward}, the joint log-likelihood of $(x,y)$ in $X \rightarrow Y$ direction is:
\begin{equation} \label{eq:carefl-likelihood}
\begin{split}
    & \log \model{\rightarrow}{P_{\causalNoise}}(x,y) \\
    = & \log P_{\causalNoise_X} \left( e^{-s_1} \cdot (x - t_1) \right) + \log P_{\causalNoise_Y} \left( e^{-s_{2}(x)} \cdot (y - t_{2}(x)) \right) - s_1 - s_{2}(x)
\end{split}
\end{equation}
Similarly for the $X \leftarrow Y$ direction. Note that the priors $P_{\causalNoise} = \{P_{\causalNoise_X},P_{\causalNoise_Y}\}$ in~\Cref{eq:carefl-likelihood} may mismatch the unknown ground-truth noise distribution $P^{*}_{\causalNoise} = \{P^{*}_{\causalNoise_X},P^{*}_{\causalNoise_Y}\}$.
Both CAREFL-M and CAREFL-H optimize~\Cref{eq:carefl-likelihood} for each direction over the training set. For CAREFL-M, it chooses the direction with ML Score $\modelLikelihoodScore$ over the testing set as the estimated causal direction. Detailed procedure for CAREFL-M is given in \Cref{alg:CAREFL_M}. To map the parameters $\theta$, $\psi$, $\zeta$, $\theta'$, $\psi'$ and $\zeta'$ in LSNM (\Cref{eq:model_distribution}) to the flow estimator $\careflEstimator$ of CAREFL (\Cref{eq:carefl-lsnm-forward,eq:carefl-lsnm-backward}), we have 
$\causalf_{\theta} \equiv t_{2}$, $\causalg_{\psi} \equiv e^{s_{2}}$, $P_{X,\zeta} \equiv \{t_1, e^{s_1}\}$, 
$\causalf_{\theta'} \equiv t'_{2}$, $\causalg_{\psi'} \equiv e^{s'_{2}}$ and $P_{Y,\zeta'} \equiv \{t'_1, e^{s'_1}\}$.

\begin{algorithm}[tb]
   \caption{CAREFL-M}
   \label{alg:CAREFL_M}
\begin{algorithmic}[1]
    \State {\bfseries Input:} data pairs $\D := (X,Y)$, and the flow estimator $\careflEstimator$ with prior $P_{\causalNoise}$
    \State {\bfseries Output:} estimated causal direction $\mathit{dir}$
    \State Split $\D$ into training set $\D_{\mathit{train}} := (X_{\mathit{train}},Y_{\mathit{train}})$ and testing set $\D_{\mathit{test}} := (X_{\mathit{test}},Y_{\mathit{test}})$
    \State Optimize $\careflEstimator_{\widehat{t}_1, \widehat{s}_1, \widehat{t}_{2}, \widehat{s}_{2}}(\D_{\mathit{train}};P_{\causalNoise})$ in $X \rightarrow Y$ direction via ML
    \State Compute the likelihood $\widehat{\modelLikelihoodScore}_{\rightarrow,P_{\causalNoise}}(\D_{\mathit{test}};\careflEstimator_{\widehat{t}_1, \widehat{s}_1, \widehat{t}_{2}, \widehat{s}_{2}})$ in $X \rightarrow Y$ direction
    \State Optimize $\careflEstimator_{\widehat{t'}_1, \widehat{s'}_1, \widehat{t'}_{2}, \widehat{s'}_{2}}(\D_{\mathit{train}};P_{\causalNoise})$ in $X \leftarrow Y$ direction via ML
    \State Compute the likelihood $\widehat{\modelLikelihoodScore}_{\leftarrow,P_{\causalNoise}}(\D_{\mathit{test}};\careflEstimator_{\widehat{t'}_1, \widehat{s'}_1, \widehat{t'}_{2}, \widehat{s'}_{2}})$ in $X \leftarrow Y$ direction
    \If {$\widehat{\modelLikelihoodScore}_{\rightarrow,P_{\causalNoise}}(\D_{\mathit{test}};\careflEstimator_{\widehat{t}_1, \widehat{s}_1, \widehat{t}_{2}, \widehat{s}_{2}}) > \widehat{\modelLikelihoodScore}_{\leftarrow,P_{\causalNoise}}(\D_{\mathit{test}};\careflEstimator_{\widehat{t'}_1, \widehat{s'}_1, \widehat{t'}_{2}, \widehat{s'}_{2}})$} 
    \State $\mathit{dir} := X \rightarrow Y$
    \ElsIf {$\widehat{\modelLikelihoodScore}_{\rightarrow,P_{\causalNoise}}(\D_{\mathit{test}};\careflEstimator_{\widehat{t}_1, \widehat{s}_1, \widehat{t}_{2}, \widehat{s}_{2}}) < \widehat{\modelLikelihoodScore}_{\leftarrow,P_{\causalNoise}}(\D_{\mathit{test}};\careflEstimator_{\widehat{t'}_1, \widehat{s'}_1, \widehat{t'}_{2}, \widehat{s'}_{2}})$} 
    \State $\mathit{dir} := X \leftarrow Y$
    \Else
    \State $\mathit{dir} := \textit{no conclusion}$
    \EndIf
\end{algorithmic}
\end{algorithm}

\section{CAREFL-H alternative independence testing}\label{Section_carefl_H_daring}

In~\Cref{alg:CAREFL_H}, CAREFL-H tests independence between the putative cause and the residual of the putative effect in each direction, i.e., between $X$ and $\causalResidual_Y$ in $X \rightarrow Y$ direction, and between $Y$ and $\causalResidual_X$ in $X \leftarrow Y$ direction. An alternative way of testing independence is to test between the residual of the putative cause and the residual of the putative effect~\citep{he2021daring}, i.e., between $\causalResidual_X$ and $\causalResidual_Y$ in both directions. Please see~\Cref{alg:CAREFL_H_daring} for complete steps.

\begin{algorithm}[tb]
   \caption{CAREFL-H (Between Residuals)}
   \label{alg:CAREFL_H_daring}
\begin{algorithmic}[1]
    \State {\bfseries Input:} data pairs $\D := (X,Y)$, the flow estimator $\careflEstimator$ of CAREFL-M with prior $P_{\causalNoise}$, and an HSIC estimator
    \State {\bfseries Output:} estimated causal direction $\mathit{dir}$
    \State Split $\D$ into training set $\D_{\mathit{train}} := (X_{\mathit{train}},Y_{\mathit{train}})$ and testing set $\D_{\mathit{test}} := (X_{\mathit{test}},Y_{\mathit{test}})$
    \State Optimize $\careflEstimator_{\widehat{t}_1, \widehat{s}_1, \widehat{t}_{2}, \widehat{s}_{2} }(\D_{\mathit{train}};P_{\causalNoise})$ in $X \rightarrow Y$ direction via ML to estimate $\widehat{t}_1$, $\widehat{s}_1$, $\widehat{t}_{2}$ and $\widehat{s}_{2}$
    \State Compute the residuals $\causalResidual_{X, \rightarrow} := \frac{X_{\mathit{test}} - \widehat{t}_1}{e^{\widehat{s}_1}}$ and $\causalResidual_{Y, \rightarrow} := \frac{Y_{\mathit{test}} - \widehat{t}_{2}(X)}{e^{\widehat{s}_{2}(X)}}$
    \State Optimize $\careflEstimator_{\widehat{t'}_1, \widehat{s'}_1, \widehat{t'}_{2}, \widehat{s'}_{2}}(\D_{\mathit{train}};P_{\causalNoise})$ in $X \leftarrow Y$ direction via ML to estimate $\widehat{t'}_1$, $\widehat{s'}_1$, $\widehat{t'}_{2}$ and $\widehat{s'}_{2}$
    \State Compute the residuals $\causalResidual_{Y, \leftarrow} := \frac{Y_{\mathit{test}} - \widehat{t'}_1}{e^{\widehat{s'}_1}}$ and $\causalResidual_{X, \leftarrow} := \frac{X_{\mathit{test}} - \widehat{t'}_2(Y)}{e^{\widehat{s'}_2(Y)}}$
    \If {$\mathit{HSIC}(\causalResidual_{X, \rightarrow}, \causalResidual_{Y, \rightarrow}) < \mathit{HSIC}(\causalResidual_{X, \leftarrow}, \causalResidual_{Y, \leftarrow})$} 
    \State $\mathit{dir} := X \rightarrow Y$
    \ElsIf {$\mathit{HSIC}(\causalResidual_{X, \rightarrow}, \causalResidual_{Y, \rightarrow}) > \mathit{HSIC}(\causalResidual_{X, \leftarrow}, \causalResidual_{Y, \leftarrow})$} 
    \State $\mathit{dir} := X \leftarrow Y$
    \Else
    \State $\mathit{dir} := \textit{no conclusion}$
    \EndIf
\end{algorithmic}
\end{algorithm}

Although in our experiments the two algorithms often produce the same estimation of causal direction, we prefer~\Cref{alg:CAREFL_H}, since it relies on few estimations of the residuals.

\section{Suitability theory}\label{section_suitability_theory}




With {\em a consistent HSIC estimator}, \citet{mooij2016distinguishing} show that an IT method consistently selects the causal direction for ANMs if {\em the regression method is suitable}.
A regression method is suitable if the expected mean squared error between the predicted residuals $\widehat{E}$ and the true residuals $E$ approaches $0$ in the limit of $N \rightarrow \infty$: 
\begin{align}\label{eq:suitability}
\lim_{N \rightarrow \infty} \EX_{D, D'} \left[ \frac{1}{N} || \widehat{E}_{1 \dots N} - E_{1 \dots N} ||^2 \right] = 0
\end{align}
where $D$ and $D'$ denote training set and testing set, respectively. 
Hence, with enough data a suitable regression method reconstructs the ground-truth noise.

If an HSIC estimator is consistent, the estimated HSIC value converges in probability to the population HSIC value.
\begin{equation*}
    \widehat{\mathit{HSIC}}(X,Y) \xrightarrow[]{P} \mathit{HSIC}(X,Y).
\end{equation*}
%
With a consistent HSIC estimator and a suitable regression method, the consistency result for ANMs in~\citet{mooij2016distinguishing} extends naturally to LSNMs (see \Cref{section_proof_consistent_LSNM} for a proof outline).
\begin{proposition}\label{proposition_consistent_LSNM}
For identifiable LSNMs with an independent noise term in one causal direction only, if an IT method is used with a suitable regression method for LSNMs and a consistent HSIC estimator, then the IT method is consistent for inferring causal direction for LSNMs.
\end{proposition}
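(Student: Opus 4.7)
The plan is to extend the Mooij et al.\ consistency argument for ANMs to the LSNM setting, by tracking how the suitability condition controls the reconstructed noise in both directions, and then invoking identifiability to rule out accidental independence in the anti-causal direction. The argument has four steps, mirroring the ANM proof but with the extra complication that both the location estimator $\widehat{\causalf}$ and the scale estimator $\widehat{\causalg}$ have to be controlled.

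First, I would state the LSNM form of suitability: the regression method is suitable if the reconstructed residual sequence satisfies
\begin{equation*}
\lim_{N\to\infty}\EX_{D,D'}\Bigl[\tfrac{1}{N}\bigl\lVert \causalResidual_{Y,1\dots N} - \causalNoise_{Y,1\dots N} \bigr\rVert^{2}\Bigr]=0,
\end{equation*}
where $\causalResidual_{Y,i}=(Y_i-\widehat{\causalf}(X_i))/\widehat{\causalg}(X_i)$. This follows from $L^{2}$-consistency of the separate estimators $\widehat{\causalf}\to\causalf$ and $\widehat{\causalg}\to\causalg$ on the support of $X$, together with a uniform positive lower bound on $\causalg$ that keeps the denominator away from zero.

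Second, in the causal direction, \Cref{theorem_independence_noise} already shows that the population residual equals $\causalNoise_Y$ up to shift and scale, so $X\indep\causalNoise_Y$ gives independence of the population residual and the cause. By continuity of HSIC with respect to $L^{2}$ perturbations of its arguments, as used in \citet{mooij2016distinguishing}, together with consistency of the HSIC estimator, one obtains
\begin{equation*}
\widehat{\mathit{HSIC}}(X,\causalResidual_Y)\xrightarrow{P}\mathit{HSIC}(X,\causalNoise_Y)=0.
\end{equation*}
In the anti-causal direction the same suitability assumption applied to the regression of $X$ on $Y$ makes $\widehat{\anticausalh}(Y)\to \EX[X\mid Y]$ and $\widehat{\anticausalk}(Y)\to \STD[X\mid Y]$, which always exist regardless of the true causal direction. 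Denote the resulting population residual by $R_X=(X-\EX[X\mid Y])/\STD[X\mid Y]$. If $R_X$ were independent of $Y$, then $\anticausalh(y):=\EX[X\mid Y=y]$ and $\anticausalk(y):=\STD[X\mid Y=y]$ would provide a valid LSNM representation in the reverse direction with independent noise, contradicting the assumed identifiability of the LSNM. Hence $\mathit{HSIC}(Y,R_X)=c>0$, and the same continuity plus HSIC-consistency argument gives $\widehat{\mathit{HSIC}}(Y,\causalResidual_X)\xrightarrow{P}c$. Combining the two displays, the probability that the IT rule selects the direction with smaller HSIC, namely $X\to Y$, tends to one.

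The main obstacle will be step three: rigorously turning identifiability into a strict positive lower bound on $\mathit{HSIC}(Y,R_X)$. The identifiability results of \citet{immer2023identifiability,strobl2023identifying} rule out an anti-causal LSNM with independent noise except in pathological cases, but the anti-causal regression is not constrained to lie in any parametric LSNM family, so one must argue that the particular residual arising from conditional-mean / conditional-standard-deviation regression cannot be coincidentally independent of $Y$; this step will have to carry through the same non-pathology assumptions. A secondary technical point is quantifying suitability for $\widehat{\causalg}$: because the denominator $\widehat{\causalg}(X)$ amplifies estimation error wherever $\causalg$ is small, $L^{2}$-control of the reconstructed noise requires a uniform positive lower bound on $\causalg$ on the support of $X$, which matches the strict-positivity assumption already built into the LSNM definition.
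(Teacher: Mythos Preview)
Your proposal is correct and follows essentially the same route as the paper's own proof outline: suitability makes the reconstructed residuals converge to the true noise, HSIC consistency (plus continuity under $L^{2}$ perturbations) transfers this to the empirical statistic, and identifiability forces $\mathit{HSIC}(Y,R_X)>0$ in the anti-causal direction. The paper only sketches these three steps and defers the formal argument to \citet{mooij2016distinguishing}; your version is more detailed, and your worry about step three is milder than you fear, since the proposition's hypothesis (``independent noise term in one causal direction only'') already excludes \emph{any} reverse LSNM with independent noise, so $R_X\indep Y$ would contradict the assumption directly without appeal to the general identifiability theorems.
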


It is important to note that, while the suitability of regression methods is a valuable property for IT-based methods aiding in the determination of causal directions using independence testing, it does not guarantee a higher likelihood for the causal direction compared to the anti-causal direction under noise misspecification. Therefore, suitability does not offer the same robustness guarantee for ML-based methods.

\subsection{Suitability: empirical results}\label{section_suitability_empirical}

\begin{table*}[hbtp]
\caption{Suitability of the flow estimator $\careflEstimator$ of CAREFL-M in the causal direction under noise misspecification and misleading CVs. $\careflEstimator$ is trained with a Laplace prior. The original dataset with size $2N$ is split into two: $50\%$ as training set and $50\%$ as testing set.
$\MVAR[Y|X] > \MVAR[X|Y]$ indicates misleading CVs in the dataset. LSNM definition is in~\Cref{section:synthetic_SCMs}.}
 \label{tab:suitability-experiments}
    \begin{subtable}[h]{\textwidth}
    \caption{\textbf{LSNM-tanh-exp-cosine} and $\mathbf{continuousBernoulli(0.9)}$ noise. $\MVAR[Y|X]$ vs. $\MVAR[X|Y]$: 0.324 vs. 0.291.}
        \centering
        \begin{tabular}{ccccc}
            \toprule
             & N=50 & N=500 & N=1000 & N=5000 \\
            \cmidrule(r){2-5}
            $[S_{\causalNoise_X}, S_{\causalNoise_Y}]$ & $[0.02406, 0.01283]$ & $[0.00194, 0.00204]$ & $[0.00094,0.00092]$ & $[0.0002, 0.00018]$ \\
            \bottomrule
        \end{tabular}
    \end{subtable}
    \hfill
    \begin{subtable}[h]{\textwidth}
    \caption{\textbf{LSNM-sine-tanh} and $\mathbf{uniform(-1, 1)}$ noise. $\MVAR[Y|X]$ vs. $\MVAR[X|Y]$: 0.422 vs. 0.367.}
        \centering
        \begin{tabular}{ccccc} 
            \toprule
             & N=50 & N=500 & N=1000 & N=5000 \\ 
            \cmidrule(r){2-5}
            $[S_{\causalNoise_X}, S_{\causalNoise_Y}]$ & $[0.0081, 0.00285]$ & $[0.00039, 0.00031]$ & $[0.0002, 0.00017]$ & $[0.00003, 0.00003]$ \\ 
            \bottomrule
        \end{tabular}
     \end{subtable}
     \hfill
    \begin{subtable}[h]{\textwidth}
    \caption{\textbf{LSNM-sigmoid-sigmoid} and $\mathbf{exponential(1)}$ noise. $\MVAR[Y|X]$ vs. $\MVAR[X|Y]$: 0.927 vs. 0.657.}
        \centering
        \begin{tabular}{ccccc} 
            \toprule
            & N=50 & N=500 & N=1000 & N=5000 \\ 
            \cmidrule(r){2-5}
            $[S_{\causalNoise_X}, S_{\causalNoise_Y}]$ & $[0.00979, 0.00443]$ & $[0.00074, 0.00058]$ & $[0.00045, 0.00026]$ & $[0.00005, 0.00005]$ \\ 
            \bottomrule
        \end{tabular}
     \end{subtable}
\end{table*}

Let \emph{suitability value} $\suitabilityValue$ be the left-hand side of~\Cref{eq:suitability}.
\Cref{tab:suitability-experiments} shows an empirical evaluation of $\suitabilityValue$, for the flow estimator $\careflEstimator$ in the causal direction. We generate data from 3 synthetic LSNMs, and evaluate $\careflEstimator$ under noise misspecification and misleading CVs.
We find that as the sample size grows, $\suitabilityValue$ approaches 0. 
 In other words, {\em $\careflEstimator$ is empirically suitable under noise misspecification and misleading CVs.}
Therefore, ~\Cref{proposition_consistent_LSNM} entails that CAREFL-H based on 
$\careflEstimator$ and a consistent HSIC estimator is empirically consistent for inferring causal direction in LSNMs under these conditions.

Because $\careflEstimator$ in CAREFL-M~\citep{khemakhem2021causal} uses neural networks to approximate the observed data distribution, it is difficult to provide a theoretical guarantee of suitability. Therefore, although our experiments indicate that $\careflEstimator$ is often suitable in practice, we do not claim that it is suitable for all LSNMs. For example, it may not be suitable in the low-noise regime when the LSNMs are close to deterministic. It is well-known that since neural network models are universal function approximators, we can expect them to fit many regression functions, although it is difficult to prove suitability theoretically. For these reasons, we do not claim a priori that flow models are suitable, but we provide empirical evidence that with careful training and hyperparameter selection, we can expect them to be suitable in practice.

\subsection{Proof outline for~\Cref{proposition_consistent_LSNM}}\label{section_proof_consistent_LSNM}

For data pairs $(X,Y)$ generated by an LSNM model $X \rightarrow Y$, we have:
\begin{enumerate}
    \item Under suitability, the reconstructed noise $\causalResidual_Y$ approaches the ground-truth noise $\causalNoise_Y$.
    \item By employing a consistent HSIC estimator, the estimated HSIC value $\widehat{\mathit{HSIC}}$ converges to the true HSIC value $\mathit{HSIC}$.
    \item In the ground-truth LSNM, $\mathit{HSIC}(X, \causalResidual_Y)=0$ and $\mathit{HSIC}(Y, \causalResidual_X) \neq 0$ (i.e. identifiability of LSNMs).
\end{enumerate}
Therefore, $\widehat{\mathit{HSIC}}(X, \causalResidual_Y)=0$ and $\widehat{\mathit{HSIC}}(Y, \causalResidual_X) \neq 0$ for LSNMs. We only provide a proof outline here, since a formal proof is analogous to the proof for ANMs in Appendix A of \citet{mooij2016distinguishing}.

\section{Default and alternative hyperparameter values used in~\Cref{section:experiments}}\label{section_experiments_default_and_alternative_HP}

We use the reported hyperparameter values in CAREFL-M~\citep{khemakhem2021causal} for the T\"ubingen Cause-Effect Pairs benchmark~\citep{mooij2016distinguishing} as the \underline{default} hyperparameter values in all our experiments: 
\begin{itemize}
    \item The flow estimator $\careflEstimator$ is parameterized with \underline{4} sub-flows (alternatively: 1, 7 and 10).
    \item For each sub-flow, $\causalf$, $\causalg$, $\anticausalh$ and $\anticausalk$ are modelled as four-layer MLPs with \underline{5} hidden neurons in each layer (alternatively: 2, 10 and 20).
    \item Prior distribution is \underline{Laplace} (alternatively: Gaussian prior).
    \item Adam optimizer~\citep{kingma2014adam} is used to train each model for \underline{750} epochs (alternatively: 500, 1000 and 2000).
    \item L2-penalty strength is \underline{0} by default (alternatively: 0.0001, 0.001, 0.1).
\end{itemize}

Although we also observe that LOCI-H is more robust than LOCI-M under noise misspecification and misleading CVs, we omit their results because: (1) CAREFL often outperforms LOCI (see~\Cref{tab:teaser_mcv_vs_accuracy,tab:experiments_best_results}). (2) LOCI is fixed as a Gaussian distribution, whereas CAREFL can specify different prior distributions. (3) LOCI uses different set of hyperparameters than CAREFL. So their results cannot be merged into the same figures.

\section{Synthetic SCMs used in~\Cref{section_synthetic_LSNMs}}\label{section:synthetic_SCMs}

We use the following SCMs to generate synthetic datasets: 
\begin{equation}\label{eq:LSNM_names}
\begin{aligned}
  \mbox{LSNM-tanh-exp-cosine: } & \quad Y := \tanh(X \cdot \theta_1 ) \cdot \theta_2 + e^{\cos(X \cdot \psi_1) \cdot \psi_2 } \cdot \causalNoise_Y \\
  \mbox{LSNM-sine-tanh: } & \quad Y := \sin(X \cdot \theta_1 ) \cdot \theta_2 + (\tanh(X \cdot \psi) + \phi) \cdot \causalNoise_Y \\
  \mbox{LSNM-sigmoid-sigmoid: } & \quad Y := \sigma(X \cdot \theta_1 ) \cdot \theta_2 + \sigma(X \cdot \psi_1) \cdot \psi_2 \cdot \causalNoise_Y 
\end{aligned}
\end{equation}
where $\causalNoise_Y$ is the ground-truth noise sampled from one of the following distributions: $\mathit{continuousBernoulli}(0.9)$, $\mathit{uniform}(-1, 1)$, $\mathit{exponential}(1)$, $\mathit{beta}(0.5, 0.5)$, $\mathit{Gaussian}(0,1)$ and $\mathit{Laplace}(0,1)$. $\sigma$ is the sigmoid function. Although we did not prove identifiability with $\mathit{beta}(0.5, 0.5)$ noise in~\Cref{theorem_identifiability_special_noises}, empirically we find it is identifiable.
Following~\citep{zheng2018dags, zheng2020learning},  each $\theta$ and $\psi$ are sampled uniformly from range $[-2, -0.5] \cup [0.5, 2]$. $\phi$ is sampled uniformly from range $[1, 2]$ to make the $tanh$ function positive. The number of data pairs in each synthetic dataset is $N \in \{500, 5000\}$. 


\section{Hyperparameter values of CAREFL-H for~\Cref{tab:experiments_best_results_Tubingen_CEpairs}}\label{section_Tubingen_CEpairs_best_hyperparameters}

Following~\citet{khemakhem2021causal}, we use a single set of hyperparameters for all 99 datasets, which is found by grid search. 
To acquire the result of CAREFL-H in~\Cref{tab:experiments_best_results_Tubingen_CEpairs}, the hyperparameter values used are as follows: 
\begin{itemize}
    \item Number of hidden neurons in each layer of the MLPs: 2
    \item Number of sub-flows: 10
    \item Training dataset = testing dataset = 100\%.
\end{itemize}
The rest of the hyperparameter values are identical to the default ones.

\section{Running time}
The running time of CAREFL-H is slightly longer than CAREFL-M, due to the additional independence tests, i.e. $\mathit{HSIC}(X_{\mathit{test}}, \causalResidual_Y)$ and $\mathit{HSIC}(Y_{\mathit{test}}, \causalResidual_X)$. Please see~\Cref{tab:running_time}. The running time is measured on a computer running Ubuntu 20.04.5 LTS with Intel Core i7-6850K 3.60GHz CPU and 32 GB memory. No GPUs are used.

\begin{table*}[h]
\caption{\textbf{Accuracy running time in seconds} over 10 datasets generated by SCM \textbf{LSNM-sine-tanh} (definition in~\Cref{section:synthetic_SCMs}) with $N=10,000$ samples. $\alpha=1$. Ground-truth noise distribution is $\mathit{uniform}(-1,1)$. Model prior distribution is $\mathit{Gaussian}(0,1)$. 
}
\label{tab:running_time}
\centering
\begin{tabular}{c|c}
\toprule
Method & Average Running Time in Seconds  \\ 
\midrule
CAREFL-M & 33.23 \\ 
\midrule
CAREFL-H & 39.38 \\ 
\bottomrule
\end{tabular}
\end{table*}

\section{Additive noise models under misleading conditional variances}
\label{section_Results_with_Additive_Noise_Models}

We show the accuracy of ANM ML methods with ANM data under misleading CVs, and compare with LSNM ML methods. The ANM data is generated by the following SCMs: 
\begin{equation} \label{eq:ANM_names}
\begin{aligned}
    \mbox{ANM-sine: } \quad Y & := \sin(X \cdot \theta_1 ) \cdot \theta_2 + \causalNoise_Y \\
    \mbox{ANM-tanh: } \quad Y & := \tanh(X \cdot \theta_1 ) \cdot \theta_2 + \causalNoise_Y \\
    \mbox{ANM-sigmoid: } \quad Y & := \sigma(X \cdot \theta_1 ) \cdot \theta_2 + \causalNoise_Y \\
\end{aligned}
\end{equation}
where $\causalNoise_Y$ is the ground-truth noise sampled from one of the following distributions: $\mathit{uniform}(-1,1)$ and $\mathit{Gaussian}(0,1)$. $\sigma$ is the sigmoid function.
Following~\citep{zheng2018dags, zheng2020learning},  each $\theta$ is sampled uniformly from range $[-2, -0.5] \cup [0.5, 2]$.

CAREFL-ANM-M is the ANM counterpart of CAREFL-M (see~\Cref{section_CAREFL_M_details}). The LSNM flows (see~\Cref{eq:carefl-lsnm-forward}) in CAREFL-M are changed to model ANMs:
\begin{equation*} 
\begin{split}
    X = t_1 + \causalNoise_X \\
    Y = t_{2}(X) + \causalNoise_Y
\end{split}
\end{equation*}


Similar to LSNM ML methods, the accuracy of ANM ML methods is also negatively related to misleading CVs, as shown in~\Cref{fig:misleading_CVs_vs_accuracy_ANM}. Unlike LSNM ML methods, which suffer from misleading CVs under noise misspecification (see~\Cref{tab:teaser_mcv_vs_accuracy,fig:teaser_mcv_vs_likelihood,fig:misleading_CVs_vs_accuracy_LSNM}), an ANM ML method suffers from misleading CVs with either noise misspecification or correct specification, as illustrated in~\Cref{fig:misleading_CVs_vs_accuracy_ANM}. 
\Cref{fig:summary_tree} summarizes the differences between ANM and LSNM ML methods.

\begin{figure*}[h]
\centering
\begin{subfigure}{0.4\textwidth}
\centering
\includegraphics[width=1\textwidth, height=0.7\textwidth]{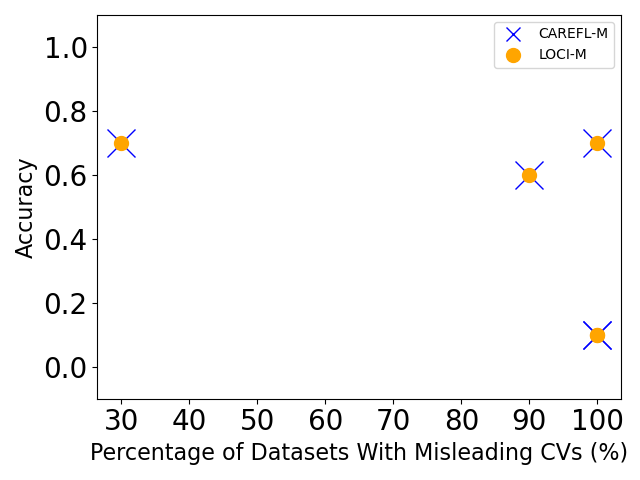}
\caption{LSNM-sine-tanh and $\mathit{uniform}(-1,1)$ noise}
\label{fig:misleading_CVs_vs_accuracy_LSNM-sine-tanh_uniform}
\end{subfigure}
\qquad\qquad
\centering
\begin{subfigure}{0.4\textwidth}
\centering
\includegraphics[width=1\textwidth, height=0.7\textwidth]{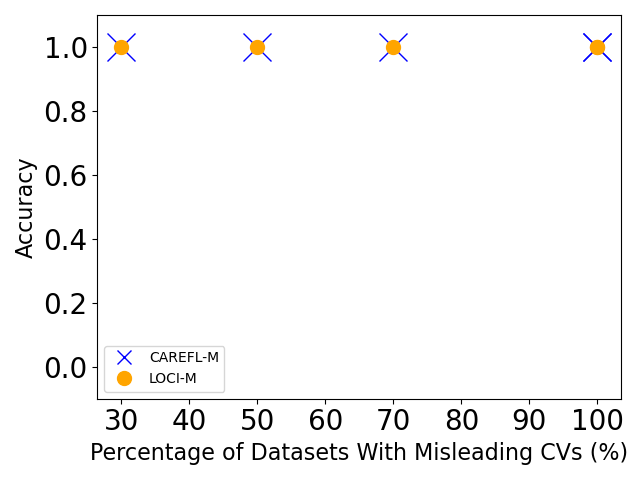}
\caption{LSNM-sine-tanh and $\mathit{Gaussian}(0,1)$ noise}
\label{fig:misleading_CVs_vs_accuracy_LSNM-sine-tanh_standard-gaussian}
\end{subfigure}
\hfill
\centering
\begin{subfigure}{0.4\textwidth}
\centering
\includegraphics[width=1\textwidth, height=0.7\textwidth]{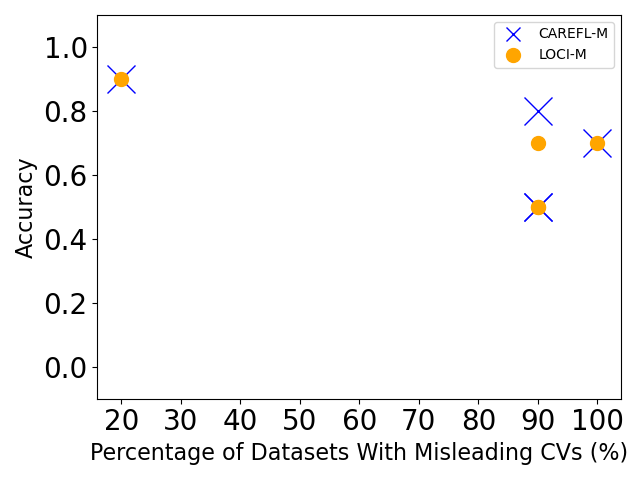}
\caption{LSNM-tanh-exp-cosine and $\mathit{uniform}(-1,1)$ noise}
\label{fig:misleading_CVs_vs_accuracy_LSNM-tanh-exp-cosine_uniform}
\end{subfigure}
\qquad\qquad
\centering
\begin{subfigure}{0.4\textwidth}
\centering
\includegraphics[width=1\textwidth, height=0.7\textwidth]{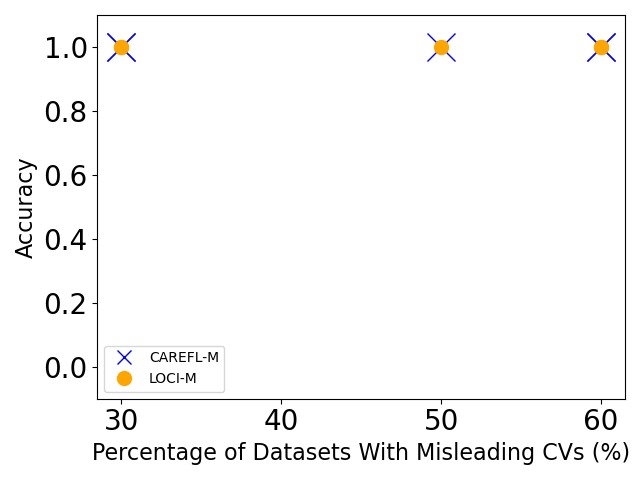}
\caption{LSNM-tanh-exp-cosine and $\mathit{Gaussian}(0,1)$ noise}
\label{fig:misleading_CVs_vs_accuracy_LSNM-tanh-exp-cosine_standard-gaussian}
\end{subfigure}
\hfill
\centering
\begin{subfigure}{0.4\textwidth}
\centering
\includegraphics[width=1\textwidth, height=0.7\textwidth]{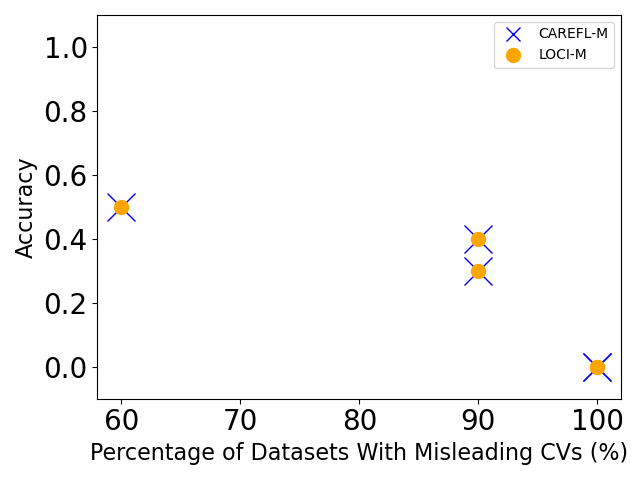}
\caption{LSNM-sigmoid-sigmoid and $\mathit{uniform}(-1,1)$ noise}
\label{fig:misleading_CVs_vs_accuracy_LSNM-sigmoid-sigmoid_uniform}
\end{subfigure}
\qquad\qquad
\centering
\begin{subfigure}{0.4\textwidth}
\centering
\includegraphics[width=1\textwidth, height=0.7\textwidth]{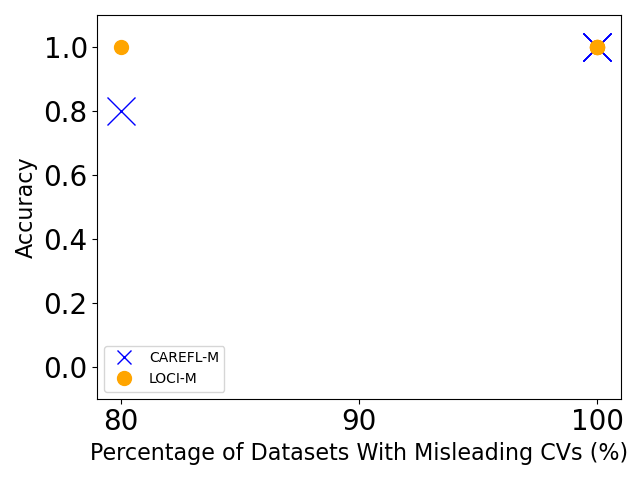}
\caption{LSNM-sigmoid-sigmoid and $\mathit{Gaussian}(0,1)$ noise}
\label{fig:misleading_CVs_vs_accuracy_LSNM-sigmoid-sigmoid_standard-gaussian}
\end{subfigure}
\caption{We rewrite Equations~\eqref{eq:LSNM_names} as $Y := \causalf(X) + \alpha \cdot \causalg(X) \cdot \causalNoise_Y$. Altering $\alpha$ changes the variance of noise, and creates datasets with difference CVs. 10 datasets are generated with each $\alpha$. CVs are computed by binning the true cause. All the methods in the figure assume $\mathit{Gaussian}(0,1)$ whereas the ground-truth noise is either $\mathit{uniform}(-1,1)$ or $\mathit{Gaussian}(0,1)$. All the datasets are standardized to have mean 0 and variance 1.}
\label{fig:misleading_CVs_vs_accuracy_LSNM}
\end{figure*}

\begin{figure*}[h]
\centering
\begin{subfigure}{0.4\textwidth}
\centering
\includegraphics[width=1\textwidth, height=0.7\textwidth]{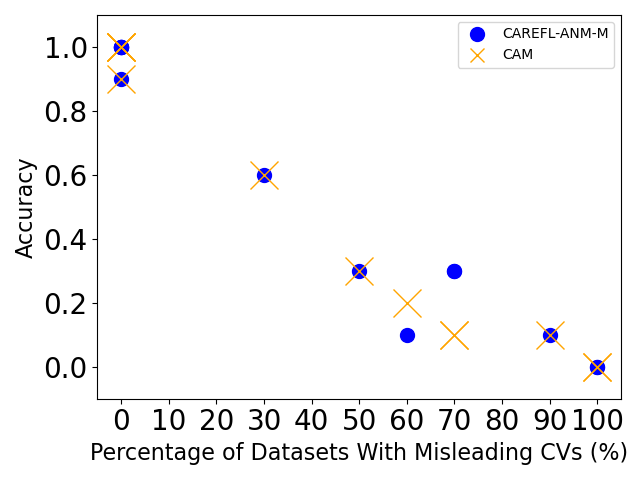}
\caption{ANM-sine and $\mathit{uniform}(-1,1)$ noise}
\label{fig:misleading_CVs_vs_accuracy_ANM-sine_uniform}
\end{subfigure}
\qquad\qquad
\centering
\begin{subfigure}{0.4\textwidth}
\centering
\includegraphics[width=1\textwidth, height=0.7\textwidth]{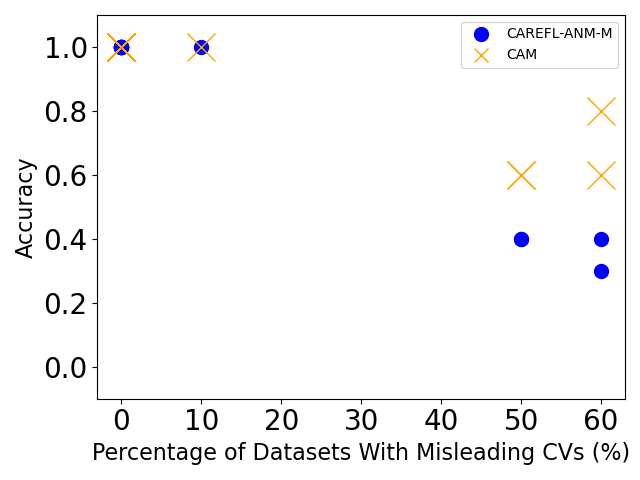}
\caption{ANM-sine and $\mathit{Gaussian}(0,1)$ noise}
\label{fig:misleading_CVs_vs_accuracy_ANM-sine_standard-gaussian}
\end{subfigure}
\hfill
\centering
\begin{subfigure}{0.4\textwidth}
\centering
\includegraphics[width=1\textwidth, height=0.7\textwidth]{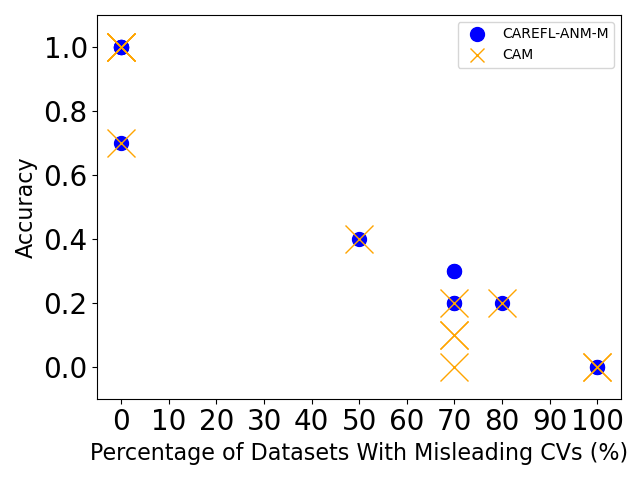}
\caption{ANM-tanh and $\mathit{uniform}(-1,1)$ noise}
\label{fig:misleading_CVs_vs_accuracy_ANM-tanh_uniform}
\end{subfigure}
\qquad\qquad
\centering
\begin{subfigure}{0.4\textwidth}
\centering
\includegraphics[width=1\textwidth, height=0.7\textwidth]{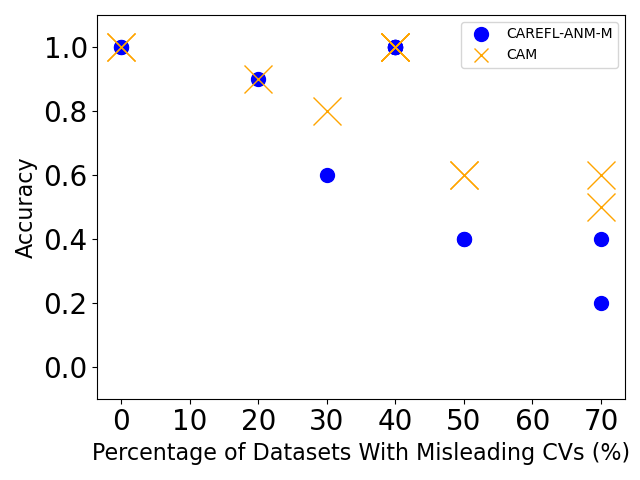}
\caption{ANM-tanh and $\mathit{Gaussian}(0,1)$ noise}
\label{fig:misleading_CVs_vs_accuracy_ANM-tanh_standard-gaussian}
\end{subfigure}
\hfill
\centering
\begin{subfigure}{0.4\textwidth}
\centering
\includegraphics[width=1\textwidth, height=0.7\textwidth]{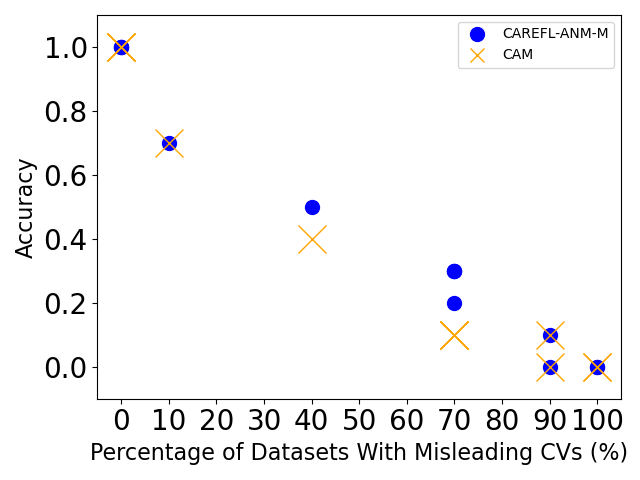}
\caption{ANM-sigmoid and $\mathit{uniform}(-1,1)$ noise}
\label{fig:misleading_CVs_vs_accuracy_ANM-sigmoid_uniform}
\end{subfigure}
\qquad\qquad
\centering
\begin{subfigure}{0.4\textwidth}
\centering
\includegraphics[width=1\textwidth, height=0.7\textwidth]{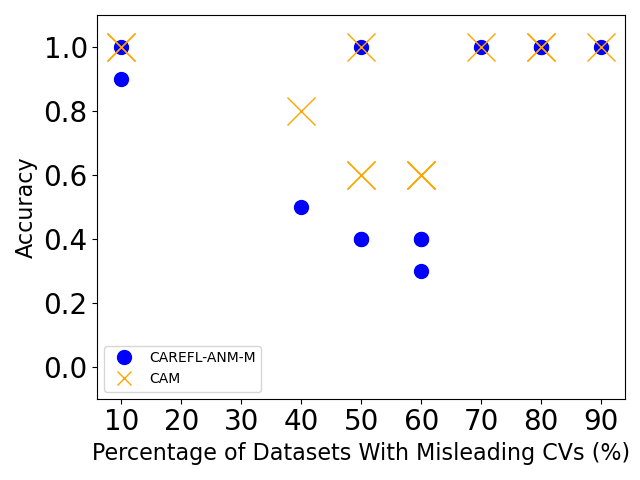}
\caption{ANM-sigmoid and $\mathit{Gaussian}(0,1)$ noise}
\label{fig:misleading_CVs_vs_accuracy_ANM-sigmoid_standard-gaussian}
\end{subfigure}
\caption{We rewrite Equations~\eqref{eq:ANM_names} as $ Y := f(X) + \alpha \cdot \causalNoise_Y$. Altering $\alpha$ changes the variance of noise, and creates datasets with difference CVs. 10 datasets are generated with each $\alpha$. CVs are computed by binning the true cause. All the methods in the figure assume $\mathit{Gaussian}(0,1)$ whereas the ground-truth noise is either $\mathit{uniform}(-1,1)$ or $\mathit{Gaussian}(0,1)$. All the datasets are standardized to have mean 0 and variance 1.}
\label{fig:misleading_CVs_vs_accuracy_ANM}
\end{figure*}

\begin{table*}[h]
\caption{(Best viewed in color) Summary of datasets. Settings in~\Cref{section_synthetic_LSNMs} are color-coded. (1) \textcolor{blue}{Blue}: with noise misspecification; (2) \textcolor{red}{Red}: with more than $50\%$ of datasets having misleading CVs; (3) \textcolor{brown}{Brown}: both (1) and (2). The datasets in~\Cref{section_synthetic_LSNMs} are generated by LSNMs and the datasets in~\Cref{section_SIM_benchmark,section_Tubingen} are not.}
\label{tab:dataset_summary}
\centering
\begin{tabular}{ccccc} 
\toprule
Type & Name & Noise & Number & Percentage of \\
& & & of & Datasets With \\
& & & Datasets & Misleading CVs \\
& & & & ($\%$) \\
\midrule
& & \textcolor{brown}{$\mathit{Uniform}(-1, 1)$} & 10 & 60 \\
& & \textcolor{brown}{$\mathit{Beta}(0.5, 0.5)$} & 10 & 80 \\ 
& LSNM-tanh-exp-cosine & \textcolor{brown}{$\mathit{ContinuousBernoulli}(0.9)$} & 10 & 70 \\ 
& & \textcolor{blue}{$\mathit{Exponential}(1)$} & 10 & 20 \\ 
& & $\mathit{Gaussian}(0,1)$ & 10 & 40 \\ 
& & $\mathit{Laplace}(0,1)$ & 10 & 40 \\ 
\cline{2-5}
& & \textcolor{brown}{$\mathit{Uniform}(-1, 1)$} & 10 & 100 \\ 
& & \textcolor{brown}{$\mathit{Beta}(0.5, 0.5)$} & 10 & 100 \\ 
Synthetic & LSNM-sine-tanh & \textcolor{brown}{$\mathit{ContinuousBernoulli}(0.9)$} & 10 & 60 \\ 
(\Cref{section_synthetic_LSNMs}) & & \textcolor{blue}{$\mathit{Exponential}(1)$} & 10 & 20 \\ 
& & \textcolor{red}{$\mathit{Gaussian}(0,1)$} & 10 & 60 \\ 
& & \textcolor{red}{$\mathit{Laplace}(0,1)$} & 10 & 80 \\ 
\cline{2-5}
& & \textcolor{brown}{$\mathit{Uniform}(-1, 1)$} & 10 & 90 \\ 
& & \textcolor{brown}{$\mathit{Beta}(0.5, 0.5)$} & 10 & 100 \\ 
& LSNM-sigmoid-sigmoid & \textcolor{brown}{$\mathit{ContinuousBernoulli}(0.9)$} & 10 & 80 \\ 
& & \textcolor{brown}{$\mathit{Exponential}(1)$} & 10 & 70 \\ 
& & \textcolor{red}{$\mathit{Gaussian}(0,1)$} & 10 & 90 \\ 
& & \textcolor{red}{$\mathit{Laplace}(0,1)$} & 10 & 100 \\ 
\midrule
& SIM & & 100 & 40 \\ 
\cline{2-2}\cline{4-5}
Synthetic & SIM-c & N/A & 100 & 46 \\ 
\cline{2-2}\cline{4-5}
(\Cref{section_SIM_benchmark}) & SIM-ln & & 100 & 23 \\ 
\cline{2-2}\cline{4-5}
& SIM-G & & 100 & 29 \\ 
\midrule
Real-World & T\"ubingen Cause-Effect Pairs & N/A & 99 & 40 \\
(\Cref{section_Tubingen}) & & & & \\
\bottomrule
\end{tabular}
\end{table*}

\begin{figure*}[h]
\centering
\begin{subfigure}{\textwidth}
\centering
\includegraphics[width=1\textwidth, height=0.2\textwidth]{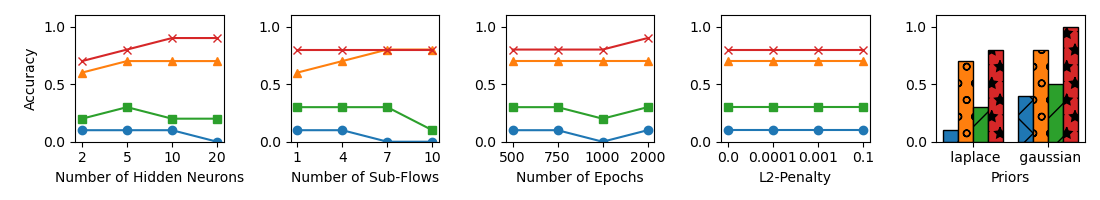}
\caption{$N=500$}
\end{subfigure}
\hfill
\centering
\begin{subfigure}{\textwidth}
\centering
\includegraphics[width=1\textwidth, height=0.2\textwidth]{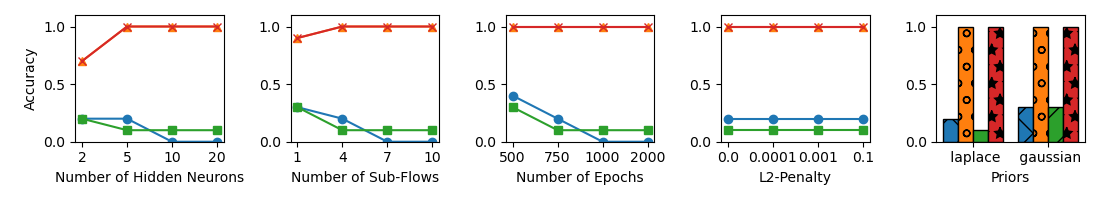}
\caption{$N=5000$}
\end{subfigure}
\hfill
\centering
\begin{subfigure}{1.0\textwidth}
  \centering
  \includegraphics[width=1\textwidth, height=0.05\textwidth]{images/legends.png}
\end{subfigure}
\caption{\textbf{Accuracy} over 10 datasets: \textbf{LSNM-tanh-exp-cosine} and $\mathbf{uniform(-1,1)}$ noise. 
}
\label{fig:experiments_simulated_accuracy-LSNM-tanh-exp-cosine-uniform}
\end{figure*}

\begin{figure*}[h]
\centering
\begin{subfigure}{\textwidth}
\centering
\includegraphics[width=1\textwidth, height=0.2\textwidth]{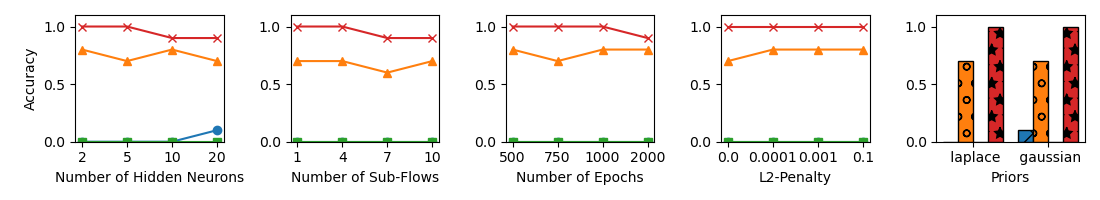}
\caption{$N=500$}
\end{subfigure}
\hfill
\centering
\begin{subfigure}{\textwidth}
\centering
\includegraphics[width=1\textwidth, height=0.2\textwidth]{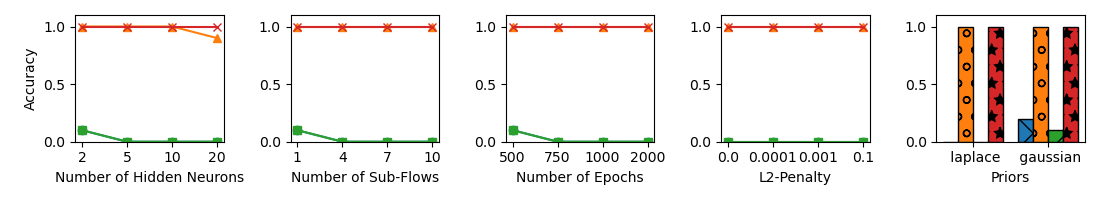}
\caption{$N=5000$}
\end{subfigure}
\hfill
\centering
\begin{subfigure}{1.0\textwidth}
  \centering
  \includegraphics[width=1\textwidth, height=0.05\textwidth]{images/legends.png}
\end{subfigure}
\caption{\textbf{Accuracy} over 10 datasets: \textbf{LSNM-tanh-exp-cosine} and $\mathbf{beta(0.5,0.5)}$ noise. 
}
\label{fig:experiments_simulated_accuracy-LSNM-tanh-exp-cosine-beta}
\end{figure*}

\begin{figure*}[h]
\centering
\begin{subfigure}{\textwidth}
\centering
\includegraphics[width=1\textwidth, height=0.2\textwidth]{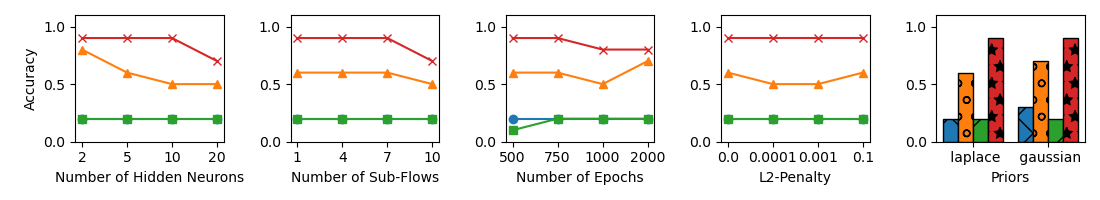}
\caption{$N=500$}
\end{subfigure}
\hfill
\centering
\begin{subfigure}{\textwidth}
\centering
\includegraphics[width=1\textwidth, height=0.2\textwidth]{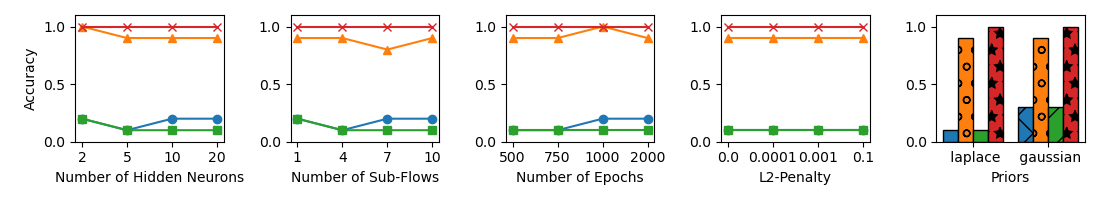}
\caption{$N=5000$}
\end{subfigure}
\hfill
\centering
\begin{subfigure}{1.0\textwidth}
  \centering
  \includegraphics[width=1\textwidth, height=0.05\textwidth]{images/legends.png}
\end{subfigure}
\caption{\textbf{Accuracy} over 10 datasets: \textbf{LSNM-tanh-exp-cosine} and $\mathbf{continuousBernoulli(0.9)}$ noise. 
}
\label{fig:experiments_simulated_accuracy-LSNM-tanh-exp-cosine-continuousbernoulli}
\end{figure*}

\begin{figure*}[h]
\centering
\begin{subfigure}{\textwidth}
\centering
\includegraphics[width=1\textwidth, height=0.2\textwidth]{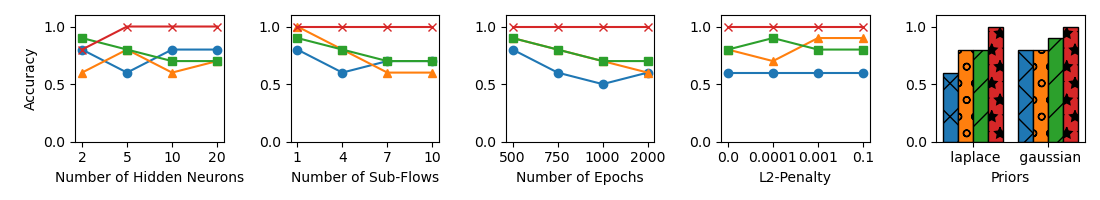}
\caption{$N=500$}
\end{subfigure}
\hfill
\centering
\begin{subfigure}{\textwidth}
\centering
\includegraphics[width=1\textwidth, height=0.2\textwidth]{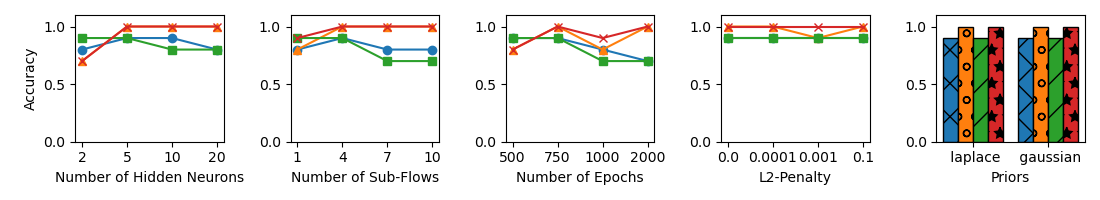}
\caption{$N=5000$}
\end{subfigure}
\hfill
\centering
\begin{subfigure}{1.0\textwidth}
  \centering
  \includegraphics[width=1\textwidth, height=0.05\textwidth]{images/legends.png}
\end{subfigure}
\caption{\textbf{Accuracy} over 10 datasets: \textbf{LSNM-tanh-exp-cosine} and $\mathbf{exponential(1)}$ noise. 
}
\label{fig:experiments_simulated_accuracy-LSNM-tanh-exp-cosine-exp}
\end{figure*}

\begin{figure*}[h]
\centering
\begin{subfigure}{\textwidth}
\centering
\includegraphics[width=1\textwidth, height=0.2\textwidth]{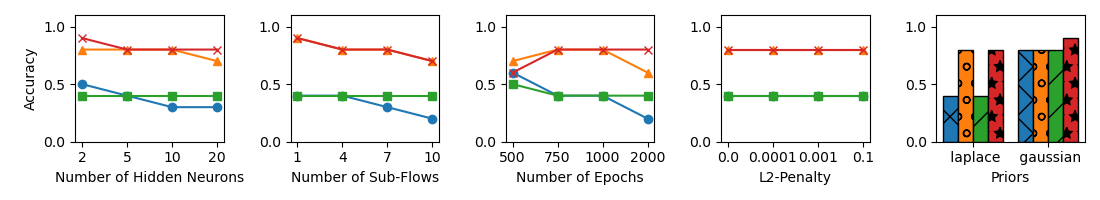}
\caption{$N=500$}
\end{subfigure}
\hfill
\centering
\begin{subfigure}{\textwidth}
\centering
\includegraphics[width=1\textwidth, height=0.2\textwidth]{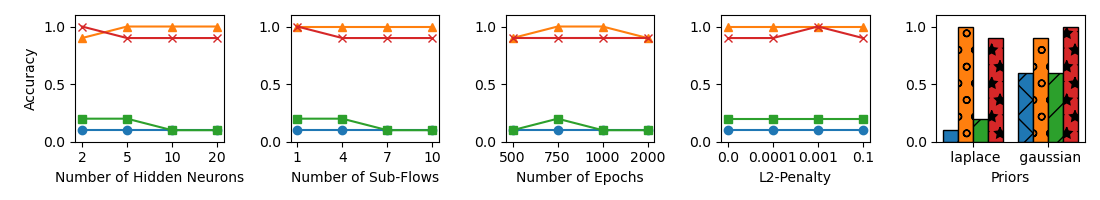}
\caption{$N=5000$}
\end{subfigure}
\hfill
\centering
\begin{subfigure}{1.0\textwidth}
  \centering
  \includegraphics[width=1\textwidth, height=0.05\textwidth]{images/legends.png}
\end{subfigure}
\caption{\textbf{Accuracy} over 10 datasets: \textbf{LSNM-sine-tanh} and $\mathbf{uniform(-1,1)}$ noise. 
}
\label{fig:experiments_simulated_accuracy-LSNM-sine-tanh-uniform}
\end{figure*}

\begin{figure*}[h]
\centering
\begin{subfigure}{\textwidth}
\centering
\includegraphics[width=1\textwidth, height=0.2\textwidth]{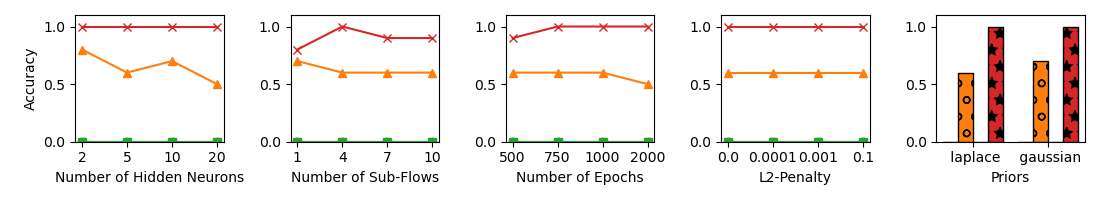}
\caption{$N=500$}
\end{subfigure}
\hfill
\centering
\begin{subfigure}{\textwidth}
\centering
\includegraphics[width=1\textwidth, height=0.2\textwidth]{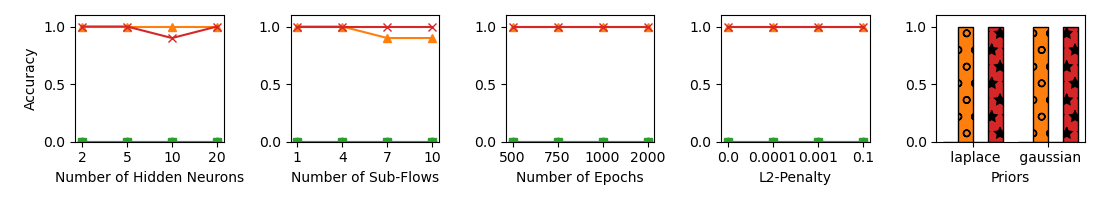}
\caption{$N=5000$}
\end{subfigure}
\hfill
\centering
\begin{subfigure}{1.0\textwidth}
  \centering
  \includegraphics[width=1\textwidth, height=0.05\textwidth]{images/legends.png}
\end{subfigure}
\caption{\textbf{Accuracy} over 10 datasets: \textbf{LSNM-sine-tanh} and $\mathbf{beta(0.5,0.5)}$ noise. 
}
\label{fig:experiments_simulated_accuracy-LSNM-sine-tanh-beta}
\end{figure*}

\begin{figure*}[h]
\centering
\begin{subfigure}{\textwidth}
\centering
\includegraphics[width=1\textwidth, height=0.2\textwidth]{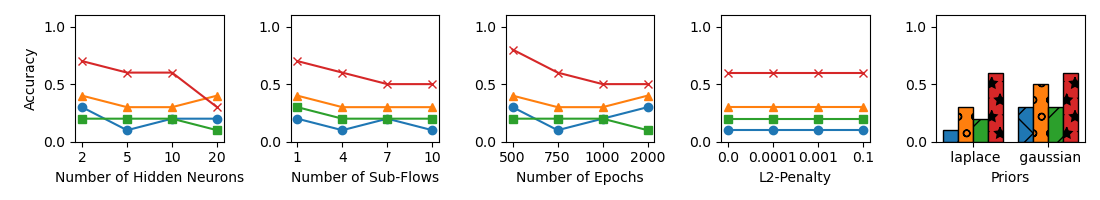}
\caption{$N=500$}
\end{subfigure}
\hfill
\centering
\begin{subfigure}{\textwidth}
\centering
\includegraphics[width=1\textwidth, height=0.2\textwidth]{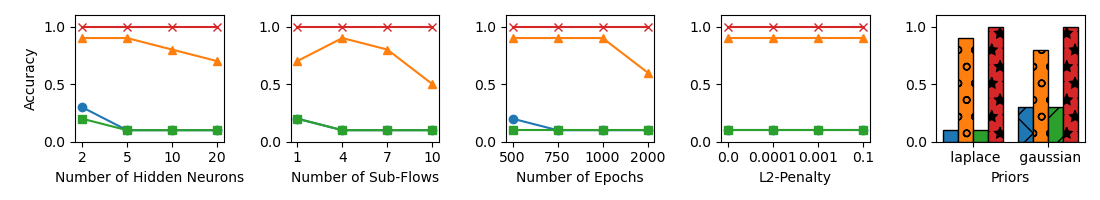}
\caption{$N=5000$}
\end{subfigure}
\hfill
\centering
\begin{subfigure}{1.0\textwidth}
  \centering
  \includegraphics[width=1\textwidth, height=0.05\textwidth]{images/legends.png}
\end{subfigure}
\caption{\textbf{Accuracy} over 10 datasets: \textbf{LSNM-sine-tanh} and $\mathbf{continuousBernoulli(0.9)}$ noise. 
}
\label{fig:experiments_simulated_accuracy-LSNM-sine-tanh-continuousbernoulli}
\end{figure*}

\begin{figure*}[h]
\centering
\begin{subfigure}{\textwidth}
\centering
\includegraphics[width=1\textwidth, height=0.2\textwidth]{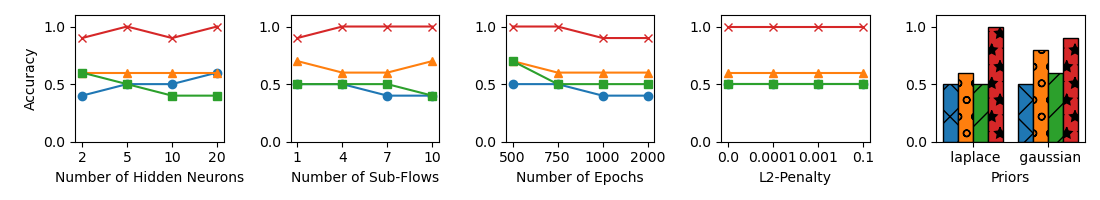}
\caption{$N=500$}
\end{subfigure}
\hfill
\centering
\begin{subfigure}{\textwidth}
\centering
\includegraphics[width=1\textwidth, height=0.2\textwidth]{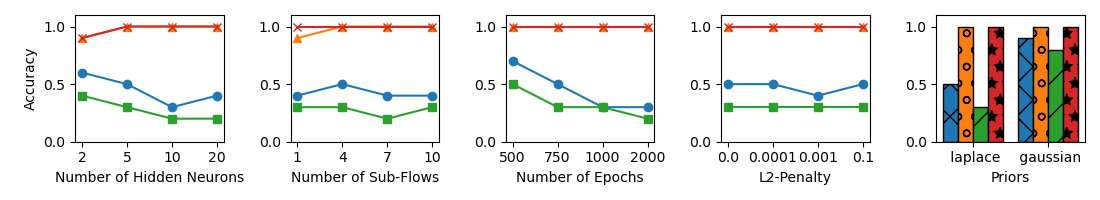}
\caption{$N=5000$}
\end{subfigure}
\hfill
\centering
\begin{subfigure}{1.0\textwidth}
  \centering
  \includegraphics[width=1\textwidth, height=0.05\textwidth]{images/legends.png}
\end{subfigure}
\caption{\textbf{Accuracy} over 10 datasets: \textbf{LSNM-sine-tanh} and $\mathbf{exponential(1)}$ noise. 
}
\label{fig:experiments_simulated_accuracy-LSNM-sine-tanh-exp}
\end{figure*}

\begin{figure*}[h]
\centering
\begin{subfigure}{\textwidth}
\centering
\includegraphics[width=1\textwidth, height=0.2\textwidth]{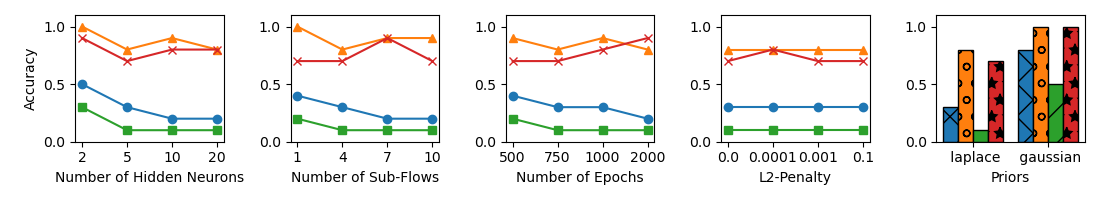}
\caption{$N=500$}
\end{subfigure}
\hfill
\centering
\begin{subfigure}{\textwidth}
\centering
\includegraphics[width=1\textwidth, height=0.2\textwidth]{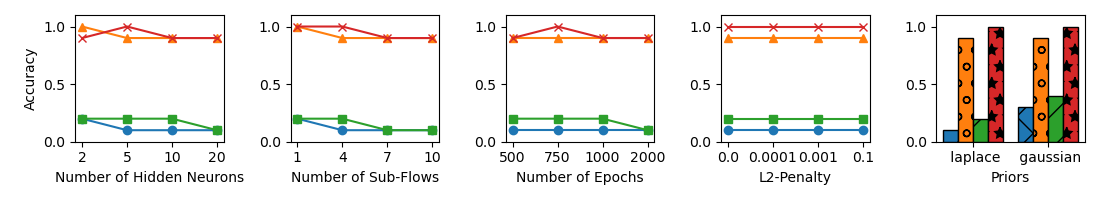}
\caption{$N=5000$}
\end{subfigure}
\hfill
\centering
\begin{subfigure}{1.0\textwidth}
  \centering
  \includegraphics[width=1\textwidth, height=0.05\textwidth]{images/legends.png}
\end{subfigure}
\caption{\textbf{Accuracy} over 10 datasets: \textbf{LSNM-sigmoid-sigmoid} and $\mathbf{uniform(-1,1)}$ noise. 
}
\label{fig:experiments_simulated_accuracy-LSNM-sigmoid-sigmoid-uniform}
\end{figure*}

\begin{figure*}[h]
\centering
\begin{subfigure}{\textwidth}
\centering
\includegraphics[width=1\textwidth, height=0.2\textwidth]{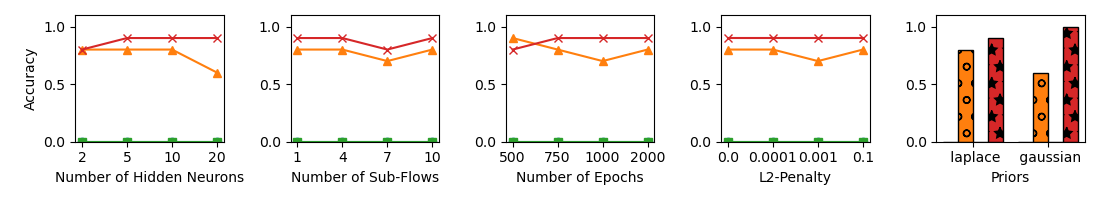}
\caption{$N=500$}
\end{subfigure}
\hfill
\centering
\begin{subfigure}{\textwidth}
\centering
\includegraphics[width=1\textwidth, height=0.2\textwidth]{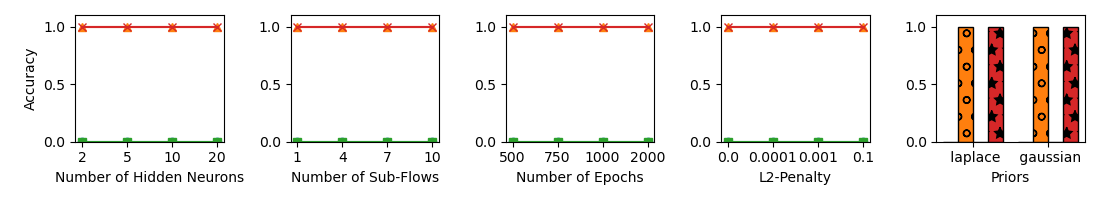}
\caption{$N=5000$}
\end{subfigure}
\hfill
\centering
\begin{subfigure}{1.0\textwidth}
  \centering
  \includegraphics[width=1\textwidth, height=0.05\textwidth]{images/legends.png}
\end{subfigure}
\caption{\textbf{Accuracy} over 10 datasets: \textbf{LSNM-sigmoid-sigmoid} and $\mathbf{beta(0.5,0.5)}$ noise. 
}
\label{fig:experiments_simulated_accuracy-LSNM-sigmoid-sigmoid-beta}
\end{figure*}

\begin{figure*}[h]
\centering
\begin{subfigure}{\textwidth}
\centering
\includegraphics[width=1\textwidth, height=0.2\textwidth]{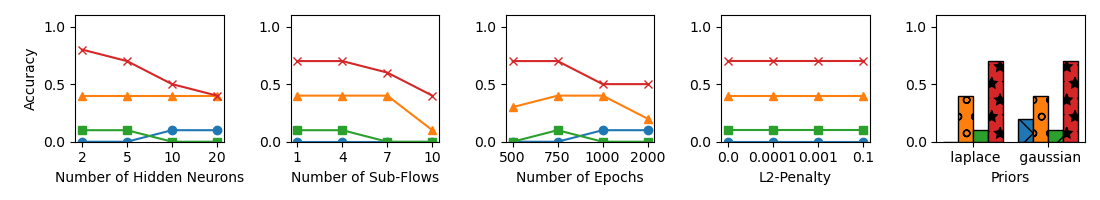}
\caption{$N=500$}
\end{subfigure}
\hfill
\centering
\begin{subfigure}{\textwidth}
\centering
\includegraphics[width=1\textwidth, height=0.2\textwidth]{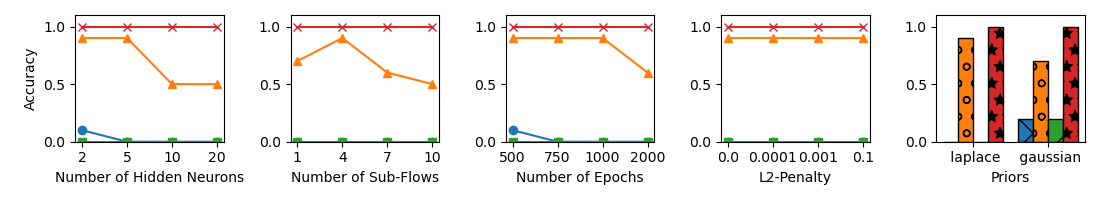}
\caption{$N=5000$}
\end{subfigure}
\hfill
\centering
\begin{subfigure}{1.0\textwidth}
  \centering
  \includegraphics[width=1\textwidth, height=0.05\textwidth]{images/legends.png}
\end{subfigure}
\caption{\textbf{Accuracy} over 10 datasets: \textbf{LSNM-sigmoid-sigmoid} and $\mathbf{continuousBernoulli(0.9)}$ noise. 
}
\label{fig:experiments_simulated_accuracy-LSNM-sigmoid-sigmoid-continuousbernoulli}
\end{figure*}

\begin{figure*}[h]
\centering
\begin{subfigure}{\textwidth}
\centering
\includegraphics[width=1\textwidth, height=0.2\textwidth]{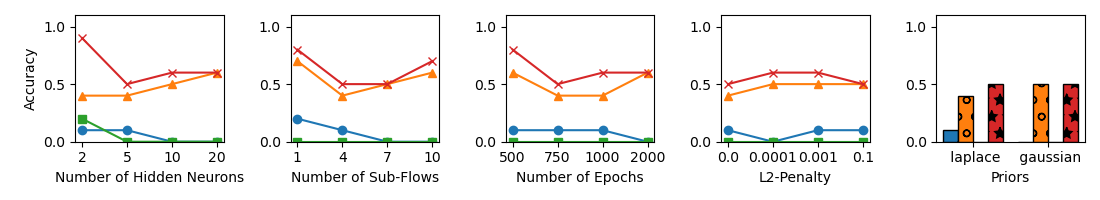}
\caption{$N=500$}
\end{subfigure}
\hfill
\centering
\begin{subfigure}{\textwidth}
\centering
\includegraphics[width=1\textwidth, height=0.2\textwidth]{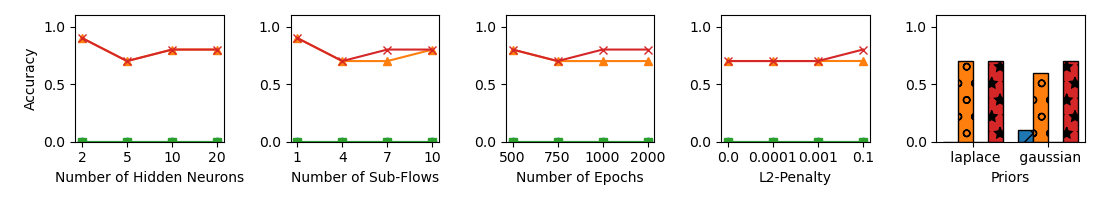}
\caption{$N=5000$}
\end{subfigure}
\hfill
\centering
\begin{subfigure}{1.0\textwidth}
  \centering
  \includegraphics[width=1\textwidth, height=0.05\textwidth]{images/legends.png}
\end{subfigure}
\caption{\textbf{Accuracy} over 10 datasets: \textbf{LSNM-sigmoid-sigmoid} and $\mathbf{exponential(1)}$ noise. 
}
\label{fig:experiments_simulated_accuracy-LSNM-sigmoid-sigmoid-exp}
\end{figure*}

\begin{figure*}[h]
\centering
\begin{subfigure}{\textwidth}
\centering
\includegraphics[width=1\textwidth, height=0.2\textwidth]{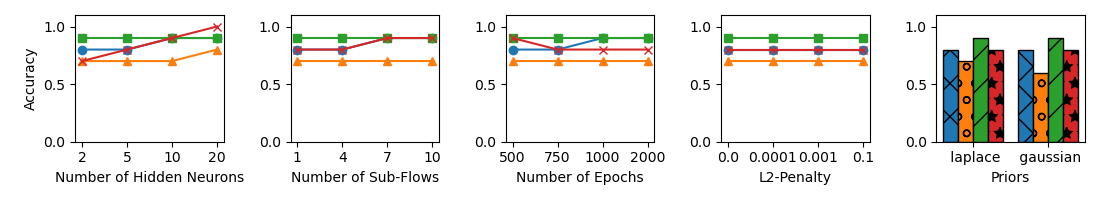}
\caption{$N=500$}
\end{subfigure}
\hfill
\centering
\begin{subfigure}{\textwidth}
\centering
\includegraphics[width=1\textwidth, height=0.2\textwidth]{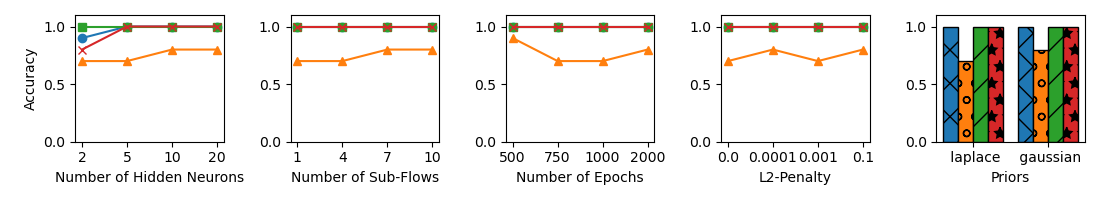}
\caption{$N=5000$}
\end{subfigure}
\hfill
\centering
\begin{subfigure}{1.0\textwidth}
  \centering
  \includegraphics[width=1\textwidth, height=0.05\textwidth]{images/legends.png}
\end{subfigure}
\caption{\textbf{Accuracy} over 10 datasets: \textbf{LSNM-tanh-exp-cosine} and $\mathbf{Gaussian(0,1)}$ noise. 
}
\label{fig:experiments_simulated_accuracy-LSNM-tanh-exp-cosine-gaussian}
\end{figure*}

\begin{figure*}[h]
\centering
\begin{subfigure}{\textwidth}
\centering
\includegraphics[width=1\textwidth, height=0.2\textwidth]{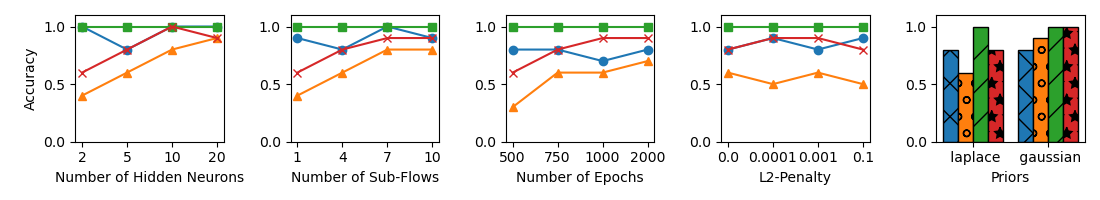}
\caption{$N=500$}
\end{subfigure}
\hfill
\centering
\begin{subfigure}{\textwidth}
\centering
\includegraphics[width=1\textwidth, height=0.2\textwidth]{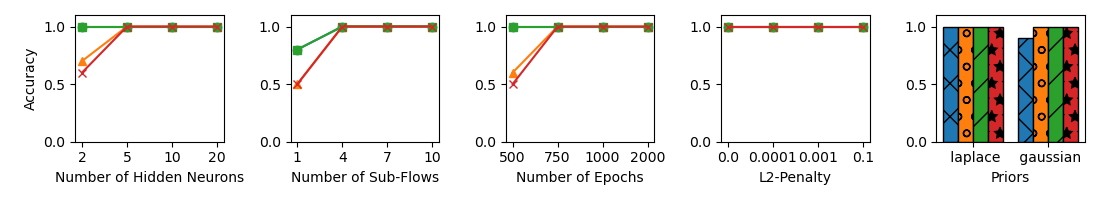}
\caption{$N=5000$}
\end{subfigure}
\hfill
\centering
\begin{subfigure}{1.0\textwidth}
  \centering
  \includegraphics[width=1\textwidth, height=0.05\textwidth]{images/legends.png}
\end{subfigure}
\caption{\textbf{Accuracy} over 10 datasets: \textbf{LSNM-tanh-exp-cosine} and $\mathbf{Laplace(0,1)}$ noise. 
}
\label{fig:experiments_simulated_accuracy-LSNM-tanh-exp-cosine-laplace}
\end{figure*}

\begin{figure*}[h]
\centering
\begin{subfigure}{\textwidth}
\centering
\includegraphics[width=1\textwidth, height=0.2\textwidth]{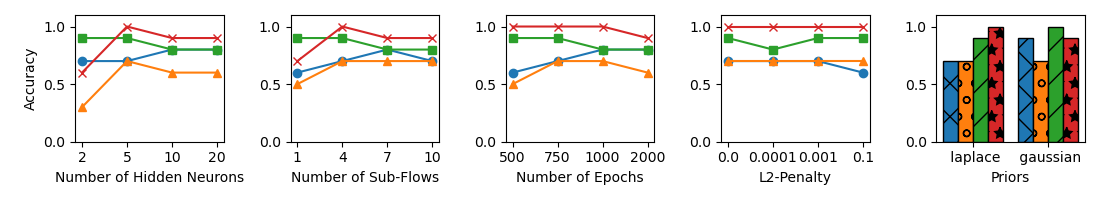}
\caption{$N=500$}
\end{subfigure}
\hfill
\centering
\begin{subfigure}{\textwidth}
\centering
\includegraphics[width=1\textwidth, height=0.2\textwidth]{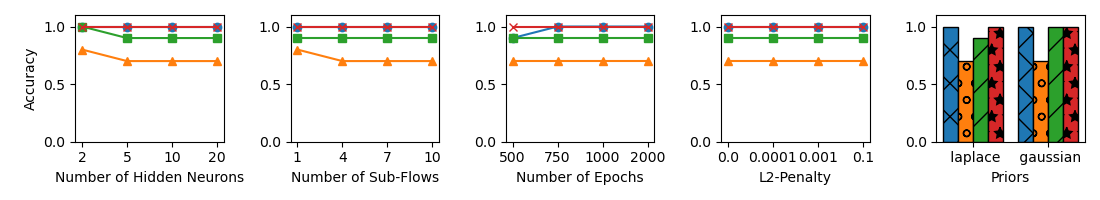}
\caption{$N=5000$}
\end{subfigure}
\hfill
\centering
\begin{subfigure}{1.0\textwidth}
\centering
\includegraphics[width=1\textwidth, height=0.05\textwidth]{images/legends.png}
\end{subfigure}
\caption{\textbf{Accuracy} over 10 datasets: \textbf{LSNM-sine-tanh} and $\mathbf{Gaussian(0,1)}$ noise. 
}
\label{fig:experiments_simulated_accuracy-LSNM-sine-tanh-gaussian}
\end{figure*}

\begin{figure*}[h]
\centering
\begin{subfigure}{\textwidth}
\centering
\includegraphics[width=1\textwidth, height=0.2\textwidth]{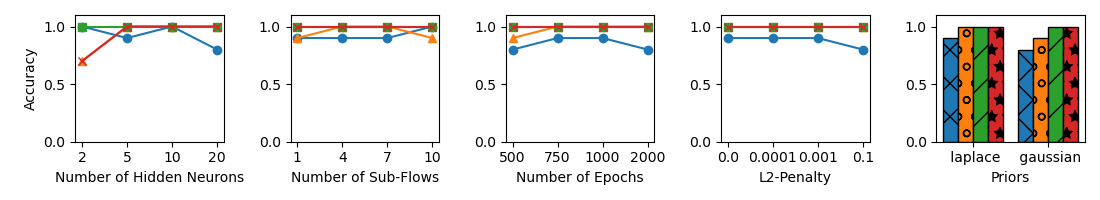}
\caption{$N=500$}
\end{subfigure}
\hfill
\centering
\begin{subfigure}{\textwidth}
\centering
\includegraphics[width=1\textwidth, height=0.2\textwidth]{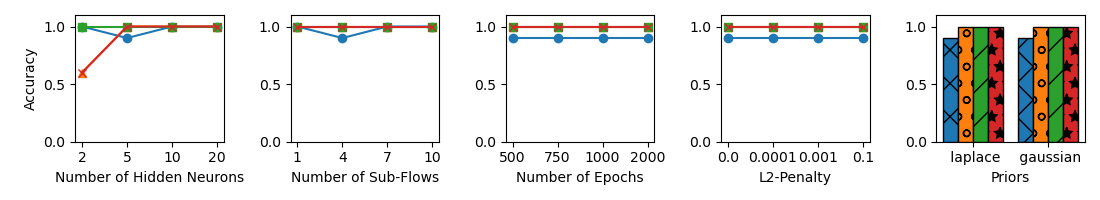}
\caption{$N=5000$}
\end{subfigure}
\hfill
\centering
\begin{subfigure}{1.0\textwidth}
\centering
\includegraphics[width=1\textwidth, height=0.05\textwidth]{images/legends.png}
\end{subfigure}
\caption{\textbf{Accuracy} over 10 datasets: \textbf{LSNM-sine-tanh} and $\mathbf{Laplace(0,1)}$ noise. 
}
\label{fig:experiments_simulated_accuracy-LSNM-sine-tanh-laplace}
\end{figure*}

\begin{figure*}[h]
\centering
\begin{subfigure}{\textwidth}
\centering
\includegraphics[width=1\textwidth, height=0.2\textwidth]{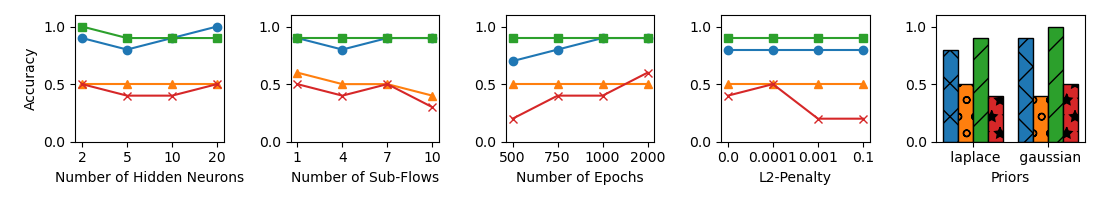}
\caption{$N=500$}
\end{subfigure}
\hfill
\centering
\begin{subfigure}{\textwidth}
\centering
\includegraphics[width=1\textwidth, height=0.2\textwidth]{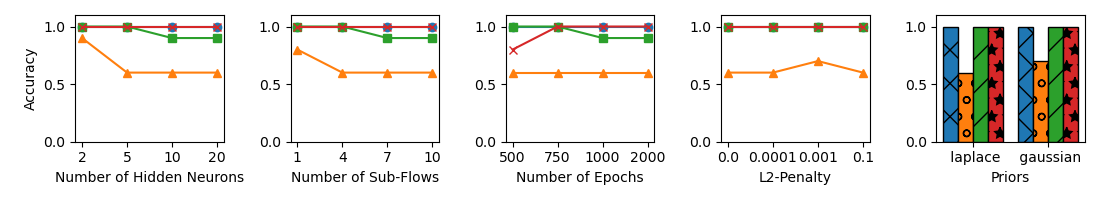}
\caption{$N=5000$}
\end{subfigure}
\hfill
\centering
\begin{subfigure}{1.0\textwidth}
  \centering
  \includegraphics[width=1\textwidth, height=0.05\textwidth]{images/legends.png}
\end{subfigure}
\caption{\textbf{Accuracy} over 10 datasets: \textbf{LSNM-sigmoid-sigmoid} and $\mathbf{Gaussian(0,1)}$ noise. 
}
\label{fig:experiments_simulated_accuracy-LSNM-sigmoid-sigmoid-gaussian}
\end{figure*}

\begin{figure*}[h]
\centering
\begin{subfigure}{\textwidth}
\centering
\includegraphics[width=1\textwidth, height=0.2\textwidth]{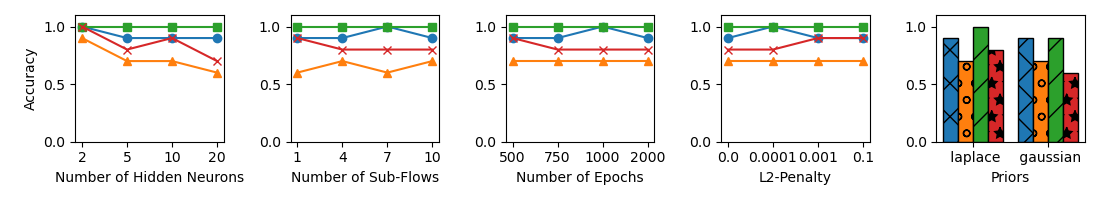}
\caption{$N=500$}
\end{subfigure}
\hfill
\centering
\begin{subfigure}{\textwidth}
\centering
\includegraphics[width=1\textwidth, height=0.2\textwidth]{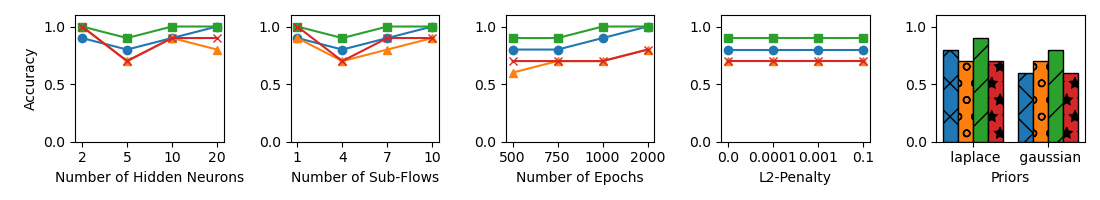}
\caption{$N=5000$}
\end{subfigure}
\hfill
\centering
\begin{subfigure}{1.0\textwidth}
  \centering
  \includegraphics[width=1\textwidth, height=0.05\textwidth]{images/legends.png}
\end{subfigure}
\caption{\textbf{Accuracy} over 10 datasets: \textbf{LSNM-sigmoid-sigmoid} and $\mathbf{Laplace(0,1)}$ noise. 
}
\label{fig:experiments_simulated_accuracy-LSNM-sigmoid-sigmoid-laplace}
\end{figure*}

\begin{figure*}[h]
\centering
\begin{subfigure}{\textwidth}
\centering
\includegraphics[width=1\textwidth, height=0.2\textwidth]{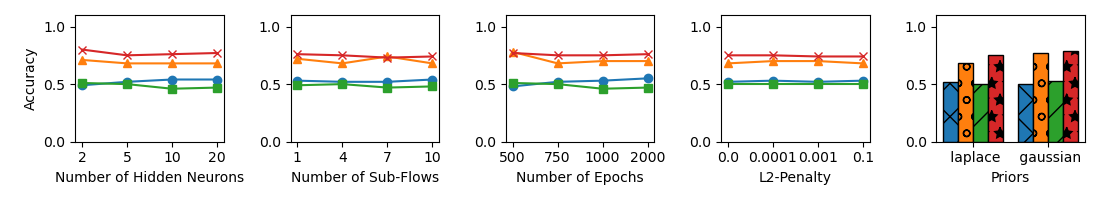}
\caption{\textbf{Accuracy} over 100 datasets from \textbf{SIM} sub-benchmark. 
}
\end{subfigure}
\hfill
\centering
\begin{subfigure}{\textwidth}
\centering
\includegraphics[width=1\textwidth, height=0.2\textwidth]{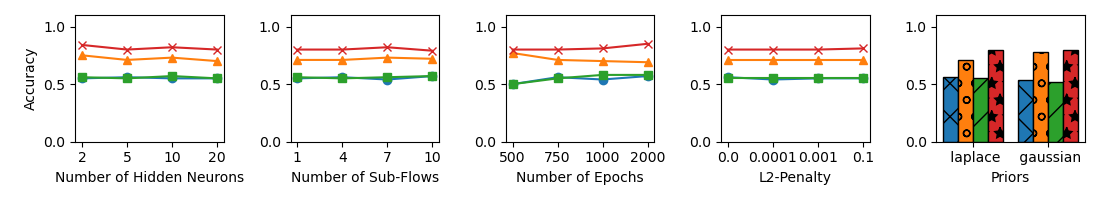}
\caption{\textbf{Accuracy} over 100 datasets from \textbf{SIM-c} sub-benchmark. 
}
\end{subfigure}
\hfill
\centering
\begin{subfigure}{\textwidth}
\centering
\includegraphics[width=1\textwidth, height=0.2\textwidth]{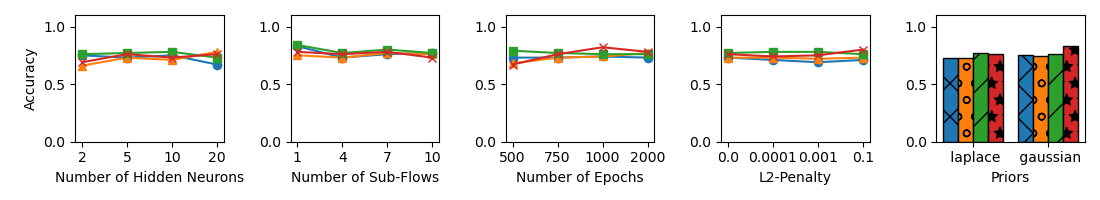}
\caption{\textbf{Accuracy} over 100 datasets from \textbf{SIM-ln} sub-benchmark. 
}
\end{subfigure}
\hfill
\centering
\begin{subfigure}{\textwidth}
\centering
\includegraphics[width=1\textwidth, height=0.2\textwidth]{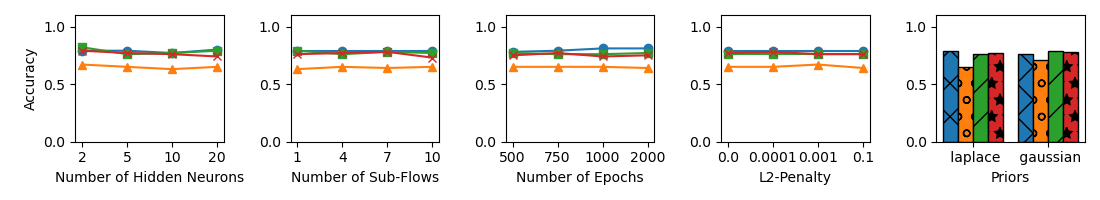}
\caption{\textbf{Accuracy} over 100 datasets from \textbf{SIM-G} sub-benchmark. 
}
\end{subfigure}
\hfill
\centering
\begin{subfigure}{1.0\textwidth}
\centering
\includegraphics[width=1\textwidth, height=0.05\textwidth]{images/legends.png}
\end{subfigure}
\caption{Results with \textbf{SIM benchmarks}. 
}
\label{fig:experiments_sim}
\end{figure*}

\end{document}